\documentclass[10pt]{article} 
\usepackage[preprint]{tmlr}

\usepackage{url}

\usepackage{amsmath}
\usepackage{hyperref}
\usepackage{amssymb}
\usepackage{mathtools}
\usepackage{amsthm}
\usepackage{multirow}
\usepackage{microtype}
\usepackage{graphicx}
\usepackage{subcaption}
\usepackage{booktabs} 
\usepackage{algorithm}
\usepackage{algpseudocode}
\usepackage{todonotes}

\usepackage{standalone}
\usepackage{tikz}
\usepackage{xcolor}
\usetikzlibrary{arrows.meta,decorations.markings}

\theoremstyle{plain}
\newtheorem{theorem}{Theorem}[section]
\newtheorem{proposition}[theorem]{Proposition}
\newtheorem{lemma}[theorem]{Lemma}
\newtheorem{corollary}[theorem]{Corollary}
\newtheorem{assum}{Assumption}
\theoremstyle{definition}

\theoremstyle{remark}
\newtheorem{remark}[theorem]{Remark}

\DeclareMathOperator{\sign}{sign}
\DeclareMathOperator*{\argmin}{arg\,min}
\DeclareMathOperator{\prox}{prox}
\DeclareMathOperator{\id}{id}
\newcommand{\real}{\mathbb{R}}
\newcommand{\IE}{\mathbb{E}}

\newcommand\loss{\mathcal L}
\newcommand\reg{J}

\title{Sparse Training of Neural Networks based on Multilevel Mirror Descent}

\author{\name Yannick Lunk \email yannick.lunk@uni-wuerzburg.de \\
      \addr Institute of Mathematics\\University of Würzburg
      \AND
      \name Sebastian J. Scott \email sebastian.scott@uni-wuerzburg.de \\
      \addr Institute of Mathematics\\University of Würzburg
      \AND
      \name Leon Bungert \email leon.bungert@uni-wuerzburg.de\\
      \addr Institute of Mathematics, CAIDAS\\University of Würzburg}

\def\month{MM}  
\def\year{YYYY} 
\def\openreview{\url{https://openreview.net/forum?id=XXXX}} 

\begin{document}

\maketitle

\begin{abstract}
	We introduce a dynamic sparse training algorithm based on linearized Bregman iterations / mirror descent that exploits the naturally incurred sparsity by alternating between periods of static and dynamic sparsity pattern updates.
	The key idea is to combine sparsity-inducing Bregman iterations with adaptive freezing of the network structure to enable efficient exploration of the sparse parameter space while maintaining sparsity.
	We provide convergence guaranties by embedding our method in a multilevel optimization framework.
	Furthermore, we empirically show that our algorithm can produce highly sparse and accurate models on standard benchmarks.
	We also show that the theoretical number of FLOPs compared to SGD training can be reduced from 38\% for standard Bregman iterations to 6\% for our method while maintaining test accuracy.
    We additionally show a training time reduction by about 50\%, when using a sparsity-aware CPU implementation of our method.
\end{abstract}

\section{Introduction}
Deep neural networks have produced astonishing results in various areas such as computer vision and natural language processing \citep{noor2025} but demand significant memory and specialized hardware, contributing to growing concerns about their environmental impact, particularly the carbon footprint of training and inference \citep{dhar2020}. 

In response, researchers have explored techniques for developing compact and efficient models, such as sparse neural networks, wherein many neuron connections are absent.
Within this context, the Lottery Ticket Hypothesis \citep{frankle2018} plays a central role, suggesting that every dense network contains a sparse subnetwork that, when trained independently, can achieve comparable accuracy.

There are two main approaches to obtain sparse neural networks: pruning and sparse training.
In pruning, a dense model is first trained and unwanted connections are removed afterward.
Because this usually causes a drop in performance, the pruned model is often retrained with the sparsity pattern fixed.
By contrast, sparse training incorporates mechanisms that encourage sparsity already during training.
Pruning methods themselves vary widely, depending on how weights are selected for removal and at what stage of training pruning is applied.
A key distinction is between unstructured pruning, which eliminates individual weights, and structured pruning, which removes entire components such as neurons or filters, see \citet{hoefler2021} for an extensive overview.

In addition to pruning-based methods, sparsity can also be encouraged by including explicit regularization terms in the loss function.
A common example is Lasso regularization, which uses the $\ell_1$-norm as a penalty in the objective function \citep{tibshirani1996}.
The resulting optimization problem can be solved using algorithms such as Proximal Gradient Descent \citep{rosaco2020, mosci2010}.
A conceptually different approach is to enforce sparsity through \emph{implicit regularization} which can be achieved using mirror descent \citep{nemirovskij1983problem}.
Here we would like to highlight a series of works \citep{huang2016split,azizan2021stochastic,bungert2021neural,bungert2022,wang2023lifted,heeringa2023learningsparse} that utilize mirror descent or linearized Bregman iterations to induce sparsity in neural networks without explicit regularization.

Linearized Bregman iterations are equivalent to mirror descent, but they are typically formulated and analyzed with less regularity assumptions on the mirror map (e.g., compared to \citet{azizan2021stochastic}), thereby lending themselves toward non-smooth sparsity-promoting mirror maps.
This was exploited in \citep{bungert2022} to devise the LinBreg algorithm which essentially is a stochastic mirror descent algorithm applied with a non-smooth mirror map.

Within the context of sparse training, we also highlight the relevance of genetic evolutionary algorithms, particularly Sparse Evolutionary Training (SET) \citep{mocanu2018scalabletraining}.
SET applies a dynamic sparse training approach by performing pruning at the end of each epoch—removing a fraction of the active connections—and regrowing an equal number of new connections at random positions.
This iterative process maintains a fixed sparsity level.
Beyond sparse training, evolutionary algorithms have also been successfully applied to Neural Architecture Search \citep{miikkulainen2024evolvingdeep, elsken2019efficient}; however, they typically lack rigorous convergence guarantees.

In this paper, we propose a training algorithm based on linearized Bregman iterations, designed to promote sparsity in neural networks \emph{during training}. 
A key benefit of Bregman iterations over regularization methods like the Lasso is that for the former the number of non-zero parameters of the trained networks usually increases monotonically. 
Hence, algorithms like LinBreg lend themselves to exploiting sparsity early on in the training process.
In our method, we periodically freeze the network’s structure: that is, we restrict updates to parameters that are non-zero in the current iteration.
This approach offers two key benefits. First, stricter sparsity is enforced than through LinBreg alone, as the number of non-zero parameters cannot increase during the frozen phases. 
Second, during the frozen phases only derivatives corresponding to active parameters are required which provides scope for significant computational savings during training.

We embed the resulting algorithm within a multilevel optimization framework \citep{nash2000}, which enables us to leverage existing convergence theory. 
In particular, we adapt the convergence analysis of Multilevel Bregman Proximal Gradient Descent \citep{elshiaty2025} to our sparse training setup.
We find that our method can outperform the standard LinBreg algorithm by yielding models that are sparser while achieving comparable or even superior performance for image classification.

The remainder of this paper is structured as follows. 
First, we explain our algorithm and present how it can be interpreted as a multilevel optimization method. 
We proceed by proving a sublinear convergence result for the algorithm. 
Finally, we perform numerical experiments comparing our method to other methods that aim at achieving sparse but performative models.

We make the following contributions: 
\begin{itemize}
	\item We propose a sparse training algorithm based on linearized Bregman iterations that periodically freezes the network structure during optimization.
	
	\item We interpret the resulting method as a multilevel optimization scheme and prove convergence under suitable assumptions in a mixed gradient regime, using exact gradients during full updates and stochastic gradients during frozen phases.
	
	\item We evaluate the proposed algorithm on benchmark image classification tasks and show that it produces sparse models with competitive performance. We further demonstrate reduced CPU training time when using sparsity-aware implementations.
    
    \item We show that the proposed multilevel strategy improves upon the standard LinBreg algorithm, yielding sparser models while maintaining or even improving accuracy.
\end{itemize}

\section{Related Work}

\paragraph{Bregman Iterations / Mirror descent}

Bregman iterations were originally introduced by \citet{osher2005iterative} as iterative reconstruction method for imaging inverse problems to overcome the bias of regularization methods like total variation denoising \citep{rudin1992nonlinear}.
Later they were applied to compressed sensing \citep{yin2008bregman} and nonlinear inverse problems \citep{bachmayr2009iterative,benning2021}.
In the context of machine learning, they were used for sparsity of neural network representations \citep{bungert2021neural,bungert2022,heeringa2023learningsparse,heeringa2025sparsifying}  as well as for training networks with non-smooth activations of proximal type \citep{wang2023lifted}.
While Bregman iterations in their original form generalize the implicit Euler method, so-called linearized Bregman iterations are closely related to mirror descent \citep{nemirovskij1983problem} or more precisely to lazy mirror descent / Nestorov's dual averaging \citep{nesterov2009primal}.
The method is also referred to as Bregman proximal gradient descent and a stochastic gradient version of it was coined LinBreg by \citet{bungert2022}.
It must be emphasized that, just like different communities use different terminologies, they also developed different mathematical tools to analyze the convergence behavior. 
In particular, the inverse problems community put a lot of effort into analyzing Bregman iterations with sparsity-promoting regularizers which translates to mirror descent with non-smooth mirror maps. 
This will also be our approach in this paper.

\paragraph{Multilevel Optimization}
Multilevel optimization methods originate from multigrid techniques, which were initially developed to solve differential equations efficiently. The MGOPT algorithm \citep{nash2000} was among the first to adapt these ideas to optimization problems. More recently, \citet{elshiaty2025} extended this framework by incorporating linearized Bregman iterations. Their work provides convergence guarantees via a Polyak–Łojasiewicz-type inequality for the ML BPGD algorithm and demonstrates its effectiveness in image reconstruction tasks.
\citet{hovhannisyan2016} provide a connection between multilevel optimization and mirror descent, noting that the latter is equivalent to linearized Bregman iterations. They further incorporate acceleration techniques, prove convergence of their algorithm, and illustrate its performance through numerical experiments on face recognition.
Multilevel strategies have also been explored in deep learning, particularly for training residual neural networks (ResNets). \citet{kopanicakova2023} introduce a hierarchy based on networks of varying depth and width, leveraging the fact that smaller models are cheaper to train. Similar approaches have been made, e.g., by \citet{gaedke2021}.
Other approaches embed the multilevel hierarchy into the objective function rather than the model architecture. For example, \citet{braglia2020} use varying batch sizes to compute the loss, effectively inducing a multiscale structure during training.

\paragraph{Sparse neural networks}
Within the context of sparse training, several methods have been proposed to obtain sparse yet performant models without the need to train a dense network beforehand.
One approach is to keep a static sparsity pattern throughout training which is identified in advance based upon, say, the magnitude of the initialized weights' gradient of the loss \citep{lee2018snip} or second order derivative information \citep{singhWoodFisherEfficientSecondOrder2020}.
Another way is to allow for the sparsity pattern to dynamically change throughout training by including mechanisms that remove and regrow connections according to certain criterion.
For example, DEEP-R \citep{bellec2018deep} uses a random regrowth strategy whereas RigL \citep{evci2020rigl} activates connections according to gradient magnitude. 
Other criterion may include: accumulated gradient information \citep{liu2021sparse}; momentum \citep{dettmers2019snfs}; the difference to the weights of a previous training step \citep{liNeurRevTrainBetter2023}. 
DFBST \citep{pote2023dfbst} applies binary masks during both the forward and backward passes in which the forward pass mask sparsifies the weights, while the backward pass mask restricts gradient updates, enabling sparse training.
Top-KAST \citep{jayakumar2020top} maintains a fixed sparse model by using only the top-$D$ weights for the forward pass, but employs a slightly larger set of parameters during the backward pass to update and explore potential alternative masks.
\citet{ji2024advancing} decouple the optimization of masks and weights, treating the mask as an explicit optimization variable.
Such a decoupling can be viewed in a time-dependent mirror flow framework \citep{jacobsMaskMirrorImplicit2024} in which convergence and optimality results can be stated.
The AC/DC method \citep{peste2021acdc} uses alternating phases of full and sparse support optimization to produce accurate sparse–dense model pairs, while providing convergence guarantees of the underlying iterative hard thresholding algorithm for non-convex training under a Polyak--Łojasiewicz-type inequality.
\citet{yin2023dynamic} propose to obtain channel-level sparse networks by performing channel pruning during training, where channels are removed based on the mean magnitude of their weights.
Techniques that further improve existing sparse training algorithms include the application of the soft thresholding operator with learnable thresholds to layer weights \citep{kusupatiSoftThresholdWeight2020} and the performance of weight updates that minimize in a local neighborhood \citep{jiSingleStepSharpnessAwareMinimization2024}. 

\section{Method}\label{method} 
A typical training problem to find optimal network parameters $\theta\in\real^d$ consists of solving 
\begin{align}\label{eq:min}
	\min_{\theta\in\real^d} \loss(\theta),
\end{align}
where $\loss\colon\real^d \to \real$ is a differentiable and non-negative loss function, for example, the empirical loss.
One way to enforce constraints or encourage sparsity in solutions is to specify a proper, convex, and lower semicontinuous function $\reg\colon\real^d\to(-\infty,\infty],$ such as the $\ell_1$-norm or indicator function of a closed convex set, and consider a minimization of $\loss+\reg.$
An alternative approach, which we discuss in the following, is to employ (Linearized) Bregman iterations, in which the loss $\loss$ is directly minimized while implicitly minimizing $\reg.$
\subsection{Linearized Bregman iterations}
Solving \eqref{eq:min} with Bregman iterations requires iterating
\begin{align}\label{eq:bregits}
	\begin{split} 
		\theta^{(k+1)} &= \argmin_{\theta\in\real^d} D^{p^{(k)}}_\reg (\theta, \theta^{(k)}) + \tau^{(k)} \loss(\theta),
		\\
		p^{(k+1)}&=p^{(k)} - \tau^{(k)} \nabla \loss (\theta^{(k+1)}) \in\partial\reg(\theta^{(k+1)}),
	\end{split}    
\end{align}
where the so-called Bregman divergence (associated to $\reg$) is defined as:
\begin{align}\label{eq:bregdivg}
	D_\reg^p(\tilde{\theta},\theta) \coloneq \reg(\tilde{\theta})-\reg(\theta)-\langle p,\tilde{\theta}-\theta\rangle.
\end{align}
Here $\theta\in\mathrm{dom}(\partial \reg)$, $p\in\partial \reg(\theta)$ is a subgradient, and $\tau^{(k)}>0$ is a sequence of step sizes.
The Bregman divergence \eqref{eq:bregdivg} can be interpreted as the difference between $\reg$ and its linearization around $\theta$ and 
satisfies properties such as $D_\reg^p(\theta,\theta) = 0$ and, due to convexity of $\reg,$ $D_\reg^p(\tilde{\theta},\theta) \geq 0.$ 

Since the optimization problem \eqref{eq:bregits} is almost as hard to solve as \eqref{eq:min}, one can replace $\loss$ in \eqref{eq:bregits} with its first order approximation $\loss(\theta^{(k)}) + \langle\nabla\loss(\theta^{(k)}),\theta-\theta^{(k)}\rangle$ and $\reg$ with the strongly convex elastic-net regularizer
\begin{align*}
	\reg_\delta(\theta) \coloneq \frac{1}{2\delta}\|\theta\|^2 + \reg(\theta),\quad\delta\in(0,\infty).
\end{align*}
Doing so yields Linearized Bregman iterations 
\begin{subequations}
	\label{eq:linbreg}
	\begin{align} 
		v^{(k+1)} &= v^{(k)} - \tau\nabla\loss(\theta^{(k)}), 
		\label{eq:linbregsubgrad}\\
		\theta^{(k+1)} &= \prox_{\delta \reg}(\delta v^{(k+1)}),\label{eq:linbregupdate}
	\end{align}
\end{subequations}
where $v^{(0)}\in\partial \reg_\delta(\theta^{(0)})$ for some initial point $\theta^{(0)}$
and
\begin{align}\label{eq:proxop}
	\prox_{\delta \reg}({\theta}) \coloneq \argmin_{\tilde\theta\in\real^d} \frac{1}{2\delta}\|\tilde\theta-{\theta}\|^2 + \reg(\tilde\theta)
\end{align}
is the proximal operator.
To make \eqref{eq:linbreg} feasible for the high-dimensional and non-convex problems arising in machine learning, \citet{bungert2022} in their LinBreg method replace the gradient $\nabla\loss(\theta^{(k)})$ in \eqref{eq:linbregsubgrad} with an unbiased stochastic estimator and provide convergence analysis.

Note that if $\reg\equiv 0,$ the proximal operator is the identity map and, additionally taking $\delta = 1,$ \eqref{eq:linbreg} recovers Gradient Descent.
More generally, \eqref{eq:linbreg} coincides with mirror descent \citep{beck2003} applied to the distance generating function $\reg_\delta$.
To see this, notice that $\nabla\reg_\delta^* = \prox_{\delta \reg}(\delta\cdot)$, where $\reg_\delta^*$ denotes the convex conjugate \citep{bauschke2011} of $\reg_\delta$.

Although evaluating the proximal operator in \eqref{eq:linbregupdate} is phrased as a minimization problem, particular choices of $\reg$ admit closed forms, e.g., $\reg = \lambda\|\cdot\|_1$ yields the soft shrinkage operator
\begin{align}\label{eq:softthreshold}
	\prox_{\delta \reg}(\delta v) = \delta \sign(v)\max(|v|-\lambda,0)
\end{align}
applied componentwise. 
This particular choice also demonstrates how Bregman iterations can lead to sparse networks.
In \eqref{eq:softthreshold}, only parameters whose corresponding dual variable $v$ exceeds the threshold $\lambda$ in absolute value will be non-zero.
Thus, we can view \eqref{eq:linbregupdate} as a pruning step inherent to the optimizer, where the pruning criterion employs information associated with the regularizer $\reg,$ and not just the magnitude of the parameter or of the gradient of the training loss $\loss.$

In practice, $\theta$ can represent the parameters of several network layers, and as such one may wish to employ a different regularizer for each layer.
To represent this, we split the parameter vector $\theta$ into $G$ groups via $\theta = (\theta_{(1)},\dots,\theta_{(G)})$, where each group $\theta_{(g)}\in\real^{d_g}$ contains $d_g$ scalar parameters.
Furthermore, we assume the regularizer $\reg$ acts on these groups separately, taking the form
\begin{align}\label{eq:Jgroups}
	\reg(\theta) = \sum_{g=1}^G \reg_g(\theta_{(g)}),
\end{align}
where each $\reg_g\colon\real^{d_g}\to(-\infty,\infty]$ is proper, convex, and lower-semicontinuous. 
We see that \eqref{eq:Jgroups} includes the standard $\ell_1$-norm but also the group $\ell_{1,2}$-norm \citep{scardapane2017group}, given by
\begin{align}\label{eq:groupl12norm}
	\reg(\theta) \coloneq \sum_{g=1}^G \sqrt{n_g} \|\theta_{(g)}\|_2,
\end{align}
where $n_g$ denotes the number of parameters in the group.
While the $\ell_1$-norm encourages individual parameter sparsity, the group $\ell_{1,2}$-norm can be used to force entire structures, e.g., convolutional kernels, to vanish.

\subsection{A Multilevel Approach}
The main idea of our algorithm is to exploit the induced sparsity of the iterations \eqref{eq:linbreg} by only performing a full update with this rule every $m$ iterations and otherwise only updating the non-zero parameters.
Consequently, most training steps of our proposed method will only require gradients with respect to the active parameters which, depending on the induced sparsity pattern, provides scope for significant computational savings during training.
We show that we can interpret this idea as a multilevel optimization scheme, allowing us to adapt convergence proofs from the multilevel optimization literature. 

More precisely, we consider a two-level framework consisting of the actual minimization problem \eqref{eq:min} and a coarse problem with fewer variables. 
To map between these levels, restriction and prolongation operators are used. 
The restriction operator at iteration $k$, denoted $R^{(k)}\colon\real^d\to\real^{D_k}$ with $D_k < d,$ is a linear map that selects a subset of parameters for optimization.
Consequently, the rows of the matrix $R^{(k)}$ are standard unit vectors and $\theta_i$ is selected by $R^{(k)}$ if and only if one of the rows of the matrix is the $i$-th standard unit vector. 
We consider the case where entire groups are selected by the restriction operator, so if one component of a group $\theta_{(g)}$ is selected, then all the other components must be selected as well. 
For example, if any parameter of a single convolutional kernel is non-zero, then we allow for the entire kernel to be updated - not just the non-zero elements.
To formalize this, given the number of selected groups $G^{(k)}$, we can define an injective function $r^{(k)}\colon\lbrace 1,2,\dots,G^{(k)}\rbrace\to\lbrace1,2,\dots,G\rbrace$ such that 
\begin{align*}
	R^{(k)}\theta = (\theta_{(r^{(k)}(1))},\dots,\theta_{(r^{(k)}(G^{(k)}))}), \quad \theta\in\real^d.
\end{align*}
For a given coarse variable $\hat{\theta}$, this function determines where its parameter groups belong on the fine level.

The corresponding prolongation operator $P^{(k)}\colon\real^{D_k}\to\real^d$ maps from the coarse to the fine level and is defined as the transpose of the restriction $P^{(k)} = (R^{(k)})^T$. This means that the prolongation operator maps the groups that were selected by the restriction back and simply completes the parameter vector $\theta$ by setting zero for every parameter group that was not selected by the restriction. 
Using the previously defined function $r^{(k)}$, we can explicitly write the prolongation of a coarse variable $\hat{\theta}\in\real^{D_k}$ as 
\begin{align*}
	(P^{(k)}\hat{\theta})_{(g)} = 
	\begin{cases}
		\hat{\theta}_{(i)}, &\textrm{if }g = r^{(k)}(i), \\
		0, &\textrm{otherwise}.
	\end{cases}
\end{align*}
While the case where the restriction operator chooses the groups that are non-zero at iteration~$k$ is the most interesting for us, our analysis works for arbitrary selections of parameter groups. 

In our algorithm, during each iteration $k$, the restriction operator $R^{(k)}$ maps the current iterate $\theta^{(k)}$ and the corresponding subgradient $v^{(k)}$ to the coarse level. 
We denote these restrictions by $\hat{\theta}^{0,k} \coloneq R^{(k)}\theta^{(k)}$ and $\hat{v}^{0,k} \coloneq R^{(k)}v^{(k)}$.
The algorithm then performs $m$ LinBreg steps to minimize the coarse loss 
\begin{align}\label{eq:coarseobjective}
	\hat{\loss}^{(k)}(\hat{\theta}) \coloneq \loss(\theta^{(k)} + P^{(k)}(\hat{\theta} - \hat{\theta}^{0,k}))
\end{align}
using $\hat{\reg}_\delta^{(k)}(\hat{\theta})\coloneq \frac{1}{2\delta}\|\hat{\theta}\|^2 + \hat{\reg}^{(k)}(\hat{\theta})$ as the regularizer, where 
\begin{align}\label{eq:coarsereg}
	\hat{\reg}^{(k)}(\hat{\theta}) \coloneq\sum_{i=1}^{G^{(k)}} \reg_{r^{(k)}(i)}(\hat{\theta}_{(i)}).
\end{align}
The result of these coarse iteration steps $\hat{\theta}^{m,k}$ with corresponding subgradient $\hat{v}^{m,k}$ is then mapped back to the fine level via 
\begin{align*}
	\tilde{\theta}^{(k+1)}&\coloneq \theta^{(k)} + P^{(k)}(\hat{\theta}^{m,k} - \hat{\theta}^{0,k}), \\
	\tilde{v}^{(k+1)}&\coloneq v^{(k)}+P^{(k)}(\hat{v}^{m,k}-\hat{v}^{0,k}),
\end{align*}
before performing a LinBreg step on the fine level to obtain $\theta^{(k+1)}$ and $v^{(k+1)}$.
The full Multilevel LinBreg algorithm is presented in Algorithm~\ref{alg:MLLinBreg} and visualized in Figure~\ref{fig:mllinbregvis}.
\begin{figure*}[t]
    \centering
    \includegraphics[width=\textwidth]{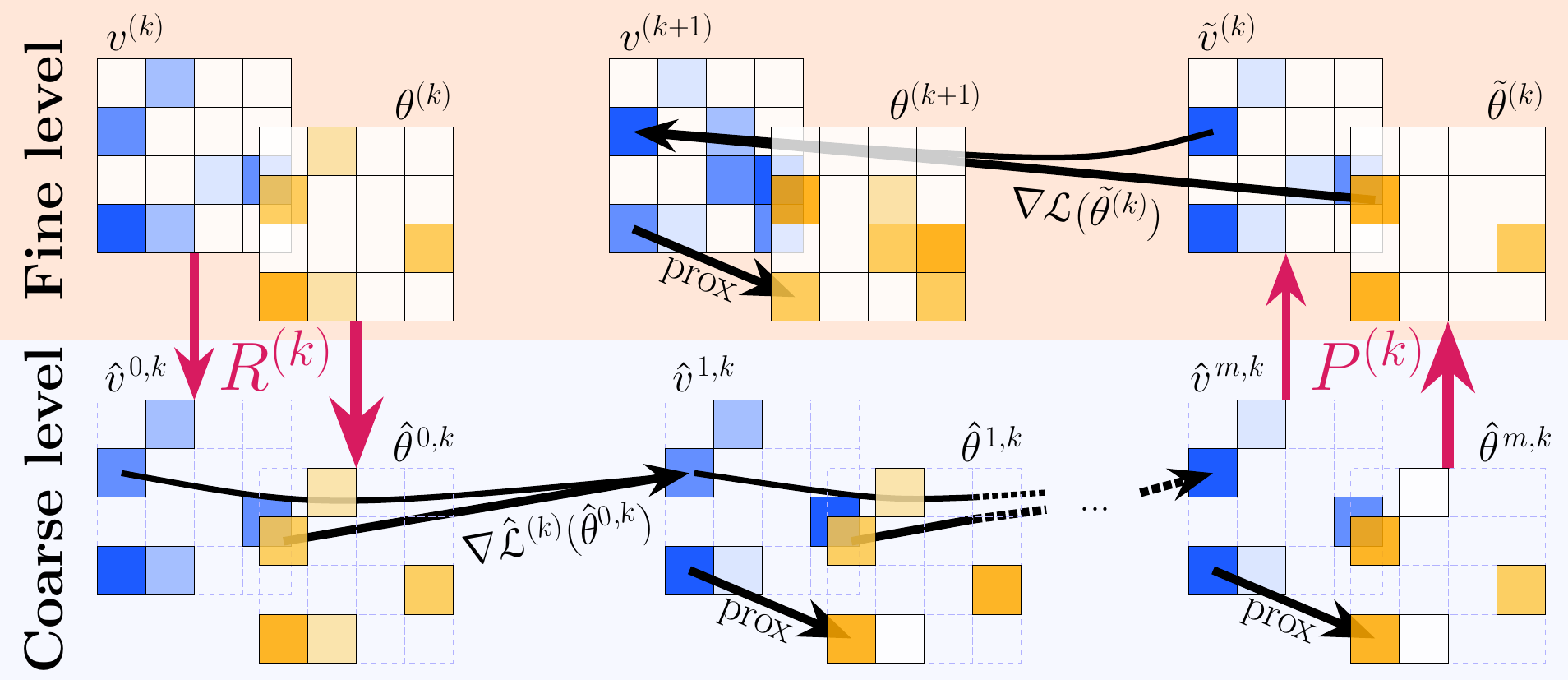}
    \caption{Visualization of the Multilevel LinBreg algorithm. Only parameters selected by the restriction operator $R^{(k)}$ will be updated by LinBreg for $m$ iterations on the coarse level. 
    Then, we use the prolongation operator $P^{(k)}$ to cast the updated restricted parameters back to the fine level, i.e., the full parameter space, after which we perform one LinBreg update.
    }
    \label{fig:mllinbregvis}
\end{figure*}
\begin{algorithm}[htb]
	\caption{Multilevel LinBreg}\label{alg:MLLinBreg}
	\begin{algorithmic}
		\State \textbf{Input:} Initial guess $\theta^{(0)}\in\real^d, v^{(0)}\in\partial \reg_\delta(\theta^{(0)})$
		\For{$k \in \mathbb{N}$}
		\State $\hat{\theta}^{0,k} = R^{(k)} \theta^{(k)}$
		\State $\hat{v}^{0,k} = R^{(k)}  v^{(k)}$
		\For {$i = 1,\dots,m$}
		\State $\hat g^{(k)}\gets\text{unbiased estimator of }\nabla\hat{\loss}^{(k)} (\hat{\theta}^{i-1,k})$
		\State $\hat{v}^{i,k} = \hat{v}^{i-1,k} - \hat{\tau}^{i-1,k} \hat g^{(k)}$
		\State $\hat{\theta}^{i,k} = \prox_{\delta \hat{\reg}^{(k)}}(\delta \hat{v}^{i,k})$
		\EndFor
		\State $\tilde{v}^{(k+1)} = v^{(k)} + P^{(k)} (\hat{v}^{m,k}-\hat{v}^{0,k})$ 
		\State $\tilde{\theta}^{(k+1)} = \theta^{(k)} + P^{(k)} (\hat{\theta}^{m,k}-\hat{\theta}^{0,k})$
		\State $g^{(k+1)}\gets\text{unbiased estimator of }\nabla\loss(\tilde{\theta}^{(k+1)})$
		\State $v^{(k+1)} = \tilde{v}^{(k+1)} - \tau g^{(k+1)}$
		\State $\theta^{(k+1)} = \prox_{\delta \reg}(\delta v^{(k+1)})$
		\EndFor
	\end{algorithmic}
\end{algorithm}

If the restriction operator at iteration $k$ only selects the parameter groups that are non-zero for $\theta^{(k)}$, the coarse loss \eqref{eq:coarseobjective} and the prolongation to the fine level simplify to
\begin{align*}
	\hat{\loss}^{(k)}(\hat{\theta})
	=
	\loss(P^{(k)}\hat{\theta}),
	\qquad
	\tilde{\theta}^{(k+1)} = P^{(k)}\hat{\theta}^{m,k}.
\end{align*}
For the subgradients, however, such simplification is not available.

Typically in multilevel optimization, a linear correction term is added to the coarse objective function to ensure first-order coherence, i.e., critical points on the fine level are transferred to ones on the coarse level.
In our case, due to the special structure of $\hat{\loss}^{(k)}$ no correction term is necessary, since
\begin{align*}
	\nabla\hat{\loss}^{(k)}(\hat{\theta}^{0,k}) = R^{(k)}\nabla\loss(\theta^{(k)}).
\end{align*}
Hence, if $\theta^{(k)}$ is a critical point of $\loss$, then $\hat{\theta}^{0,k}\coloneq R^{(k)}\theta^{(k)}$ is a critical point of $\hat{\loss}^{(k)}$. 

Typically, in multilevel optimization, a criterion is used to determine whether the coarse model should be employed, with the aim of avoiding coarse updates when they are not expected to be beneficial.
Such criteria are usually based on the norm of the restricted gradient; see, e.g., \cite{wen2009, vanmaele2025, hovhannisyan2016}.
However, for simplicity, we do not use such a criterion, but instead perform $m$ coarse-level steps after every fine-level iteration.
 
Since we work with subgradients, we carefully investigate the transfer between the different levels. In particular, we show that restricting the subgradients from the fine level yields subgradients of the coarse regularizer and mapping the coarse subgradients back to the fine level as described also leads to subgradients on the fine level due to the structure of $\reg$ from \eqref{eq:Jgroups}. 
The proofs of the following propositions can be found in Appendix~\ref{sec:proofs}.

\begin{proposition}\label{subgradients1}
	If $v^{(k)} \in\partial \reg_\delta(\theta^{(k)})$, then defining $\hat{v}^{0,k} \coloneq R^{(k)}v^{(k)}$ and $\hat{\theta}^{0,k}\coloneq R^{(k)}\theta^{(k)}$ we have $\hat{v}^{0,k}\in\partial \hat{\reg}_\delta^{(k)}(\hat{\theta}^{0,k})$.
\end{proposition}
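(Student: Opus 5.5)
The plan is to exploit the group-separable structure of $\reg_\delta$. Both the quadratic term $\frac{1}{2\delta}\|\theta\|^2$ and $\reg$ decompose over the groups $\theta_{(1)},\dots,\theta_{(G)}$. So I first write
\begin{align*}
	\reg_\delta(\theta) = \sum_{g=1}^G \reg_{\delta,g}(\theta_{(g)}), \qquad \reg_{\delta,g}(\theta_{(g)}) \coloneq \frac{1}{2\delta}\|\theta_{(g)}\|^2 + \reg_g(\theta_{(g)}),
\end{align*}
and similarly $\hat{\reg}^{(k)}_\delta(\hat\theta) = \sum_{i=1}^{G^{(k)}} \reg_{\delta,r^{(k)}(i)}(\hat\theta_{(i)})$, using \eqref{eq:coarsereg} and $\|\hat\theta\|^2=\sum_i\|\hat\theta_{(i)}\|^2$.

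The key step is the following characterization: for a separable proper convex function, $v\in\partial\reg_\delta(\theta)$ holds if and only if $v_{(g)}\in\partial \reg_{\delta,g}(\theta_{(g)})$ for every $g$. I will prove the forward direction, which is the only one needed, directly from the subgradient inequality. Fix a group $g$ and a test vector $w\in\real^{d_g}$. Choose $\tilde\theta$ to agree with $\theta$ in all groups except group $g$, where $\tilde\theta_{(g)}=w$. Inserting this into
\begin{align*}
	\reg_\delta(\tilde\theta)\ge \reg_\delta(\theta)+\langle v,\tilde\theta-\theta\rangle,
\end{align*}
all terms for groups $h\neq g$ cancel. The cancellation is legitimate because $\theta\in\mathrm{dom}(\partial\reg_\delta)\subseteq\mathrm{dom}(\reg_\delta)$, so every summand $\reg_{\delta,h}(\theta_{(h)})$ is finite. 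What remains is
\begin{align*}
	\reg_{\delta,g}(w)\ge\reg_{\delta,g}(\theta_{(g)})+\langle v_{(g)},w-\theta_{(g)}\rangle,
\end{align*}
which is exactly the statement $v_{(g)}\in\partial\reg_{\delta,g}(\theta_{(g)})$.

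Applying this with $\theta=\theta^{(k)}$ and $v=v^{(k)}$ gives $v^{(k)}_{(g)}\in\partial\reg_{\delta,g}(\theta^{(k)}_{(g)})$ for every group $g$. Since $R^{(k)}$ selects whole groups, $\hat v^{0,k}_{(i)}=v^{(k)}_{(r^{(k)}(i))}$ and $\hat\theta^{0,k}_{(i)}=\theta^{(k)}_{(r^{(k)}(i))}$.

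It remains to assemble the coarse subgradient inequality. For any $\hat w\in\real^{D_k}$, I sum the group inequalities over $i=1,\dots,G^{(k)}$ with $g=r^{(k)}(i)$ and $w=\hat w_{(i)}$. This yields
\begin{align*}
	\hat\reg^{(k)}_\delta(\hat w)\ge\hat\reg^{(k)}_\delta(\hat\theta^{0,k})+\langle\hat v^{0,k},\hat w-\hat\theta^{0,k}\rangle.
\end{align*}
Injectivity of $r^{(k)}$ ensures that each fine group appears at most once, so the sum is precisely the separable form of $\hat\reg^{(k)}_\delta$. The inner product also splits correctly over the groups.

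I do not expect a real obstacle here. The only delicate point is the finiteness argument that justifies cancelling the other groups' terms, since the functions $\reg_g$ may take the value $+\infty$.
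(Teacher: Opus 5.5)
Your proof is correct, but it takes a different route from the paper. The paper introduces the auxiliary function $\hat{\reg}(\hat\theta) \coloneq \reg_\delta(\theta^{(k)} + P^{(k)}(\hat\theta - \hat\theta^{0,k}))$, i.e.\ the fine regularizer composed with the affine prolongation map. It checks via \eqref{eq:Jgroups} that this function differs from $\hat{\reg}_\delta^{(k)}$ only by an additive constant. It then invokes the sum and chain rules of subdifferential calculus to obtain the set identity $\partial\hat{\reg}_\delta^{(k)}(\hat\theta^{0,k}) = R^{(k)}\partial\reg_\delta(\theta^{(k)})$, from which the claim is immediate.

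You bypass subdifferential calculus entirely. You split the subgradient inequality groupwise, handling the extended-valued $\reg_g$ through finiteness at $\theta^{(k)}$, and then reassemble it over the selected groups. Your argument is more elementary and fully self-contained. In particular, it avoids the question of constraint qualifications, which the paper's appeal to the chain rule as an equality glosses over. Equality does in fact hold here by separability, and in any case only the always-valid inclusion of $(P^{(k)})^T\partial\reg(\cdot)$ into the subdifferential of the composition is needed for this proposition. Moreover, your groupwise characterization, together with its converse obtained simply by summing, is exactly the fact the paper uses without proof in Proposition~\ref{subgradients2}.

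The paper's route, in turn, gives the full set equality rather than a single inclusion. It also establishes the constant-shift identity between $\hat{\reg}$ and $\hat{\reg}_\delta^{(k)}$, which the paper reuses for the relative smoothness of the coarse loss in Lemma~\ref{coarserelsmooth}.

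One small remark: injectivity of $r^{(k)}$ plays no role in your summation step, because $\hat{\reg}_\delta^{(k)}$ is separable over the coarse groups by definition. Injectivity only matters once one prolongs back to the fine level, as in Proposition~\ref{subgradients2}.
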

This implies that in Algorithm~\ref{alg:MLLinBreg} we start the coarse iteration with a feasible pair $(\hat{\theta}^{0,k}, \hat{v}^{0,k})$. Consequently, due to the structure of the iteration, we subsequently obtain $\hat{v}^{i,k}\in\partial\hat{\reg}_\delta^{(k)}(\hat{\theta}^{i,k})$ for $i=1,\dots,m$. We also observe that the way of transferring the updated variables back to the fine level preserves the fact that $\tilde{v}^{(k+1)}$ is a subgradient of $\reg_\delta$ at $\tilde{\theta}^{(k+1)}$. 

\begin{proposition}\label{subgradients2}
	If $\hat{\theta}^{m,k}\in\mathrm{dom}(\partial \hat{\reg}^{(k)})$ and $\hat{v}^{m,k}\in\partial\hat{\reg}_\delta^{(k)}$, then defining
	\begin{align*}
		\tilde{\theta}^{(k+1)}&\coloneq\theta^{(k)} + P^{(k)}(\hat{\theta}^{m,k}-\hat{\theta}^{0,k}), \\
		\tilde{v}^{(k+1)}&\coloneq v^{(k)}+P^{(k)}(\hat{v}^{m,k}-\hat{v}^{0,k}),
	\end{align*}
	we have $\tilde{v}^{(k+1)}\in\partial \reg_\delta(\tilde{\theta}^{(k+1)})$.
\end{proposition}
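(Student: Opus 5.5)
The plan is to exploit the separable structure of $\reg_\delta$ across parameter groups. From \eqref{eq:Jgroups}, $\reg_\delta(\theta)=\sum_{g=1}^G \bigl(\frac{1}{2\delta}\|\theta_{(g)}\|^2+\reg_g(\theta_{(g)})\bigr)$ is a sum of proper, convex, lower semicontinuous functions, each depending on a distinct block of variables. For such block-separable functions the subdifferential factorizes: $v\in\partial\reg_\delta(\theta)$ holds if and only if $v_{(g)}\in\partial(\reg_g)_\delta(\theta_{(g)})$ for every $g$, where $(\reg_g)_\delta\coloneq\frac{1}{2\delta}\|\cdot\|^2+\reg_g$. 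The same factorization applies to $\hat{\reg}_\delta^{(k)}$ from \eqref{eq:coarsereg}, whose $i$-th block is $(\reg_{r^{(k)}(i)})_\delta$. I would prove this factorization first, directly from the subgradient inequality. One direction is obtained by summing the blockwise inequalities. For the converse, we test the global inequality with points $\tilde\theta$ that differ from $\theta$ only in block $g$; all other blocks then cancel. These cancellations are legitimate because $\theta\in\mathrm{dom}(\reg_\delta)$, so every block value is finite.

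With the factorization in hand, I would check the blocks of $\tilde\theta^{(k+1)}$ and $\tilde v^{(k+1)}$ one at a time. Take first a group $g$ that is not selected by $R^{(k)}$, that is, $g\notin\operatorname{range}(r^{(k)})$. Then $P^{(k)}$ contributes zero on that group, so $\tilde\theta^{(k+1)}_{(g)}=\theta^{(k)}_{(g)}$ and $\tilde v^{(k+1)}_{(g)}=v^{(k)}_{(g)}$. The inclusion in this block is inherited from the standing assumption $v^{(k)}\in\partial\reg_\delta(\theta^{(k)})$, using the factorization on the fine level.

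Now take a selected group $g=r^{(k)}(i)$. Here $(P^{(k)}\hat\theta^{0,k})_{(g)}=\hat\theta^{0,k}_{(i)}=\theta^{(k)}_{(g)}$, so the terms cancel and $\tilde\theta^{(k+1)}_{(g)}=\hat\theta^{m,k}_{(i)}$. In the same way, $\tilde v^{(k+1)}_{(g)}=\hat v^{m,k}_{(i)}$. The hypothesis $\hat v^{m,k}\in\partial\hat\reg^{(k)}_\delta(\hat\theta^{m,k})$, combined with the factorization on the coarse level, gives $\hat v^{m,k}_{(i)}\in\partial(\reg_{g})_\delta(\hat\theta^{m,k}_{(i)})$. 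This is exactly the required blockwise inclusion.

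Combining the two cases through the factorization yields $\tilde v^{(k+1)}\in\partial\reg_\delta(\tilde\theta^{(k+1)})$. The main point needing care is the factorization lemma, specifically the converse direction together with domain issues. We need every block value $(\reg_g)_\delta(\theta_{(g)})$ to be finite, so that subtracting the untouched blocks is valid. On the fine level this follows from $\theta^{(k)}\in\mathrm{dom}\,\partial\reg_\delta$. On the coarse level it follows from the assumption $\hat\theta^{m,k}\in\mathrm{dom}(\partial\hat\reg^{(k)})$, which gives $\hat\theta^{m,k}\in\mathrm{dom}(\hat\reg^{(k)})$. Injectivity of $r^{(k)}$ is also needed, because it guarantees that distinct coarse blocks land on distinct fine groups.
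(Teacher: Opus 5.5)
Your proposal is correct and follows essentially the same route as the paper. The paper also uses the group-separable structure \eqref{eq:Jgroups} to reduce the inclusion to blockwise inclusions, handles unselected groups via $v^{(k)}\in\partial\reg_\delta(\theta^{(k)})$, and handles selected groups $g=r^{(k)}(i)$ via $\hat v^{m,k}\in\partial\hat\reg^{(k)}_\delta(\hat\theta^{m,k})$. Your explicit proof of the blockwise factorization of the subdifferential, including the finiteness argument that justifies cancelling the untouched blocks, fills in what the paper dismisses as ``easy to see.''
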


To conclude this section, we would like to highlight the key differences between our algorithm and the methods ML BPGD \citep{elshiaty2025} and MGOPT \citep{nash2000}.
Firstly, we use adaptive restriction and prolongation operators, which are necessary to focus on different parameters at different iterations.
Secondly, we employ stochastic gradient estimators instead of exact gradients.
However, for the purposes of our analysis, we assume access to exact gradients at the fine level and use stochastic estimators only for the coarse-level problem.
Thirdly, we do not perform a line search when mapping from the coarse level to the fine level.
MGOPT and ML BPGD both use arbitrary convex coarse objective functions with a linear correction term to ensure that the direction found with the coarse iterations is a descent direction, as well as a line search to ensure that the fine loss actually decreases.
Our specific choice of the coarse objective eliminates the need for a line search.
Finally, unlike ML BPGD, our regularizer $\reg_\delta$ can be non-differentiable. 
Therefore, rather than working with gradients, we need to consider subgradients which must be handled with care when mapping between different levels (see Propositions~\ref{subgradients1} and~\ref{subgradients2}).

\section{Convergence Analysis}
For the convergence analysis, we need to make further assumptions. We follow \citet{bauschke2019,elshiaty2025} and require the loss function to be smooth relative to the regularizer and to satisfy a Polyak--Łojasiewicz-type inequality. 

\begin{assum}\label{smoothassumption}
	We assume that $\loss$ is L-smooth with respect to $\reg_\delta$, i.e.
	\begin{align*}
		\loss(\tilde{\theta}) \leq \loss(\theta) + \langle\nabla\loss(\theta),\tilde{\theta}-\theta\rangle + LD_{\reg_\delta}^v(\tilde{\theta},\theta)
	\end{align*}
	for $\tilde{\theta}\in\real^d$, $\theta\in\mathrm{dom}(\partial \reg)$ and $v\in\partial \reg_\delta(\theta)$.
\end{assum}
Since $\reg_\delta$ is strongly convex, its induced Bregman divergence is bounded from below by the squared Euclidean norm. Therefore, from the descent lemma for $L$-smooth functions \citep{bauschke2011, beck2017} it follows that this assumption holds in particular for loss functions $\loss$ with a Lipschitz-continuous gradient which, however, is generally a much stronger condition than Assumption~\ref{smoothassumption}.

\begin{assum}\label{PLassumption}
	For $\theta\in\mathrm{dom}(\partial \reg), v\in\partial \reg_\delta(\theta)$ and $\tau > 0$, let $v_\tau^+ \coloneq v-\tau\nabla\loss(\theta)$ and $\theta_\tau^+ \coloneq \prox_{\delta \reg}(\delta v_\tau^+)$. We assume the existence of a function $\lambda\colon(0,\infty)\to(0,\infty)$ and some $\eta > 0$ such that 
	\begin{align}\label{eq:scaling}
		D_{\reg_\delta}^{v_\tau^+}(\theta,\theta_\tau^+) \geq \lambda(\tau) D_{\reg_\delta}^{v_1^+}(\theta,\theta_1^+)
	\end{align}
	for $\theta\in\mathrm{dom}(\partial \reg), v\in\partial \reg_\delta(\theta), \tau > 0$ and 
	\begin{align}\label{eq:PL}
		D_{\reg_\delta}^{v_1^+}(\theta,\theta_1^+) \geq \eta (\loss(\theta)-\loss^\star),
	\end{align}
	where $\loss^\star \coloneq \inf_{\real^d}\loss$.
\end{assum}
In the gradient descent case where $\reg_\delta = \frac{1}{2}\|\cdot\|^2$, assumption \eqref{eq:scaling} is satisfied with $\lambda(\tau)= \tau^2$ and \eqref{eq:PL} becomes the well-known Polyak--Łojasiewicz inequality $\|\nabla\loss(\theta)\|^2 \geq 2\eta(\loss(\theta)-\loss^\star)$, which is weaker than convexity but requires every stationary point of $\loss$ to be a minimizer.

For our convergence analysis, we use the exact gradients on the fine level and stochastic gradient estimators on the coarse level. We assume that these gradient estimators are unbiased with bounded variance. To be precise, let $(\Omega,\mathcal{F},\mathbb{P})$ be a probability space and $\hat{g}^{(k)}:\real^{D_k}\times\Omega\to\real^{D_k}$. We make the following assumption. 
\begin{assum}\label{gradientestimatorassumption}
	We assume that for $\hat{\theta}\in\real^{D_k}$, 
	\(
		\IE\left[\hat{g}^{(k)}(\hat{\theta},\omega)\,\vert\,\hat{\theta}\right] = \nabla\hat{\loss}^{(k)}(\hat{\theta}).
	\)
	Moreover, there exists a positive constant $\sigma$ such that 
	\(
		\IE\left[\|\hat{g}^{(k)}(\hat{\theta},\omega)-\nabla\hat{\loss}^{(k)}(\hat{\theta})\|^2 \,\vert\,\hat{\theta}\right] \leq \sigma^2
	\)
	for any $k\in\mathbb{N}$ and $\hat{\theta}\in\real^{D_k}$.
\end{assum}

For the proof of our main result Theorem~\ref{thm:main} we require two more lemmas, the proofs of which are very similar to the corresponding results in \citet{bauschke2019, elshiaty2025}, with the key differences being that we utilize subgradients instead of classical gradients of the regularizer $\reg_\delta$, and that we use stochastic gradients on the coarse level.

The first lemma asserts a linear convergence rate for the linearized Bregman iterations \eqref{eq:linbreg} without a multilevel component.
\begin{lemma}\label{fullstepdescent}
	Let Assumption~\ref{smoothassumption} and Assumption~\ref{PLassumption} be satisfied. 
	Moreover, for $\theta\in\mathrm{dom}(\partial \reg), v\in\partial \reg_\delta(\theta)$ and $0<\tau<\frac{1}{L}$, let $v_\tau^+ \coloneq v-\tau\nabla\loss(\theta)$ and $\theta_\tau^+ \coloneq \prox_{\delta \reg}(\delta v_\tau^+)$. Then, 
	\begin{align*}
		\loss(\theta_\tau^+) - \loss^\star \leq \left(1-\eta\frac{\lambda(\tau)}{\tau}\right)(\loss(\theta)-\loss^\star).
	\end{align*}
\end{lemma}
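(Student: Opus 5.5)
We follow the relative-smoothness argument of \citet{bauschke2019}, adapted to subgradients. The aim is a one-step descent inequality of the form
\begin{align*}
	\loss(\theta_\tau^+) \leq \loss(\theta) - \frac{1}{\tau} D_{\reg_\delta}^{v_\tau^+}(\theta,\theta_\tau^+),
\end{align*}
after which Assumption~\ref{PLassumption} gives the claim directly.

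\textbf{Step 1: optimality and subgradient structure.} Since $\theta_\tau^+ = \prox_{\delta\reg}(\delta v_\tau^+)$, the optimality condition of \eqref{eq:proxop} gives $v_\tau^+ - \theta_\tau^+/\delta \in \partial\reg(\theta_\tau^+)$. Hence $v_\tau^+ \in \partial\reg_\delta(\theta_\tau^+)$, so $\theta_\tau^+\in\mathrm{dom}(\partial\reg)$, and all Bregman divergences below are well defined. We also use $v\in\partial\reg_\delta(\theta)$ and the definition $v_\tau^+ - v = -\tau\nabla\loss(\theta)$.

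\textbf{Step 2: three-point identity.} Write out the definition \eqref{eq:bregdivg}. With $\theta^+ := \theta_\tau^+$ and $v^+ := v_\tau^+$, this yields
\begin{align*}
	D_{\reg_\delta}^{v}(\theta^+,\theta) + D_{\reg_\delta}^{v^+}(\theta,\theta^+) = \langle v^+ - v, \theta - \theta^+\rangle = \tau\langle \nabla\loss(\theta), \theta^+ - \theta\rangle.
\end{align*}
The $\reg_\delta$ terms cancel exactly. This is the subgradient replacement for the usual gradient-based identity.

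\textbf{Step 3: combine with relative smoothness.} Apply Assumption~\ref{smoothassumption} with $\tilde\theta=\theta^+$ and base point $(\theta,v)$:
\begin{align*}
	\loss(\theta^+) \leq \loss(\theta) + \langle\nabla\loss(\theta),\theta^+-\theta\rangle + L D^{v}_{\reg_\delta}(\theta^+,\theta).
\end{align*}
Use Step 2 to replace the inner product, which gives
\begin{align*}
	\loss(\theta^+) \leq \loss(\theta) - \Bigl(\tfrac1\tau - L\Bigr) D^{v}_{\reg_\delta}(\theta^+,\theta) - \tfrac1\tau D^{v^+}_{\reg_\delta}(\theta,\theta^+).
\end{align*}
Since $\tau<1/L$ and Bregman divergences are nonnegative by convexity, the middle term can be dropped. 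This yields the target inequality.

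\textbf{Step 4: apply the PL-type assumption.} By \eqref{eq:scaling} and then \eqref{eq:PL},
\begin{align*}
	\frac1\tau D^{v_\tau^+}_{\reg_\delta}(\theta,\theta_\tau^+) \geq \frac{\lambda(\tau)}{\tau}\,\eta\,(\loss(\theta)-\loss^\star).
\end{align*}
Subtract $\loss^\star$ from both sides of the target inequality to conclude.

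The only delicate point is Step 1 together with Step 2: checking that the prox step produces a valid subgradient pair $(\theta_\tau^+, v_\tau^+)$, and that the three-point identity holds without differentiability of $\reg_\delta$. Both reduce to algebra once $v_\tau^+\in\partial\reg_\delta(\theta_\tau^+)$ is established. The rest is routine.
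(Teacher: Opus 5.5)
Your proposal is correct and follows essentially the same route as the paper: relative smoothness, rewriting the inner product through the symmetrized Bregman divergence \eqref{eq:symmetricBregman}, dropping the nonnegative term $(\frac1\tau-L)D^{v}_{\reg_\delta}(\theta_\tau^+,\theta)$, and then applying \eqref{eq:scaling} and \eqref{eq:PL}; your Step 1 check that $v_\tau^+\in\partial\reg_\delta(\theta_\tau^+)$ is a useful detail the paper leaves implicit. There is one sign slip in Step 2: the identity should read $D^{v}_{\reg_\delta}(\theta^+,\theta)+D^{v^+}_{\reg_\delta}(\theta,\theta^+)=\langle v-v^+,\theta-\theta^+\rangle=\tau\langle\nabla\loss(\theta),\theta-\theta^+\rangle$ (note the left side is nonnegative), and it is this corrected version that actually yields your Step 3 inequality.
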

\begin{proof}
	Using Assumption~\ref{smoothassumption}, the definition of $v_\tau^+$ and Assumption~\ref{PLassumption} together with the definition of the symmetrized Bregman divergence \eqref{eq:symmetricBregman}, we obtain
	\begin{align*}
		\loss(\theta^+_\tau) &\leq \loss(\theta)+\langle\nabla\loss(\theta),\theta_\tau^+-\theta\rangle+LD_{\reg_\delta}^v(\theta_\tau^+,\theta) = \loss(\theta)-\frac{1}{\tau}D_{\reg_\delta}^{\mathrm{sym}}(\theta_\tau^+,\theta) +LD_{\reg_\delta}^v(\theta_\tau^+,\theta) \\
		&\leq \loss(\theta)-\frac{1}{\tau} D_{\reg_\delta}^{v_\tau^+}(\theta,\theta_\tau^+) \leq \loss(\theta)-\frac{\lambda(\tau)}{\tau}D_{\reg_\delta}^{v_1^+}(\theta,\theta_1^+) \leq \loss(\theta)-\eta\frac{\lambda(\tau)}{\tau}(\loss(\theta)-\loss^\star).
	\end{align*}
	Subtracting $\loss^\star$ from both sides yields the desired estimate.
\end{proof}
Let us next introduce the concept of the symmetrized Bregman divergence by summing up two Bregman divergences,
\begin{align}\label{eq:symmetricBregman}
    D_{\reg_\delta}^{\mathrm{sym}}(\theta,\tilde{\theta}) \coloneq D_{\reg_\delta}^v(\tilde{\theta},\theta) + D_{\reg_\delta}^{\tilde{v}}(\theta,\tilde{\theta})= \langle v-\tilde{v}, \theta-\tilde{\theta}\rangle,
\end{align}
for $\theta,\tilde\theta\in\mathrm{dom}\partial \reg$ and $v\in\partial \reg_\delta(\theta), \tilde{v}\in\partial \reg_\delta(\tilde{\theta})$, where we suppress the dependency on the subgradients in the notation.
The second lemma investigates the decay of the function value from a coarse update.  
The key observation is that our coarse objective $\hat{\loss}^{(k)}$ is $L$-smooth relative to the coarse regularizer $\hat{\reg}_\delta^{(k)}$, which follows from Assumption~\ref{smoothassumption} and Lemma~\ref{coarserelsmooth}.
The proof of the following lemma is deferred to Appendix~\ref{sec:proofcoarselemma}.
\begin{lemma}\label{coarseupdate}
	Let Assumption~\ref{smoothassumption} and Assumption~\ref{gradientestimatorassumption} be satisfied.
    If the step sizes are chosen such that $\hat{\tau}^{i,k} \leq \frac{1}{4L}$, then the coarse updates in iteration $k$ yield
	\begin{align*}
		\IE\left[\loss(\tilde{\theta}^{(k+1})\,\vert\, \theta^{(k)}\right] \leq \loss(\theta^{(k)}) - \sum_{j=0}^{m-1} \frac{1}{4\hat{\tau}^{j,k}} \IE\left[D_{\hat{\reg}_\delta^{(k)}}^{\mathrm{sym}}(\hat{\theta}^{j+1,k},\hat{\theta}^{j,k})\,\vert\,\theta^{(k)}\right] +\frac{\delta\sigma^2}{2} \sum_{j=0}^{m-1} \hat{\tau}^{j,k}.
	\end{align*}
\end{lemma}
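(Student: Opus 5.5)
The plan is to reduce everything to a single stochastic coarse step and then telescope over $j=0,\dots,m-1$. First I would use the fact, noted just before the statement, that $\hat{\loss}^{(k)}$ is $L$-smooth relative to $\hat{\reg}_\delta^{(k)}$; this follows from Assumption~\ref{smoothassumption} and Lemma~\ref{coarserelsmooth}. By Proposition~\ref{subgradients1} and the structure of the iteration, every coarse pair satisfies $\hat v^{j,k}\in\partial\hat{\reg}_\delta^{(k)}(\hat\theta^{j,k})$. Since $\hat{\loss}^{(k)}(\hat\theta^{0,k})=\loss(\theta^{(k)})$ and $\hat{\loss}^{(k)}(\hat\theta^{m,k})=\loss(\tilde\theta^{(k+1)})$ by definition of the coarse objective, it suffices to prove the one-step bound
\begin{align*}
\IE\bigl[\hat{\loss}^{(k)}(\hat\theta^{j+1,k})\,\vert\,\hat\theta^{j,k}\bigr]\le \hat{\loss}^{(k)}(\hat\theta^{j,k}) - \frac{1}{4\hat\tau^{j,k}}\IE\bigl[D^{\mathrm{sym}}_{\hat{\reg}_\delta^{(k)}}(\hat\theta^{j+1,k},\hat\theta^{j,k})\,\vert\,\hat\theta^{j,k}\bigr]+\frac{\delta\sigma^2}{2}\hat\tau^{j,k}.
\end{align*}
Summing this bound and applying the tower property of conditional expectation (conditioning on $\theta^{(k)}$) then gives the lemma.

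For the one-step bound, write $\tau=\hat\tau^{j,k}$, $\hat g$ for the stochastic gradient, and $\nabla=\nabla\hat{\loss}^{(k)}(\hat\theta^{j,k})$. Relative smoothness gives
\[
\hat{\loss}(\hat\theta^{j+1})\le\hat{\loss}(\hat\theta^{j})+\langle\nabla,\hat\theta^{j+1}-\hat\theta^j\rangle+L D^{\hat v^j}(\hat\theta^{j+1},\hat\theta^j).
\]
I would split $\nabla=\hat g+(\nabla-\hat g)$. Because $\hat v^{j+1}-\hat v^j=-\tau\hat g$, the first piece equals $-\frac1\tau D^{\mathrm{sym}}(\hat\theta^{j+1},\hat\theta^j)$, exactly as in the proof of Lemma~\ref{fullstepdescent}. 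Bounding $L D^{\hat v^j}\le L D^{\mathrm{sym}}\le\frac{1}{4\tau}D^{\mathrm{sym}}$ via $\tau\le\frac1{4L}$ leaves $-\frac{3}{4\tau}D^{\mathrm{sym}}$ plus the noise term $\langle\nabla-\hat g,\hat\theta^{j+1}-\hat\theta^j\rangle$.

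The main obstacle is the noise term, because $\hat\theta^{j+1}$ depends on $\hat g$, so its expectation does not vanish. I would introduce the deterministic comparison point $\bar\theta=\prox_{\delta\hat{\reg}}(\delta(\hat v^j-\tau\nabla))$. Since $\langle\nabla-\hat g,\bar\theta-\hat\theta^j\rangle$ has zero conditional mean by Assumption~\ref{gradientestimatorassumption}, only $\langle\nabla-\hat g,\hat\theta^{j+1}-\bar\theta\rangle$ remains. The map $\prox_{\delta\hat{\reg}}(\delta\,\cdot)=\nabla(\hat{\reg}_\delta)^*$ is $\delta$-Lipschitz, because $\hat{\reg}_\delta$ is $1/\delta$-strongly convex. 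This gives $\|\hat\theta^{j+1}-\bar\theta\|\le\delta\tau\|\hat g-\nabla\|$, so the remaining term has expectation at most $\delta\tau\sigma^2$. That overshoots the target constant $\frac{\delta\sigma^2}{2}\tau$ by a factor of two.

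To reach the stated constant, I would instead use Young's inequality,
\[
\langle\nabla-\hat g,\hat\theta^{j+1}-\hat\theta^j\rangle\le\frac{\delta\tau}{2}\|\nabla-\hat g\|^2+\frac{1}{2\delta\tau}\|\hat\theta^{j+1}-\hat\theta^j\|^2,
\]
and absorb the second term using strong convexity, $\|\hat\theta^{j+1}-\hat\theta^j\|^2\le\delta D^{\mathrm{sym}}$. The second term is then at most $\frac{1}{2\tau}D^{\mathrm{sym}}$, which leaves $-\frac{3}{4\tau}+\frac1{2\tau}=-\frac{1}{4\tau}$ in front of $D^{\mathrm{sym}}$, with noise contribution $\frac{\delta\tau\sigma^2}{2}$. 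This matches the lemma exactly, so this Young-inequality route, rather than the comparison-point argument, is the one I would commit to.
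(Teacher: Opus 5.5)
Your proposal is correct and follows the paper's proof essentially step for step. The shared steps are: relative smoothness of $\hat{\loss}^{(k)}$ from Lemma~\ref{coarserelsmooth}; the split $\nabla\hat{\loss}^{(k)} = \hat g + (\nabla\hat{\loss}^{(k)} - \hat g)$, which turns the first part into $-\frac{1}{\hat\tau^{j,k}}D^{\mathrm{sym}}$; Young's inequality with weight $\delta$, absorbed via $\|\hat\theta^{j+1,k}-\hat\theta^{j,k}\|^2\le\delta D^{\mathrm{sym}}$; and summation with the tower property. The only cosmetic difference is that the paper keeps the coefficient $-\frac{1-2L\hat\tau^{j,k}}{2\hat\tau^{j,k}}$ before invoking $\hat\tau^{j,k}\le\frac{1}{4L}$, whereas you plug in the step-size bound earlier; the comparison-point detour you discarded is not needed.
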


\begin{remark}
    In the deterministic case ($\sigma = 0$), the previous result guarantees a decrease in the loss during the coarse updates.
    The resulting estimate closely parallels the loss decay established in \citep{elshiaty2025}.
\end{remark}

\begin{theorem}\label{thm:main}
	Let Assumptions~\ref{smoothassumption}, \ref{PLassumption} and \ref{gradientestimatorassumption} be satisfied and $(\theta^{(k)})$ be the sequence generated by Algorithm~\ref{alg:MLLinBreg} with exact gradients on the fine level. If the fine step size $\tau$ is chosen such that $\tau<\frac{1}{L}$ and the coarse step sizes such that $\hat{\tau}^{i,k}\leq\frac{1}{4L}$ then
	\begin{align*}
		\IE\left[\loss(\theta^{(k)})-\loss^\star\right] &\leq (1-r)^{k}(\loss(\theta^{(0)})-\loss^\star) + \sum_{i=0}^{k-1} (1-r)^{k-i}\hat{\rho}_i,
	\end{align*}
	where  $r=\frac{\eta\lambda(\tau)}{\tau}$ and 
    \begin{align*}
        \hat{\rho}_i = \sum\limits_{j=0}^{m-1}
            \left(
            \frac{\delta\sigma^2}{2}
            \hat{\tau}^{j,i} - \frac{1}{4\hat{\tau}^{j,i}}\IE\left[D_{\hat{J}_\delta^{(k)}}^{\mathrm{sym}}(\hat{\theta}^{j+1,i},\hat{\theta}^{j,i})\right]
            \right).
    \end{align*}
\end{theorem}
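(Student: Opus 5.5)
The plan is to combine the two lemmas into a one-step recursion for the expected optimality gap and then unroll it. Fix an iteration $k$. By Proposition~\ref{subgradients1} and Proposition~\ref{subgradients2}, together with the structure of the coarse LinBreg steps, the pair $(\tilde{\theta}^{(k+1)},\tilde{v}^{(k+1)})$ satisfies $\tilde{v}^{(k+1)}\in\partial\reg_\delta(\tilde{\theta}^{(k+1)})$. Hence the fine step $v^{(k+1)}=\tilde{v}^{(k+1)}-\tau\nabla\loss(\tilde{\theta}^{(k+1)})$, $\theta^{(k+1)}=\prox_{\delta\reg}(\delta v^{(k+1)})$ has exactly the form covered by Lemma~\ref{fullstepdescent}, with $(\theta,v)=(\tilde{\theta}^{(k+1)},\tilde{v}^{(k+1)})$. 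Since the fine gradient is exact, this gives the pathwise bound
\begin{align*}
\loss(\theta^{(k+1)})-\loss^\star\leq(1-r)\bigl(\loss(\tilde{\theta}^{(k+1)})-\loss^\star\bigr).
\end{align*}
Before using this, I would check that $1-r\geq 0$. This should follow from the chain of inequalities in the proof of Lemma~\ref{fullstepdescent} together with $\loss\geq\loss^\star$, which forces $r(\loss(\theta)-\loss^\star)\leq\loss(\theta)-\loss^\star$. If necessary I would simply note that the bound is vacuous otherwise.

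Next, I take conditional expectations given $\theta^{(k)}$ (more precisely, given the history up to iteration $k$). Multiplying by $1-r\geq 0$ preserves the inequality, so Lemma~\ref{coarseupdate} yields
\begin{align*}
\IE\bigl[\loss(\theta^{(k+1)})-\loss^\star\,\vert\,\theta^{(k)}\bigr]\leq(1-r)\bigl(\loss(\theta^{(k)})-\loss^\star\bigr)+(1-r)\,\hat{\rho}_k^{\mathrm{cond}},
\end{align*}
where $\hat{\rho}_k^{\mathrm{cond}}$ is the quantity $\hat{\rho}_k$ with the conditional expectations from Lemma~\ref{coarseupdate}. Taking total expectations and applying the tower property turns $\hat{\rho}_k^{\mathrm{cond}}$ into $\hat{\rho}_k$ as defined in the statement, with unconditional expectations of the symmetrized Bregman divergences. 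This gives the recursion $a_{k+1}\leq(1-r)a_k+(1-r)\hat{\rho}_k$, where $a_k\coloneq\IE[\loss(\theta^{(k)})-\loss^\star]$.

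A straightforward induction on $k$ then gives $a_k\leq(1-r)^k a_0+\sum_{i=0}^{k-1}(1-r)^{k-i}\hat{\rho}_i$. The induction step multiplies the hypothesis by $1-r$ and adds $(1-r)\hat{\rho}_k$, and each exponent shifts by one, as required. Since $\theta^{(0)}$ is deterministic, $a_0=\loss(\theta^{(0)})-\loss^\star$.

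I expect the only delicate points to be bookkeeping rather than substance. The first is the nonnegativity of $1-r$ discussed above. The second is that Lemma~\ref{fullstepdescent} is applied at the random point $\tilde{\theta}^{(k+1)}$, so I must verify the subgradient feasibility pathwise via Proposition~\ref{subgradients2}, including $\hat{\theta}^{m,k}\in\mathrm{dom}(\partial\hat{\reg}^{(k)})$, which holds because it is a prox output. The third is the measurability needed for the tower property, since $\hat{\rho}_i$ may be of either sign and appears inside expectations. Because all inequalities are linear in the expectations, no sign issues arise.
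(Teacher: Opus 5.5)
Your proposal is correct and follows the paper's proof: apply Lemma~\ref{fullstepdescent} at $(\tilde\theta^{(k+1)},\tilde v^{(k+1)})$, bound $\IE[\loss(\tilde\theta^{(k+1)})\,\vert\,\theta^{(k)}]$ via Lemma~\ref{coarseupdate}, multiply by $1-r$, and unroll with the tower property. Checking feasibility via Proposition~\ref{subgradients2}, conditioning on the full history, and checking the sign of $1-r$ are all steps the paper uses without stating, so making them explicit is an improvement. One correction to your fallback: your argument gives $r\le 1$ only when $\loss>\loss^\star$ somewhere on $\mathrm{dom}(\partial\reg)$. If instead $\loss$ is constant, Assumption~\ref{PLassumption} holds for every $\eta$, so $r>1$ is allowed. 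For $k=1$ the bound then reads $0\le(1-r)\hat\rho_0$, where $\hat\rho_0>0$ because $\IE\bigl[D^{\mathrm{sym}}_{\hat{\reg}^{(0)}_\delta}(\hat\theta^{j+1,0},\hat\theta^{j,0})\bigr]\le\delta(\hat\tau^{j,0})^2\sigma^2$. That inequality is false, not vacuous, so in this degenerate case you must additionally assume $r\le 1$ (or enforce it by shrinking $\eta$).
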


\begin{proof}
    Combining Lemmas~\ref{fullstepdescent} and~\ref{coarseupdate}, we obtain
    \begin{align*}
        \IE\left[\loss(\theta^{(k+1)})-\loss^\star\,\vert\,\theta^{(k)}\right] &\leq \left(1-\eta\frac{\lambda(\tau)}{\tau}\right)\IE\left[\loss(\tilde\theta^{(k+1)})-\loss^\star\,\vert\,\theta^{(k)}\right] 
        \\
        \leq 
        \left(1-r\right)
        &
        \left(\loss(\theta^{(k)})-\loss^\star\right) + \left(1-r\right)\sum_ {j=0}^{m-1} \left( \frac{\delta\sigma^2}{1}\hat{\tau}^{j,k} - \frac{1}{4\hat{\tau}^{j,k}}\IE\left[D_{\hat{\reg}_\delta^{(k)}}^{sym}(\hat{\theta}^{j+1,k},\hat{\theta}^{j,k})\,\vert\,\theta^{(k)}\right]\right).
    \end{align*}
    Iterating this inequality and utilizing the tower property gives the desired estimate.
\end{proof} 

As a direct consequence, Algorithm \ref{alg:MLLinBreg} converges in expectation provided the coarse step-sizes converge to zero.
\begin{corollary}\label{corollaryconvergence}
	Let the assumptions from Theorem \ref{thm:main} hold. Define $\hat{\tau}^{(k)} \coloneq \max_{i=0,\dots,m-1}\hat{\tau}^{i,k},$  the maximum stepsize on the coarse level in iteration $k$. If $\hat{\tau}^{(k)}$ converges to zero, then
	\begin{align*}
		\lim_{k\to\infty}\IE\left[\loss(\theta^{(k)})\right] = \loss^\star.
	\end{align*}
\end{corollary}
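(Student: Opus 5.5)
We start from the estimate of Theorem~\ref{thm:main} and drop the nonpositive Bregman terms. Since $D^{\mathrm{sym}}_{\hat{\reg}_\delta^{(i)}}\geq 0$ by monotonicity of the subdifferential of the convex function $\hat{\reg}_\delta^{(i)}$, we get $\hat\rho_i\leq \frac{\delta\sigma^2}{2}\sum_{j=0}^{m-1}\hat\tau^{j,i}\leq \frac{m\delta\sigma^2}{2}\hat\tau^{(i)}=:\epsilon_i$. Hence
\begin{align*}
0\leq \IE\left[\loss(\theta^{(k)})-\loss^\star\right]\leq (1-r)^k(\loss(\theta^{(0)})-\loss^\star)+\sum_{i=0}^{k-1}(1-r)^{k-i}\epsilon_i .
\end{align*}
Here the lower bound holds because $\loss^\star=\inf\loss$. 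The corollary then follows once we show that the right-hand side tends to zero.

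The first step is to check that $r\in(0,1]$, so that $0\leq 1-r<1$. Positivity of $r$ follows from $\eta>0$ and $\lambda(\tau)>0$. For $r\leq 1$ we use Lemma~\ref{fullstepdescent}: it gives $0\leq\loss(\theta_\tau^+)-\loss^\star\leq(1-r)(\loss(\theta)-\loss^\star)$. If some admissible $\theta$ has $\loss(\theta)>\loss^\star$, this forces $1-r\geq 0$. In the degenerate case $\loss\equiv\loss^\star$ on $\mathrm{dom}(\partial\reg)$ the claim is trivial. We could alternatively read $r\leq1$ off the proof chain, which shows $\loss(\theta)-\frac{\lambda(\tau)}{\tau}D\geq\loss^\star$. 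Given $1-r<1$, the first term $(1-r)^k(\loss(\theta^{(0)})-\loss^\star)$ vanishes as $k\to\infty$.

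The main step is the discrete convolution $\sum_{i=0}^{k-1}q^{k-i}\epsilon_i$ with $q=1-r\in[0,1)$ and $\epsilon_i\to0$. This is a standard Toeplitz-type argument. Fix $\varepsilon>0$ and choose $N$ such that $\epsilon_i\leq\varepsilon$ for $i\geq N$. Also note that $\epsilon_i\leq \frac{m\delta\sigma^2}{8L}=:C$ for all $i$, since $\hat\tau^{j,i}\leq\frac1{4L}$. We split the sum at $N$:
\begin{align*}
\sum_{i=0}^{N-1}q^{k-i}\epsilon_i\leq C\,N\,q^{k-N+1}\to 0,
\qquad
\sum_{i=N}^{k-1}q^{k-i}\epsilon_i\leq \varepsilon\sum_{l\geq1}q^l=\varepsilon\frac{q}{1-q}.
\end{align*}
This gives $\limsup_k\IE[\loss(\theta^{(k)})-\loss^\star]\leq\varepsilon\, q/r$, and letting $\varepsilon\to0$ yields $\lim_{k\to\infty}\IE[\loss(\theta^{(k)})]=\loss^\star$.

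The only real subtlety is justifying $r\leq1$ (so that $q<1$ and $q\geq 0$) and the sign of the dropped Bregman terms. The rest is routine.
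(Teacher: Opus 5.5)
Your proof is correct and follows the paper's argument. You drop the nonnegative symmetrized Bregman terms to get $\hat\rho_i\le\frac{\delta\sigma^2}{2}\sum_{j}\hat\tau^{j,i}$, then show that a geometric convolution of a null sequence vanishes; your $\varepsilon$--$N$ split is an unpacked version of the paper's dominated-convergence-for-series step, and your explicit check that $0\le 1-r<1$ (via Lemma~\ref{fullstepdescent} and $\loss\ge\loss^\star$) makes precise an assumption the paper uses only implicitly.
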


\begin{proof}
	To prove this result, we consider the worst-case scenario in Theorem~\ref{thm:main}, by employing the estimate $\hat{\rho}_i \leq \frac{\delta\sigma^2}{2}\sum_{j=0}^{m-1}\hat{\tau}^{j,i}$ for any $i\in\mathbb{N}$.
	This is sharp if the potential benefit in the loss descent with the symmetrized Bregman divergence is zero.
	Hence, 
	\begin{align*}
		\sum_{i=0}^{k-1} (1-r)^{k-i}\hat{\rho}_i &\leq \sum_{i=0}^{k-1} (1-r)^{k-i}\sum\limits_{j=0}^{m-1}\frac{\delta\sigma^2}{2}\hat{\tau}^{j,i} \leq C\sum_{i=0}^{k-1}(1-r)^{k-i}\hat{\tau}^{(i)} \\
        &= C\sum_{i=1}^{k}(1-r)^{i}\hat{\tau}^{(k-i)}=
        \sum_{i=1}^{\infty}1_{i\leq k}(1-r)^i\hat\tau^{(k-i)}.
	\end{align*}
	for some positive constant $C$ depending on $m, \delta$ and $\sigma^2$. 
	To prove that the latter sum converges to zero as $k$ approaches infinity, we observe that for each $i\in\mathbb{N}$, the summands on the right hand side converge to zero as $k\to\infty$ and they are dominated by a constant times $(1-r)^i$ which is summable. 
    By the dominated convergence theorem for series, the sum converges to zero.
    Applying Theorem~\ref{thm:main} concludes the proof.
\end{proof}

We note that ultimately, we would like to prove the same results when also using stochastic gradients on the fine level.
However, this appears to be out of reach unless one is willing to assume (strong) convexity \citep{bungert2022}, increased smoothness \citep{zhang2018convergenceratestochasticmirror}, or perform variance reduction \citep{li2022variance}.
The only results for standard mirror descent in this direction that we are aware of are due to \citet{fatkhullin2024taming}, who show that, in expectation, the best iterate is within one mirror-descent step of an $\varepsilon$-optimal point once sufficiently many steps have been taken and the step-sizes are chosen correctly depending on $\varepsilon$.
Their analysis relies on a differentiable mirror map, weak relative convexity of the objective, and a different variant of the Polyak–Łojasiewicz inequality.

In Appendix~\ref{finitesums}, we examine convergence in a slightly different setting. Specifically, when the loss function has a finite-sum structure, we show that using a mini-batch approximation as the coarse loss, together with an added linear correction term, makes the multilevel algorithm closely related to standard variance-reduction methods. This connection enables us to establish convergence of the resulting algorithm.

\section{Numerical Experiments}\label{sec:numericalexperiments}

All gradient estimators in Algorithm~\ref{alg:MLLinBreg} are computed using mini-batch approximations.
It is also important to note that the ordering of coarse and fine updates is flexible. For instance, one may perform a single fine update -- using a mini-batch to update the fine model -- followed by $m$ coarse updates. Alternatively, switching can occur at the epoch level, where fine updates are carried out for an entire epoch, and then coarse updates are applied for $m$ consecutive epochs.

As mentioned above, we choose the linear map that selects all non-zero parameters in the current state as the restriction operator. 
The sparsity of a model is defined to be the percentage of the models' parameters that are zero. 
We consider promoting unstructured sparsity with standard  $\ell_1$-regularization, that is, $\reg=\lambda{\|\cdot\|_1},$ and initialize all networks with a sparsity of $99\%.$
Due to our considerations regarding the initialization (see Appendix~\ref{backgroundonexperiments}), we would need to multiply every sparsified parameter group by $\tfrac{1}{\sqrt{0.01}} = 10$. However, we instead use a multiplication of $5$ as this leads to better results.
For training, we use the step-wise switching strategy between fine and coarse model rather than switching epoch-wise, and take $m=99.$
For all experiments, we initialize the learning rate at 0.1 and apply a cosine annealing scheduler.

For ablation studies investigating the effects of the hyperparameters $\lambda$ and $m$ for medium and larger scale experiments we refer to Appendix~\ref{backgroundonexperiments}

\paragraph{CIFAR-10 training}
In order to evaluate our proposed training algorithm, we train different neural network architectures on the CIFAR-10 dataset, containing 60,000 32-by-32 color images in 10 different classes.
The objective is to construct sparse models that exhibit only marginal reductions in test accuracy.

The hyperparameter $\lambda$ plays a significant role in determining the characteristics of the trained network.
As expected, stronger regularization -- i.e., larger values of $\lambda$ -- leads to increased sparsity in the resulting model, potentially at the cost of reduced accuracy (see Figure~\ref{figueraccandsparsityfordifferentlambda} in the Appendix).
Therefore, our aim is to find a trade-off between sparsity and accuracy.

We train models using Algorithm~\ref{alg:MLLinBreg} with different choices of the regularization parameter $\lambda$ across multiple random seeds to produce models with different levels of sparsity.
As baselines, we include models trained with stochastic gradient descent
(SGD), LinBreg, RigL~\citep{evci2020rigl} and pruned versions of the SGD-trained models.
The latter were pruned to match the sparsity levels achieved by LinBreg and then fine-tuned.
For a fair comparison, the pruned models were trained for only 180 epochs before undergoing an additional 20 epochs of fine-tuning, whereas the baseline methods were trained for the full 200 epochs.
Thus, the overall training budget is aligned at 200 epochs across all methods.
For models trained with RigL, we used the proposed Erdős–Rényi-Kernel (ERK) initialization method, as we found it yielded the best results.
While our proposed method is robust to different initialization schemes, RigL relies on ERK initialization to achieve competitive results (see Table~\ref{tab:initialization} in Appendix~\ref{backgroundonexperiments}).
We visualize our results in Figure~\ref{figureboxplots} plotting the mean values and standard deviations over multiple random seeds of the test accuracy at the corresponding sparsity levels for different regularizers, for LinBreg, our proposed algorithm, RigL and the pruned and fine-tuned models.
Here, we focus on very high levels of sparsity and present the corresponding results.
A more comprehensive view across a wider range of sparsity levels—including the cases shown here—is provided in Figure~\ref{figureboxplotsfullrange}.
We observe that, across all the architectures considered, our method achieves results comparable to those of RigL.
Moreover, we emphasize that, while achieving results comparable to LinBreg and pruning on ResNet18, our method requires substantially less gradient information.
For the VGG16 architecture, the pruned architectures outperform the other approaches, however, at the cost of requiring training of a full architecture. 
Moreover, for WideResNet28-10, our algorithm outperforms both LinBreg and the pruning approach by producing models that are sparser and equally accurate.
Compared to pruning, our method, RigL (and to some extent also LinBreg) have the advantage of not requiring a dense model to be trained.

\begin{figure*}[htb]
	\centering
	\begin{subfigure}{0.33\textwidth}
		\centering
		\includegraphics[width=\linewidth]{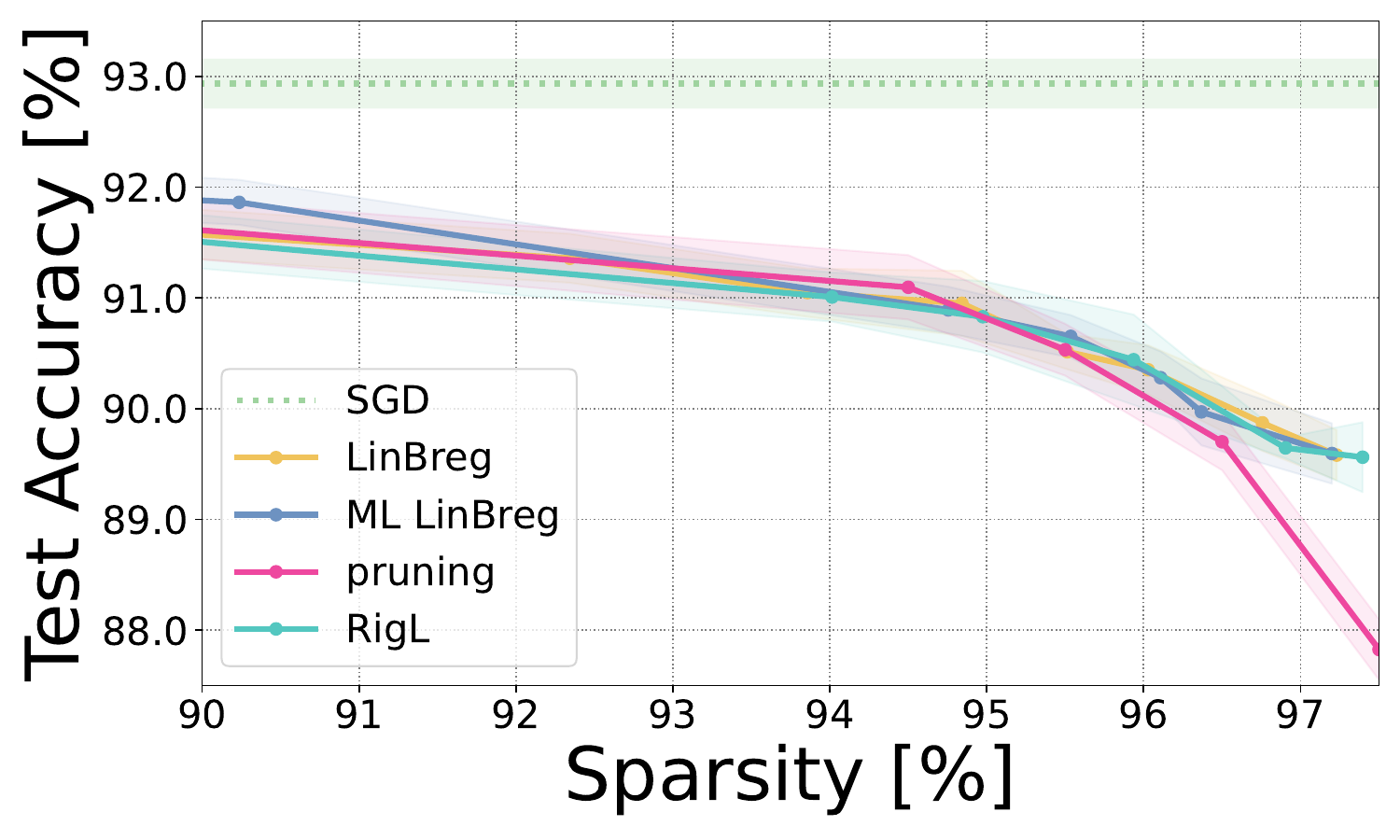}
		\caption{ResNet18}
		\label{figureresnetfullrange}
	\end{subfigure}\hfill
	\begin{subfigure}{0.33\textwidth}
		\centering
		\includegraphics[width=\linewidth]{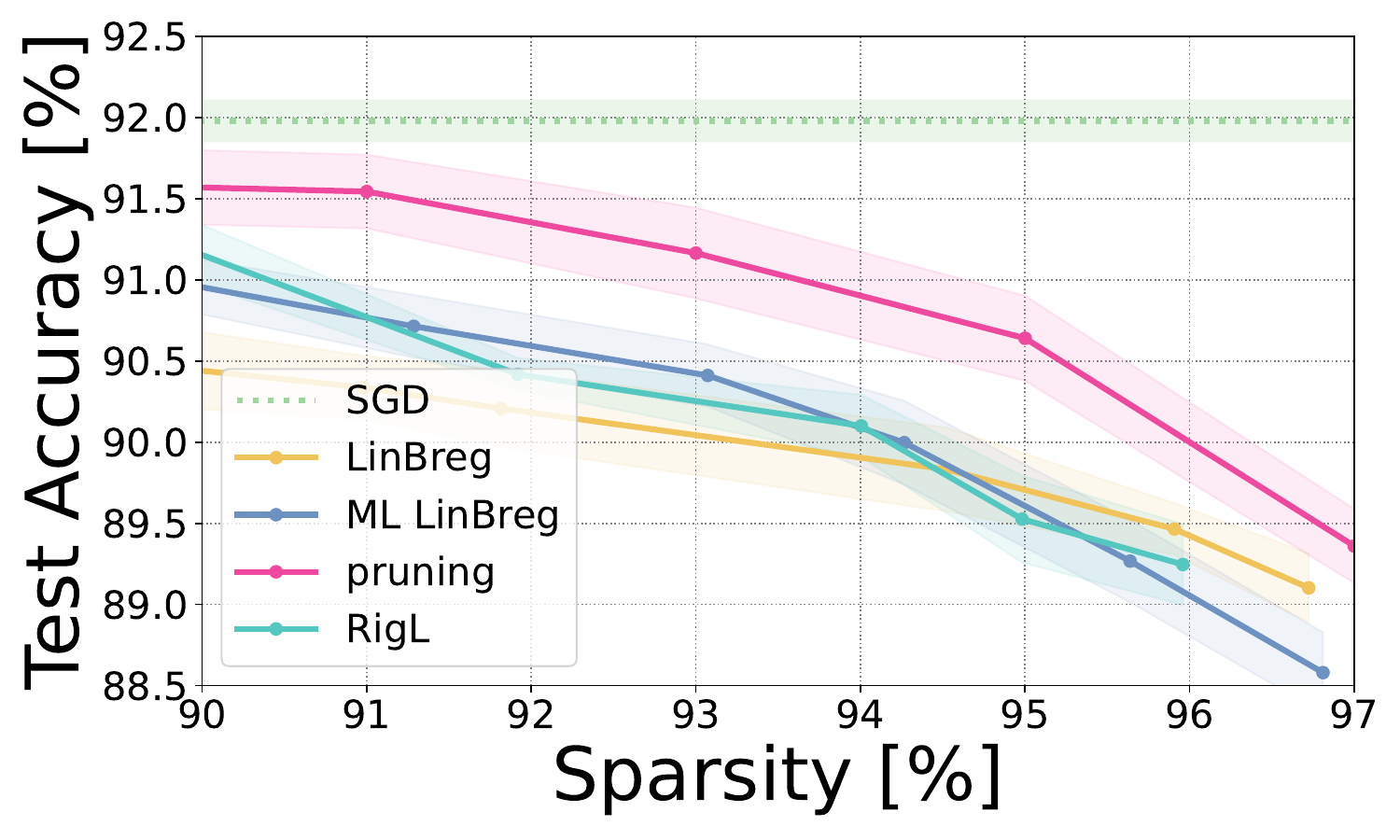}
		\caption{VGG16}
		\label{figurevggfullrange}
	\end{subfigure}\hfill
	\begin{subfigure}{0.33\textwidth}
		\centering
		\includegraphics[width=\linewidth]{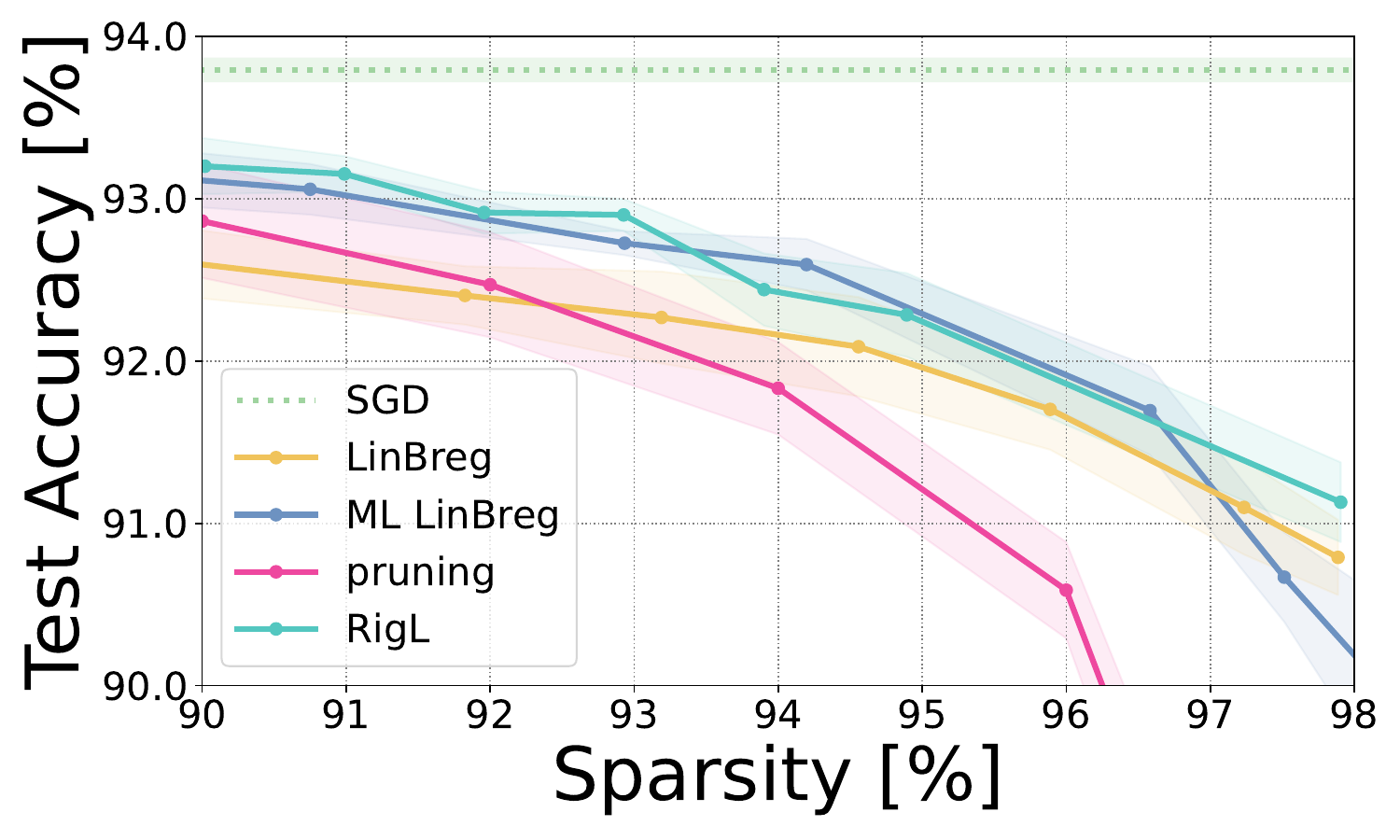}
		\caption{WideResNet28-10}
		\label{figurewideresnetfullrange}
	\end{subfigure}
	\caption{Achieved accuracy and sparsity of different optimizers on CIFAR-10 for different architectures.}
	\label{figureboxplots}
\end{figure*}

Exploiting sparsity in practice -- especially unstructured sparsity as focused on in these experiments -- is notoriously difficult, especially on GPUs.
To this end, it is common to report theoretical computational savings -- see Appendix~\ref{sec:savings} for how our method compares to standard training with SGD and LinBreg in this regard.
Furthermore, implementations of sparse training algorithms often calculate dense gradients and only simulate sparse training via masking hooks on forward and backward passes.
To illustrate that ML LinBreg can lead to real-time savings, we use SparseProp \citep{nikdanSparsePropEfficientSparse2023}, a package which has implementations of linear and convolutional layers that exploit unstructured sparsity - with the caveat that it is only applicable on CPUs.
We train ResNet18 on an AMD Ryzen 5 4500U CPU for 10 epochs where, after each fine update, we replace Torch linear and convolutional layers with SparseProp analogues if doing so leads to faster forward and backward passes.
We report the computational performance of SGD and the Torch and SparseProp implementations of ML LinBreg in Figure~\ref{fig:cpu_times}.
Utilizing SparseProp reduces training time by 49\% compared to the Torch implementation, with time spent for forward and backward passes reduced by 32\% and 58\%, respectively. 
In contrast, the Torch variant of ML LinBreg takes slightly longer than SGD since full gradients are still calculated under the hood and the optimizer step has added complexity.
\begin{figure*}[htb]
	\centering
	\begin{subfigure}{0.34\textwidth}
		\centering
		\includegraphics[width=\linewidth]{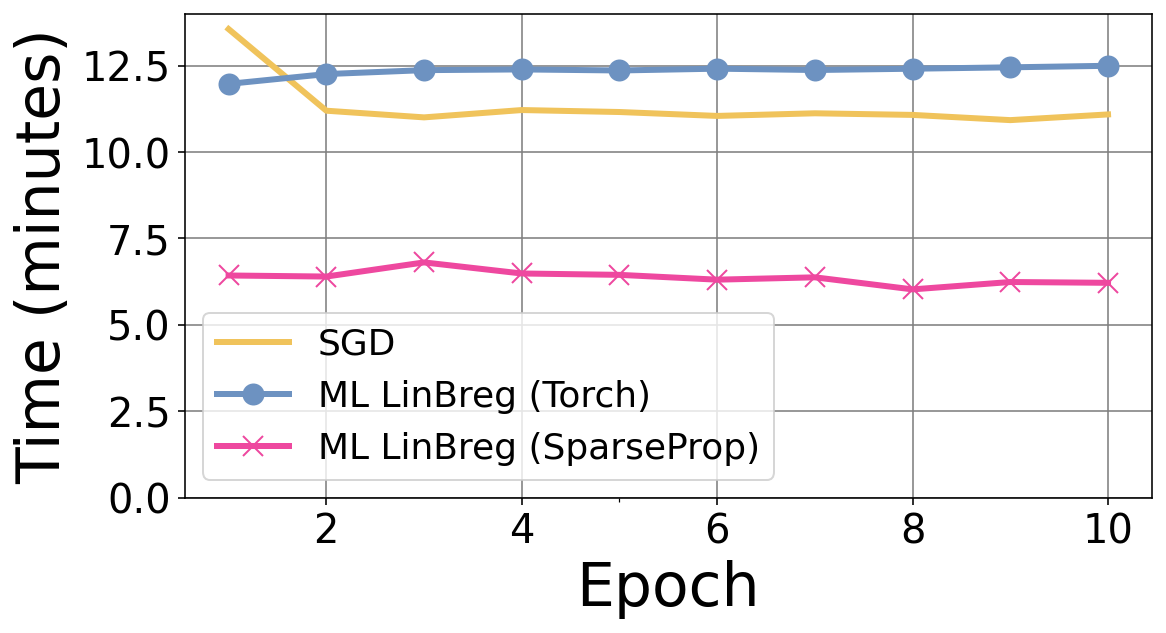}
		\caption{Per-epoch training time on CPU}
		\label{figureperepochtrainingtime}
	\end{subfigure}\hspace{5em}
	\begin{subfigure}{0.34\textwidth}
		\centering
		\includegraphics[width=\linewidth]{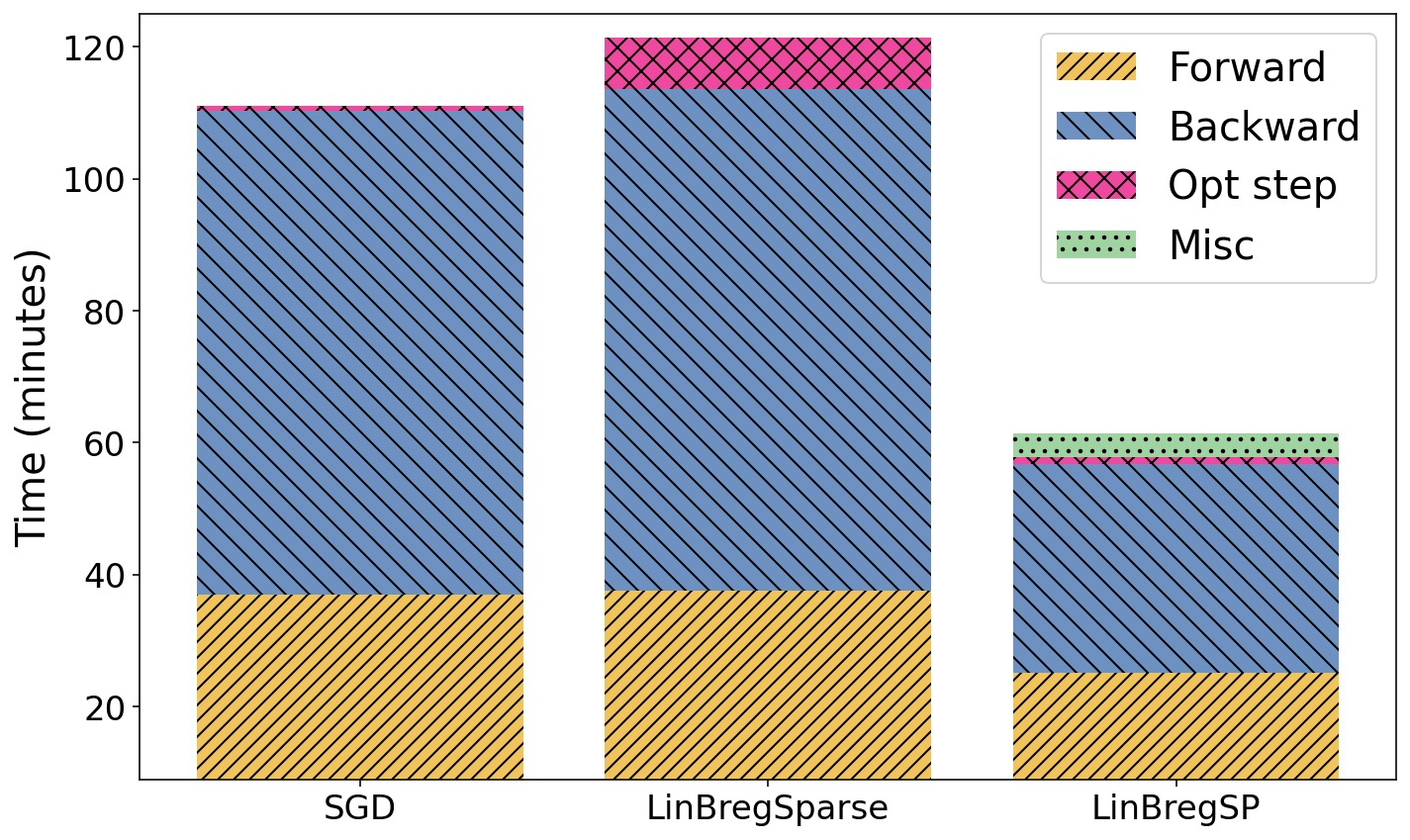}
		\caption{Breakdown of total training time on CPU}
		\label{figurebreakdowntrainingtime}
	\end{subfigure}
	\caption{Timing breakdown for training ResNet18 for 10 epochs on an AMD Ryzen 5 4500U CPU when employing SparseProp modules to exploit unstructured sparsity. 
		The sparsity induced by ML LinBreg can lead to real-time computational savings.
	}
	\label{fig:cpu_times}
\end{figure*}

Further CIFAR-10 experiments are reported in Appendix~\ref{backgroundonexperiments}, including an ablation study investigating the effect of the duration  of the coarse period and the regularization strength.

\paragraph{TinyImageNet training}
We further evaluate our approach on the TinyImageNet dataset, containing 100,000 64-by-64 color images in 200 different classes, using two different architectures.
As in the CIFAR-10 experiments, we train models with the regularizer $\reg=\lambda\|\cdot\|_1$ for different values of $\lambda$ across multiple random seeds.
The results are summarized in Figure~\ref{figuretinyimagenet}, where we present the mean values and standard deviations of the achieved test accuracies located at the sparsity of the models.
For comparison, we also include a dense baseline trained with standard SGD.
We additionally provide the results that can be achieved by pruning and fine-tuning the dense SGD models to different levels of sparsity and by training with RigL and LinBreg.
Our findings show that the proposed multilevel method consistently outperforms the standard LinBreg algorithm by producing models that are sparser while maintaining, or even improving, test accuracy.
Compared to the pruning approach, we achieve comparable or better test accuracy across all sparsity levels.

\begin{figure}[!htb]
	\centering
	\begin{subfigure}{0.34\textwidth}
		\centering
		\includegraphics[width=\linewidth]{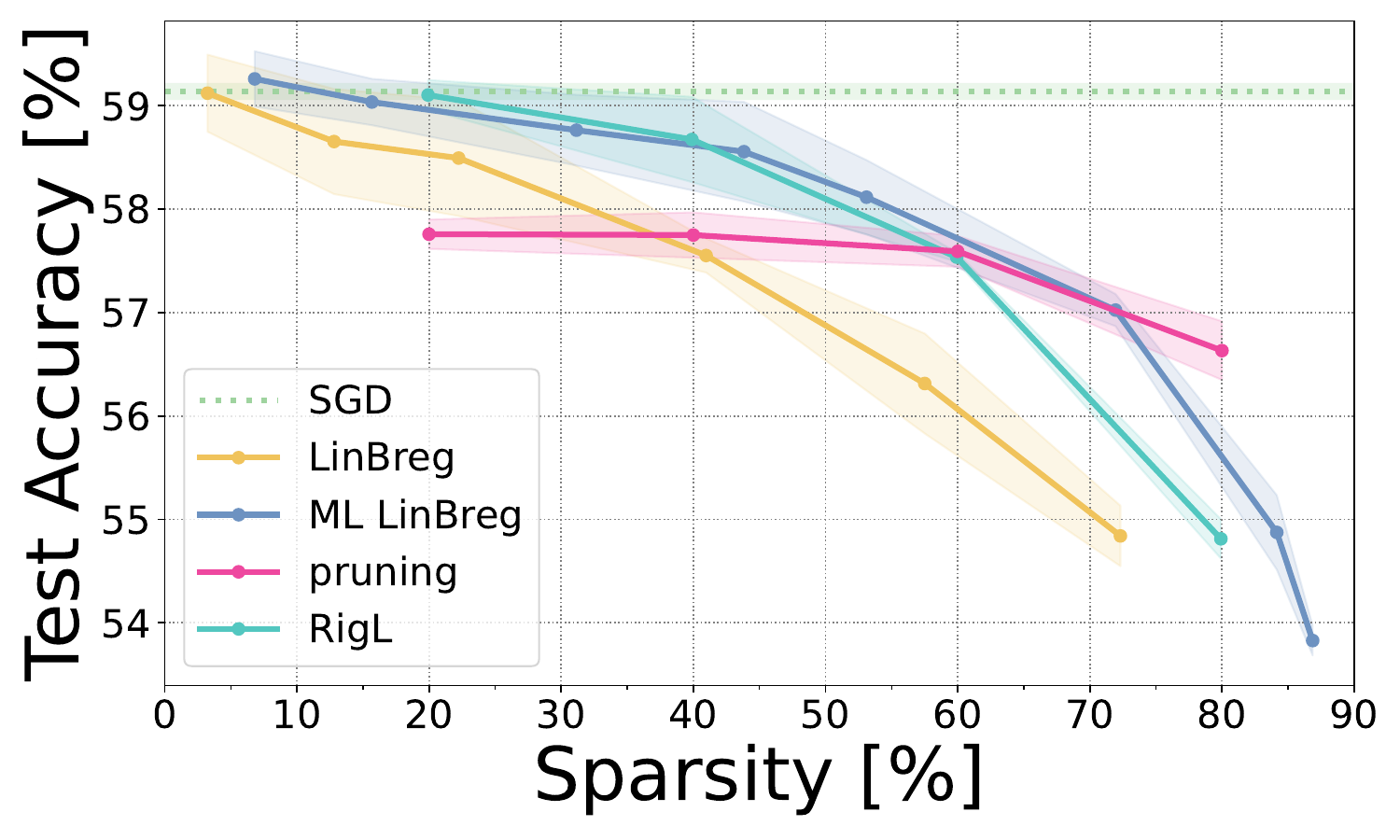}
		\caption{ResNet18}
		\label{figureResNettinyimagenet}
	\end{subfigure}\hspace{5em}
	\begin{subfigure}{0.34\textwidth}
		\centering
		\includegraphics[width=\linewidth]{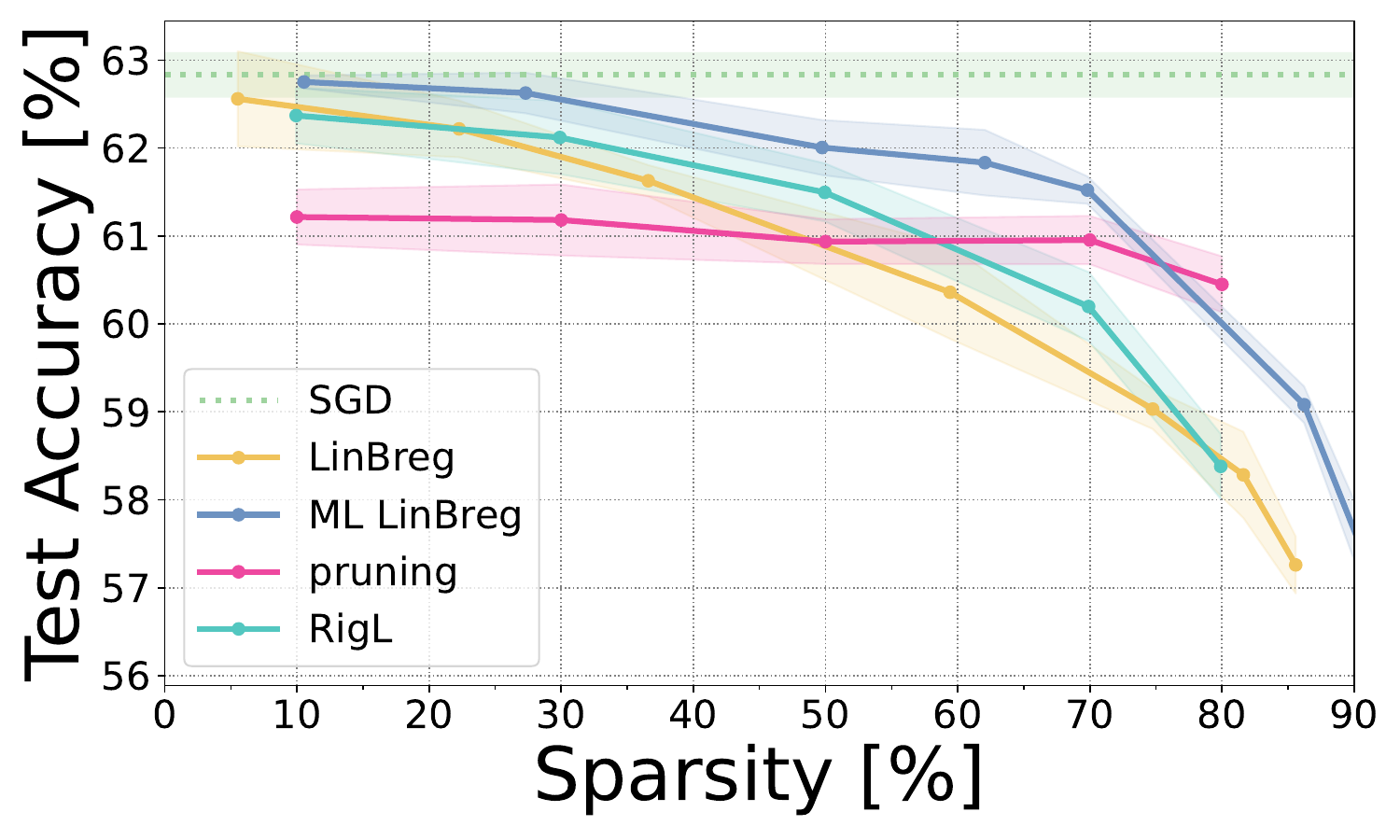}
		\caption{WideResNet28-10}
		\label{figureWRNtinyimagenet}
	\end{subfigure}
	\caption{Achieved accuracy and sparsity of different optimizers on TinyImageNet for different architectures.}
	\label{figuretinyimagenet}
\end{figure}

\section{Conclusion and Outlook}
We proposed a multilevel framework for linearized Bregman iterations in the context of sparse training. We established convergence of the function values for our method assuming a PL-type condition for the objective, and we demonstrated its effectiveness in producing sparse yet accurate models for image classification tasks.
An interesting direction for future work is to analyze the convergence of our algorithm in a fully stochastic setting, using gradient estimators in place of exact gradients for both the lower and the higher level problem.
Also, relaxing the PL-condition to a local condition of Kurdyka--Łojasiewicz type is an interesting but challenging follow-up problem.
Moreover, sparsity-informed training implementations of our method have the potential to reduce training time and resource requirements.


\subsubsection*{Acknowledgments}
 LB, YL, and SS acknowledge funding by the German Ministry of Research, Technology and Space (BMFTR) under grant agreement No. 01IS24072A (COMFORT). 

\subsubsection*{Reproducibility Statement}
The code used to produce the results is available at \href{https://anonymous.4open.science/r/SparseLinBreg/README.md}{anonymous.4open.science/r/SparseLinBreg/}.

\bibliography{bibliography}
\bibliographystyle{tmlr}

\appendix
\onecolumn
\section{Deferred proofs}
In this section, we carry out the proofs for our statements concerning the transfer of subgradients between the coarse and fine level (Propositions~\ref{subgradients1} and \ref{subgradients2}) as well as the descent on the coarse level (Lemma~\ref{coarseupdate}). 
In both cases it turns out to be helpful to consider the connection between $\hat{\reg}_\delta^{(k)}\coloneq\frac{1}{2\delta}\|\cdot\|^2+\hat{\reg}^{(k)}$, with $\hat{\reg}^{(k)}$ defined in \eqref{eq:coarsereg} and $\hat{\reg} \coloneq \reg_\delta(\theta^{(k)}+P^{(k)}(\cdot-\hat{\theta}^{0,k}))$, both of which are natural ways to restrict the regularizer $\reg_\delta$ to the coarse level. 
We observe that these two functions coincide up to an additive constant depending on $\theta^{(k)}$ implying that, in particular, these functions have the same subdifferential and induce the same Bregman divergence. 
To see that the functions are equal up to a constant, note that 
\begin{align}\label{eq:prolongation}
	(\theta^{(k)} + P^{(k)}(\hat{\theta}- \hat{\theta}^{0,k}))_{(g)} = 
	\begin{cases}
		\hat{\theta}_{(i)}, &\textrm{if }g = r^{(k)}(i), \\
		(\theta^{(k)})_{(g)}, &\textrm{otherwise},
	\end{cases}
\end{align}
due to the definitions of $P^{(k)}$ and $\hat{\theta}^{0,k}$.
Using this, it is straightforward to compute
\begin{align*}
	\hat{\reg}(\hat{\theta}) &= \frac{1}{2\delta}\|\theta^{(k)}+P^{(k)}(\hat{\theta}-\hat{\theta}^{0,k})\|^2 + \reg(\theta^{(k)}+P^{(k)}(\hat{\theta}-\hat{\theta}^{0,k}))\\
	&= \frac{1}{2\delta}\|\hat{\theta}\|^2 + \frac{1}{2\delta}\sum_{g\in X}\|(\theta^{(k)})_{(g)}\|^2 + \sum_{i=1}^{G^{(k)}}\reg_{r^{(k)}(i)}(\hat{\theta}_{(i)}) + \sum_{g\in X}\reg_g((\theta^{(k)})_{(g)}) \\
	&= \hat{\reg}_\delta^{(k)}(\hat{\theta}) + \mathrm{const}(\theta^{(k)}),
\end{align*}
where we use the abbreviation
\begin{align*}
	X\coloneq\lbrace1,2,\dots,G\rbrace\setminus r^{(k)}(\lbrace 1,2,\dots, G^{(k)}\rbrace)
\end{align*}
to denote all groups that are not selected by $R^{(k)}$.
Hence, $\hat{\reg}_\delta^{(k)}$ and $\hat{\reg}$ indeed coincide up to an additive constant depending on $\theta^{(k)}$.

\subsection{Proofs for Section~\ref{method}}\label{sec:proofs}
Having established that $\hat{\reg}_\delta^{(k)}$ and $\hat{\reg}$ have the same subdifferential, we can prove Proposition~\ref{subgradients1}. 

\begin{proof}[Proof of Proposition~\ref{subgradients1}]
	Due to the sum and chain rule for subdifferentials, 
	\begin{align*}
		\partial \hat{\reg}(\hat{\theta}) = \frac{1}{\delta}\hat{\theta} + (P^{(k)})^T\partial \reg(\theta^{(k)}+P^{(k)}(\hat{\theta}-\hat{\theta}^{0,k})).
	\end{align*}
	for $\hat{\theta}\in\real^{D_k}$ such that $\reg(\theta^{(k)}+P^{(k)}(\hat{\theta}-\hat{\theta}^{0,k})) < \infty$ and $\partial \reg(\theta^{(k)}+P^{(k)}(\hat{\theta}-\hat{\theta}^{0,k})) \neq \emptyset$.
	Consequently, since we have already established that $\hat{\reg}_\delta^{(k)}$ and $\hat{\reg}$ share the same subdifferential, for $\hat{\theta} = \hat{\theta}^{0,k}$, we have
	\begin{align*}
		\partial \hat{\reg}_\delta^{(k)}(\hat{\theta}^{0,k}) &= \partial\hat{\reg}(\hat{\theta}^{0,k}) \\
		&= \frac{1}{\delta}\hat{\theta}^{0,k} + R^{(k)}\partial \reg(\theta^{(k)}) \\
		&= R^{(k)}\partial \reg_\delta (\theta^{(k)}).
	\end{align*}
	Obviously, $\hat{v}^{0,k} = R^{(k)}v^{(k)}$ belongs to the latter, concluding the proof.
\end{proof}

\begin{proof}[Proof of Proposition~\ref{subgradients2}]
	Using \eqref{eq:prolongation} with $\hat{\theta} = \hat{\theta}^{m,k}$ together with \eqref{eq:Jgroups}, it is easy to see that
	\begin{align*}
		\partial_{(g)}\reg_\delta(\tilde{\theta}^{(k+1)}) &= \frac{1}{\delta}\tilde{\theta}^{(k+1)}_{(g)} + \partial \reg_g((\tilde{\theta}^{(k+1)})_{(g)}) \\
		&= \begin{cases}
			\partial_{(i)}\hat{\reg}_\delta^{(k)}((\hat{\theta}^{m,k})_{(i)}), &\textrm{if }g = r^{(k)}(i), \\
			\partial_{(g)}\reg_\delta(\theta^{(k)}), &\textrm{otherwise.}
		\end{cases}
	\end{align*}
	Since by definition
	\begin{align*}
		\tilde{v}^{(k+1)} = 
		\begin{cases}
			(\hat{v}^{m,k})_{(i)}, &\textrm{if }g = r^{(k)}(i), \\
			(v^{(k)})_{(g)}, &\textrm{otherwise},
		\end{cases}
	\end{align*}
	together with $\hat{v}^{m,k}\in\partial \hat{\reg_\delta}^{(k)}(\hat{\theta}^{m,k})$ and $v^{(k)} \in\partial \reg_\delta (\theta^{(k)})$, we obtain overall that $\tilde{v}^{(k+1)} \in\partial \reg_\delta(\tilde{\theta}^{(k+1)})$.
\end{proof}

\subsection{Proof of Lemma~\ref{coarseupdate}}\label{sec:proofcoarselemma}
In this section, we prove Lemma~\ref{coarseupdate}. 
Before being able to do so, we show that the coarse objective $\hat{\loss}^{(k)}$ is $L$-smooth with respect to the coarse regularizer $\hat{\reg}_\delta^{(k)}$ if $\loss$ is $L$-smooth relative to $\reg_\delta$. 
To do so, we will make use of the fact that $\hat{\reg}_\delta^{(k)}$ and $\hat{\reg}$ induce the same Bregman divergence. 

\begin{lemma}\label{coarserelsmooth}
	Let Assumption~\ref{smoothassumption} be satisfied. Then, the coarse objective $\hat{\loss}^{(k)}$ defined in \eqref{eq:coarseobjective} is $L$-smooth relative to $\hat{\reg}_\delta^{(k)}$. 
\end{lemma}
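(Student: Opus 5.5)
The plan is to pull Assumption~\ref{smoothassumption} back through the affine map $\phi(\hat\theta)\coloneq\theta^{(k)}+P^{(k)}(\hat\theta-\hat\theta^{0,k})$. The key fact is the identity established at the beginning of this appendix: $\hat\reg\coloneq\reg_\delta\circ\phi$ and $\hat\reg_\delta^{(k)}$ differ only by a constant depending on $\theta^{(k)}$. Hence they have the same subdifferential and induce the same Bregman divergence. By definition, $\hat\loss^{(k)}=\loss\circ\phi$, so $\nabla\hat\loss^{(k)}(\hat\theta)=R^{(k)}\nabla\loss(\phi(\hat\theta))$ by the chain rule, using $(P^{(k)})^T=R^{(k)}$.

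Fix $\hat\theta\in\mathrm{dom}(\partial\hat\reg^{(k)})$, a subgradient $\hat v\in\partial\hat\reg_\delta^{(k)}(\hat\theta)$, and an arbitrary $\tilde{\hat\theta}$. Set $\theta\coloneq\phi(\hat\theta)$ and $\tilde\theta\coloneq\phi(\tilde{\hat\theta})$. Next I need a fine-level subgradient $v\in\partial\reg_\delta(\theta)$ with $R^{(k)}v=\hat v$. For this I use the groupwise structure \eqref{eq:Jgroups}, exactly as in the proof of Proposition~\ref{subgradients2}. On the selected groups $g=r^{(k)}(i)$, set $v_{(g)}=\hat v_{(i)}$. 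On the unselected groups $g\in X$, take $v_{(g)}=(v^{(k)})_{(g)}$, or any element of $\partial_{(g)}\reg_\delta(\theta^{(k)})$; these groups of $\theta$ coincide with those of $\theta^{(k)}$ by \eqref{eq:prolongation}. I expect this to be the only real obstacle. If no fine subgradient at $\theta^{(k)}$ is assumed, the unselected groups need $\partial\reg_g((\theta^{(k)})_{(g)})\neq\emptyset$. In the algorithm this holds because $v^{(k)}\in\partial\reg_\delta(\theta^{(k)})$, so I will make this standing hypothesis explicit. Since $\partial\reg_\delta$ splits over groups, this shows that $\theta\in\mathrm{dom}(\partial\reg)$ and $v\in\partial\reg_\delta(\theta)$.

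Now apply Assumption~\ref{smoothassumption} at $(\theta,v)$ with point $\tilde\theta$. The left side is $\loss(\tilde\theta)=\hat\loss^{(k)}(\tilde{\hat\theta})$, and the constant term is $\loss(\theta)=\hat\loss^{(k)}(\hat\theta)$. The linear term becomes
\[
\langle\nabla\loss(\theta),P^{(k)}(\tilde{\hat\theta}-\hat\theta)\rangle=\langle R^{(k)}\nabla\loss(\theta),\tilde{\hat\theta}-\hat\theta\rangle=\langle\nabla\hat\loss^{(k)}(\hat\theta),\tilde{\hat\theta}-\hat\theta\rangle.
\]
For the Bregman term, $\tilde\theta-\theta=P^{(k)}(\tilde{\hat\theta}-\hat\theta)$, so
\[
D^{v}_{\reg_\delta}(\tilde\theta,\theta)=\hat\reg(\tilde{\hat\theta})-\hat\reg(\hat\theta)-\langle R^{(k)}v,\tilde{\hat\theta}-\hat\theta\rangle=D^{\hat v}_{\hat\reg_\delta^{(k)}}(\tilde{\hat\theta},\hat\theta).
\]
Here the last equality uses $R^{(k)}v=\hat v$ and the fact that the additive constant cancels. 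Combining the three terms gives exactly the relative smoothness inequality for $\hat\loss^{(k)}$ with respect to $\hat\reg_\delta^{(k)}$, with the same constant $L$.
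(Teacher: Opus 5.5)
Your proposal is correct and follows the paper's proof essentially step by step. The paper also builds the fine subgradient $\tilde v = v^{(k)} + P^{(k)}(\hat v - \hat v^{0,k})$, which is exactly your groupwise construction; it then applies Assumption~\ref{smoothassumption} at $\tilde\theta$ and uses $(P^{(k)})^T\tilde v = R^{(k)}\tilde v = \hat v$ together with the fact that $\hat\reg$ and $\hat\reg_\delta^{(k)}$ differ by a constant. Your explicit standing hypothesis $v^{(k)}\in\partial\reg_\delta(\theta^{(k)})$ is one the paper relies on only implicitly through this definition of $\tilde v$.
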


\begin{proof}
	Let $\hat{\theta}\in\operatorname{dom}\partial \reg$ with $\hat{v}\in\partial \hat{\reg}(\hat{\theta})$ and $\hat{\theta}^\prime\in\real^{D_k}$.
	We define
	\begin{align*}
		\tilde{v} &\coloneq v^{(k)} + P^{(k)}(\hat{v}-\hat{v}^{0,k}), \\
		\tilde{\theta} &\coloneq \theta^{(k)} + P^{(k)}(\hat{\theta} - \hat{\theta}^{0,k}),
	\end{align*}
	and note that this leads to $\tilde{v} \in\partial \reg_\delta(\tilde{\theta})$ following from the same argumentation as in the proof of Proposition~\ref{subgradients2}.
	In particular, $\tilde{\theta}\in\operatorname{dom} \partial \reg$. 
	Hence, from the $L$-smoothness of $\loss$ relative to $\reg_\delta$, we deduce
	\begin{align*}
		\hat{\loss}^{(k)}(\hat{\theta}^\prime) - \hat{\loss}^{(k)}(\hat{\theta}) -  \langle\nabla\loss(\tilde{\theta}), P^{(k)}(\hat{\theta}^\prime-\hat{\theta})\rangle \leq L(\hat{\reg}(\hat{\theta}^\prime) - \hat{\reg}(\hat{\theta}) - \langle \tilde{v}, P^{(k)}(\hat{\theta}^\prime-\hat{\theta})\rangle)
	\end{align*}
	and using $(P^{(k)})^T\nabla\loss(\tilde{\theta}) = \nabla\hat{\loss}^{(k)}(\hat{\theta})$, we obtain
	\begin{align*}
		\hat{\loss}^{(k)}(\hat{\theta}^\prime) - \hat{\loss}^{(k)}(\hat{\theta}) -  \langle\nabla\hat{\loss}^{(k)}(\hat{\theta}), \hat{\theta}^\prime-\hat{\theta}\rangle \leq L(\hat{\reg}(\hat{\theta}^\prime) - \hat{\reg}(\hat{\theta}) - \langle (P^{(k)})^T\tilde{v}, \hat{\theta}^\prime-\hat{\theta}\rangle).
	\end{align*}
	Due to the definition of $\tilde{v}$, we have $(P^{(k)})^T\tilde{v} = R^{(k)}\tilde{v} = \hat{v}$, since $R^{(k)}P^{(k)} = \id$.
	Consequently, 
	\begin{align*}
		\hat{\loss}^{(k)}(\hat{\theta}^\prime) - \hat{\loss}^{(k)}(\hat{\theta}) -  \langle\nabla\hat{\loss}^{(k)}(\hat{\theta}), \hat{\theta}^\prime-\hat{\theta}\rangle \leq L D_{\hat{\reg}}^{\hat{v}}(\hat{\theta}^\prime,\hat{\theta}),
	\end{align*}
	meaning that $\hat{\loss}^{(k)}$ is $L$-smooth relative to $\hat{\reg}$.
	Since we have already established that $\hat{\reg}$ and $\hat{\reg}_\delta^{(k)}$ induce the same Bregman divergence, relative smoothness with respect to these two functions is equivalent. 
\end{proof}  

Having established the relative smoothness of the coarse objective, we are in the position to prove Lemma~\ref{coarseupdate}.

\begin{proof}[Proof of Lemma~\ref{coarseupdate}]
	By Lemma~\ref{coarserelsmooth}, $\hat{\loss}^{(k)}$ is $L$-smooth relative to $\hat{\reg}_\delta^{(k)}$. Hence, we obtain
	\begin{align*}
		\hat{\loss}^{(k)}(\hat{\theta}^{i+1,k}) &\leq \hat{\loss}^{(k)}(\hat{\theta}^{i,k}) + \langle \nabla\hat{\loss}^{(k)}(\hat{\theta}^{i,k}),\hat{\theta}^{i+1,k}- \hat{\theta}^{i,k}\rangle + L D_{\hat{\reg}_\delta^{(k)}}^{\hat{v}^{i,k}}(\hat{\theta}^{i+1,k},\hat{\theta}^{i,k}) \\
		&= \hat{\loss}^{(k)}(\hat{\theta}^{i,k}) + \langle \hat{g}^{(k)}(\hat{\theta}^{i,k},\omega),\hat{\theta}^{i+1,k}- \hat{\theta}^{i,k}\rangle \\
		&\quad+ \langle \nabla\hat{\loss}^{(k)}(\hat{\theta}^{i,k}) - \hat{g}^{(k)}(\hat{\theta}^{i,k},\omega),\hat{\theta}^{i+1,k}- \hat{\theta}^{i,k}\rangle  + L D_{\hat{\reg}_\delta^{(k)}}^{\hat{v}^{i,k}}(\hat{\theta}^{i+1,k},\hat{\theta}^{i,k}) \\
		&\leq \hat{\loss}^{(k)}(\hat{\theta}^{i,k}) - \frac{1}{\hat{\tau}^{i,k}}D_{\hat{\reg}_\delta^{(k)}}^{\mathrm{sym}}(\hat{\theta}^{i+1,k},\hat{\theta}^{i,k}) + \frac{\varepsilon\hat{\tau}^{i,k}}{2} \|\nabla\hat{\loss}^{(k)}(\hat{\theta}^{i,k}) - \hat{g}^{(k)}(\hat{\theta}^{i,k},\omega)\|^2 \\
		&\quad+ \frac{1}{2\varepsilon\hat{\tau}^{i,k}} \|\hat{\theta}^{i+1,k}-\hat{\theta}^{i,k}\|^2 + L D_{\hat{\reg}_\delta^{(k)}}^{\hat{v}^{i,k}}(\hat{\theta}^{i+1,k},\hat{\theta}^{i,k}) \\
		&\leq \hat{\loss}^{(k)}(\hat{\theta}^{i,k}) - \frac{2\varepsilon-2L\varepsilon\hat{\tau}^{i,k}-\delta}{2\varepsilon\hat{\tau}^{i,k}}D_{\hat{\reg}_\delta^{(k)}}^{\mathrm{sym}}(\hat{\theta}^{i+1,k},\hat{\theta}^{i,k}) \\
		&\quad+ \frac{\varepsilon\hat{\tau}^{i,k}}{2}\|\nabla\hat{\loss}^{(k)}(\hat{\theta}^{i,k}) - \hat{g}^{(k)}(\hat{\theta}^{i,k},\omega)\|^2
	\end{align*}
	for any $\varepsilon>0$ using Young's inequality. Consequently, taking the expectation conditioned on $\hat{\theta}^{i,k}$, that we denote by $\IE_{i,k}$ for simplicity, and letting $\varepsilon=\delta$, we arrive at
	\begin{align*}
		\IE_{i,k}\left[\hat{\loss}^{(k)}(\hat{\theta}^{i+1,k})\right] \leq \hat{\loss}^{(k)}(\hat{\theta}^{i,k}) - \frac{\delta-2L\delta\hat{\tau}^{i,k}}{2\delta\hat{\tau}^{i,k}}\IE_{i,k}\left[D_{\hat{\reg}_\delta^{(k)}}^{\mathrm{sym}}(\hat{\theta}^{i+1,k},\hat{\theta}^{i,k})\right] + \frac{\delta\hat{\tau}^{i,k}}{2}\sigma^2.
	\end{align*}
	Thus, for $\hat{\tau}^{i,k} \leq \frac{1}{4L}$, taking the conditional expectation given $\theta^{(k)}$, that we denote by $\IE_k$, and using the tower property, iterating this inequality leads to
	\begin{align*}
		\IE_k\left[\hat{\loss}^{(k)}(\hat{\theta}^{i+1,k})\right]  &\leq \hat{\loss}^{(k)}(\hat{\theta}^{0,k}) - \sum_{j=0}^i \frac{1}{4\hat{\tau}^{j,k}} \IE_k\left[D_{\hat{\reg}_\delta^{(k)}}^{\mathrm{sym}}(\hat{\theta}^{j+1,k},\hat{\theta}^{j,k})\right] + \sum_{j=0}^i\frac{\delta\sigma^2}{2}\hat{\tau}^{j,k}.
	\end{align*}
	In particular for $i=m-1$, we obtain
	\begin{align*}
		\IE_k\left[\hat{\loss}^{(k)}(\hat{\theta}^{m,k})\right] &\leq \hat{\loss}^{(k)}(\hat{\theta}^{0,k}) - \sum_{j=0}^{m-1} \frac{1}{4\hat{\tau}^{j,k}} \IE_k\left[D_{\hat{\reg}_\delta^{(k)}}^{\mathrm{sym}}(\hat{\theta}^{j+1,k},\hat{\theta}^{j,k})\right] + \sum_{j=0}^{m-1}\frac{\delta\sigma^2}{2}\hat{\tau}^{j,k}.
	\end{align*}
	Since $\loss(\tilde{\theta}^{(k+1)}) = \hat{\loss}^{(k)}(\hat{\theta}^{m,k})$ and $\loss(\theta^{(k)}) = \hat{\loss}^{(k)}(\hat{\theta}^{0,k})$, this yields the desired estimate. 
\end{proof}

\section{Convergence in the case of finite sums}\label{finitesums}

In this section, we take a slightly different perspective and demonstrate that multilevel optimization methods can be interpreted as variance reduction methods.
To this end, we consider a slightly modified algorithm compared to Algorithm~\ref{alg:MLLinBreg}.
Specifically, we use a different objective function on the coarse level.
Moreover, we analyze the resulting algorithm solely from a theoretical perspective without conducting numerical experiments.

For our analysis in this section, we assume that the loss $\loss$ has a finite sum structure
\begin{align*}
	\loss(\theta) = \frac{1}{n}\sum_{j=1}^n\loss_j(\theta)
\end{align*}
and each $\loss_j$ has a Lipschitz-continuous gradient with Lipschitz-constant $L>0$. 
In this setting, we use a mini-batch approximation of the loss as the loss of the coarse level.
To be precise, we sample a mini-batch $J^{i,k}$ of size $b<n$ from $[n]\coloneq\{1,\dots,n\}$ at the $i$-th coarse step within the $k$-th total cycle in Algorithm~\ref{alg:MLLinBreg} and define 
\begin{align*}
	\hat{\loss}^{i,k}(\hat{\theta}) \coloneq \frac{1}{b}\sum_{j\in J^{i,k}} \loss_j(\theta^{(k)}+P^{(k)}(\hat{\theta}-\hat{\theta}^{0,k})).
\end{align*}
as the coarse loss.
As usual in multilevel optimization algorithms \citep{elshiaty2025, nash2000}, we add a linear correction term to this coarse loss to ensure first-order coherence. 
Precisely, the function, we aim at minimizing in the $i$-th coarse step of the $k$-th total cycle is given by 
\begin{align}\label{eq:coarseobjectivevariancereduction}
	\hat{\Psi}^{i,k}(\hat{\theta}) \coloneq \hat{\loss}^{i,k}(\hat{\theta}) + \langle R^{(k)}\nabla\loss(\theta^{(k)})-\nabla\hat{\loss}^{i,k}(\hat{\theta}^{0,k}), \hat{\theta}-\hat{\theta}^{0,k}\rangle.
\end{align}
Computing the gradient of this function yields
\begin{align*}
	\nabla\hat{\Psi}^{i,k}(\hat{\theta}) = \nabla\hat{\loss}^{i,k}(\hat{\theta}) + R^{(k)}\nabla\loss(\theta^{(k)})-\nabla\hat{\loss}^{i,k}(\hat{\theta}^{0,k}).
\end{align*}
In particular, if $\theta^{(k)}$ is a critical point of $\loss$, then $\hat{\theta}^{0,k} = R^{(k)}\theta^{(k)}$ is a critical point of $\hat{\Psi}^{i,k}$ for any $i=1,\dots,m$. 
A different perspective on the gradient of the coarse objective is to see it as a variance reduction method. 
Making use of the special finite sum structure of both $\loss$ and $\hat{\loss}^{i,k}$, we compute 
\begin{align*}
	\nabla\hat{\Psi}^{i,k}(\hat{\theta}) = \frac{1}{b}\sum_{j\in J^{i,k}} R^{(k)}\nabla\loss_j(\hat{\theta}) + R^{(k)}\frac{1}{n}\sum_{j=0}^n\nabla\loss_j(\theta^{(k)}) - \frac{1}{b}\sum_{j\in J^{i,k}} R^{(k)}\nabla\loss_j(\theta^{(k)}),
\end{align*}
where we use the abbreviation $\tilde{\theta} \coloneq \theta^{(k)}+P^{(k)}(\hat{\theta}-\hat{\theta}^{0,k})$.
Thus, $\nabla\hat{\Psi}^{i,k}$ is a combination of current and previous gradients, providing scope for variance reduction. 
The connection between multilevel optimization and variance reduction has already been investigated, e.g. in \citet{marini2024multilevel, braglia2020}.
We would in particular like to point out the similarity to stochastic variance reduced gradient (SVRG) \citep{johnson2013svrg}, which uses the same ideas for variance reduction.
As before, we denote
\begin{align*}
	\hat{\loss}^{(k)}(\hat{\theta}) \coloneq \loss(\theta^{(k)}+P^{(k)}(\hat{\theta}-\hat{\theta}^{0,k})) = \frac{1}{n}\sum_{j=1}^n \loss_j(\theta^{(k)}+P^{(k)}(\hat{\theta}-\hat{\theta}^{0,k}))
\end{align*}
as the actual (non-stochastic) objective, we would like to minimize on the coarse level.
We observe that $\nabla\hat{\Psi}^{i,k}(\hat{\theta})$ is an unbiased estimator of $\nabla\hat{\loss}^{(k)}(\hat{\theta})$ and we can prove a variance bound.

\begin{lemma}\label{variancereduction}
	$\nabla\hat{\Psi}^{i,k}(\hat{\theta})$ is an unbiased estimator of $\nabla\hat{\loss}^{(k)}(\hat{\theta})$, i.e.
	\begin{align*}
		\IE\left[\nabla\hat{\Psi}^{i,k}(\hat{\theta})\right] = \nabla\hat{\loss}^{(k)}(\hat{\theta})
	\end{align*}
	and satisfies the variance bound 
	\begin{align*}
		\IE\left[\|\nabla\hat{\Psi}^{i,k}(\hat{\theta})-\nabla\hat{\loss}^{(k)}(\hat{\theta})\|^2\right] &\leq \frac{L^2(n-b)}{b(n-1)}\|\hat{\theta}-\hat{\theta}^{0,k}\|^2, 
	\end{align*}
	where all expected values are taken with respect to the mini-batch sampling. 
\end{lemma}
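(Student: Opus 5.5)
The plan is to write the estimator as a sample mean of centered random vectors and then apply the standard variance formula for sampling without replacement. Fix $k$, $i$ and $\hat\theta$, and set $\tilde\theta \coloneq \theta^{(k)}+P^{(k)}(\hat\theta-\hat\theta^{0,k})$. For $j\in[n]$ define
\begin{align*}
a_j \coloneq R^{(k)}\bigl(\nabla\loss_j(\tilde\theta)-\nabla\loss_j(\theta^{(k)})\bigr),\qquad \bar a \coloneq \frac1n\sum_{j=1}^n a_j .
\end{align*}
By the chain rule, $\nabla\hat\loss^{i,k}(\hat\theta)=\frac1b\sum_{j\in J^{i,k}}R^{(k)}\nabla\loss_j(\tilde\theta)$ and $\nabla\hat\loss^{(k)}(\hat\theta)=R^{(k)}\nabla\loss(\tilde\theta)$. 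Substituting these into the gradient formula for $\hat\Psi^{i,k}$ gives
\begin{align*}
\nabla\hat\Psi^{i,k}(\hat\theta)-\nabla\hat\loss^{(k)}(\hat\theta)=\frac1b\sum_{j\in J^{i,k}}a_j-\bar a .
\end{align*}

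Unbiasedness follows directly. The mini-batch is drawn uniformly, so each index $j$ lies in $J^{i,k}$ with probability $b/n$. Hence $\IE\bigl[\frac1b\sum_{j\in J^{i,k}}a_j\bigr]=\bar a$, and the difference above has mean zero.

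For the variance, I will take $J^{i,k}$ to be a uniformly random $b$-subset drawn without replacement. The factor $\frac{n-b}{b(n-1)}$ in the statement indicates this is the intended model. The classical finite-population identity then gives
\begin{align*}
\IE\Bigl\|\frac1b\sum_{j\in J^{i,k}}a_j-\bar a\Bigr\|^2=\frac{n-b}{b(n-1)}\cdot\frac1n\sum_{j=1}^n\|a_j-\bar a\|^2 .
\end{align*}
I will derive this identity rather than quote it. Expand the square using indicator variables $\xi_j=1_{j\in J^{i,k}}$, which satisfy $\IE\xi_j=b/n$ and $\IE\xi_j\xi_l=\frac{b(b-1)}{n(n-1)}$ for $j\neq l$. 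Then use $\sum_j(a_j-\bar a)=0$ to collapse the cross terms. This bookkeeping is the main computational step, though it is routine.

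Next I bound the population variance by the second moment: $\frac1n\sum_j\|a_j-\bar a\|^2\le\frac1n\sum_j\|a_j\|^2$. Then I use that $R^{(k)}$ merely selects coordinates, so $\|R^{(k)}\|\le 1$. Combined with $L$-Lipschitz continuity of $\nabla\loss_j$, this gives $\|a_j\|\le L\|\tilde\theta-\theta^{(k)}\|$. Finally, $\|\tilde\theta-\theta^{(k)}\|=\|P^{(k)}(\hat\theta-\hat\theta^{0,k})\|=\|\hat\theta-\hat\theta^{0,k}\|$, since $P^{(k)}$ is an isometric embedding: its columns are distinct standard unit vectors. Chaining these three facts yields the stated bound.
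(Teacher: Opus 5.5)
Your proof is correct and follows the same route as the paper. It uses the same centered decomposition $\frac1b\sum_{j\in J^{i,k}}a_j-\bar a$, the variance-by-second-moment bound, and the Lipschitz estimate combined with $\|R^{(k)}\|\le 1$ and the isometry of $P^{(k)}$; the one difference is that you derive the without-replacement factor $\frac{n-b}{b(n-1)}$ via inclusion indicators, whereas the paper handles $b=1$ and only asserts that the variance scales by this factor for $b>1$.
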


\begin{proof}    
	Taking the expected value with respect to drawing the mini-batch, we can easily compute
	\begin{align*}
		\IE\left[\nabla\hat{\Psi}^{i,k}(\hat{\theta})\right] = R^{(k)}\nabla\loss(\theta^{(k)}+P^{(k)}(\hat{\theta}-\hat{\theta}^{0,k})) = \nabla\hat{\loss}^{(k)}(\hat{\theta}).
	\end{align*}
	Moreover, turning towards the variance, we observe that 
	\begin{align*}
		\IE\left[\|\nabla\hat{\Psi}^{i,k}(\hat{\theta})-\nabla\hat{\loss}^{(k)}(\hat{\theta})\|^2\right] 
		&= \IE\left[\|\nabla\hat{\loss}^{i,k}(\hat{\theta})-\nabla\hat{\loss}^{i,k}(\hat{\theta}^{0,k}) -( \nabla\hat{\loss}^{(k)}(\hat{\theta})-\nabla\hat{\loss}^{(k)}(\hat{\theta}^{0,k}))\|^2\right] \\
		&= \IE\left[\|\frac{1}{b}\sum_{j\in J^{i,k}}R^{(k)}(\nabla\loss_j(\tilde\theta)-\nabla\loss_j(\theta^{(k)})) -( \nabla\hat{\loss}^{(k)}(\hat{\theta})-\nabla\hat{\loss}^{(k)}(\hat{\theta}^{0,k}))\|^2\right] 
	\end{align*}
	using the abbreviation $\tilde\theta \coloneq \theta^{(k)}+P^{(k)}(\hat{\theta}-\hat{\theta}^{0,k})$.
	In the case $b=1$, we can estimate the variance by the second moment and arrive at
	\begin{align*}
		&\IE_{j\sim\mathrm{Unif}(1,\dots,n)}\left[\|R^{(k)}(\nabla\loss_j(\tilde\theta)-\nabla\loss_j(\theta^{(k)})) -( \nabla\hat{\loss}^{(k)}(\hat{\theta})-\nabla\hat{\loss}^{(k)}(\hat{\theta}^{0,k}))\|^2\right] \\
		&\quad \leq \IE_{j\sim\mathrm{Unif}(1,\dots,n)}\left[\|R^{(k)}(\nabla\loss_j(\theta^{(k)}+P^{(k)}(\hat{\theta}-\hat{\theta}^{0,k}))-\nabla\loss_j(\theta^{(k)}))\|^2\right] \\
		&\quad\leq \|R^{(k)}\|^2L^2\|\theta^{(k)}+P^{(k)}(\hat{\theta}-\hat{\theta}^{0,k}) - \theta^{(k)}\|^2 \\
		&\quad \leq L^2\|\hat{\theta}-\hat{\theta}^{0,k}\|^2,
	\end{align*}
	using the Lipschitz continuity of $\nabla\loss_j$ and the fact that $\|R^{(k)}\|, \|P^{(k)}\|\leq 1$.
	In the case $b>1$, since we draw without replacement, the variance scales with $\frac{n-b}{b(n-1)}$, leading to 
	\begin{align*}
		\IE\left[\|\nabla\hat{\Psi}^{i,k}(\hat{\theta})-\nabla\hat{\loss}^{(k)}(\hat{\theta})\|^2\right] &\leq \frac{L^2(n-b)}{b(n-1)}\|\hat{\theta}-\hat{\theta}^{0,k}\|^2.
	\end{align*}
\end{proof}
In this setting, the linearized Bregman iteration scheme on the coarse level reads 
\begin{align*}
	&\hat{g}^{i,k} = \nabla\hat{\Psi}^{i,k}, \\
	&\hat{v}^{i+1,k} = \hat{v}^{i,k}-\hat{\tau}\hat{g}^{i,k}, \\
	&\hat{\theta}^{i+1,k} = \prox_{\delta\hat{\reg}^{(k)}}(\delta \hat{v}^{i+1,k}).
\end{align*}
We investigate the descent of $\hat{\loss}^{(k)}$ during the coarse level iterations and assume that the number of steps on the coarse level $m$ is at least $2$. 
Note that for our setup to make sense, we require the full gradient of the loss at $\theta^{(k)}$ and therefore $m=1$ would correspond to no stochasticity at all.
\begin{lemma}\label{varianceredudedcoarseupdate}
	If the step size $\hat{\tau}$ is chosen such that
	\begin{align*}
		\hat{\tau} \leq \min\left\lbrace\frac{1}{2L\delta}, \sqrt{\frac{b(n-1)}{2\delta^2(n-b)m(m-1)}}\right\rbrace
	\end{align*}
	and iteration $k$ triggers a coarse update, then
	\begin{align*}
		\IE\left[\loss(\tilde{\theta}^{(k+1)})\right] &\leq \IE\left[\loss(\theta^{(k)})\right] - \frac{1}{\hat{\tau}}\sum_{j=1}^{m}\IE\left[D_{\hat{\reg}^{(k)}}^\mathrm{sym}(\hat{\theta}^{j,k}, \hat{\theta}^{j-1,k})\right] - \frac{1}{8\delta\hat{\tau}}\sum_{j=1}^{m}\IE\left[\|\hat{\theta}^{j,k}-\hat{\theta}^{j-1,k}\|^2\right].
	\end{align*}
\end{lemma}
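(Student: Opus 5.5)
We follow the template of the proof of Lemma~\ref{coarseupdate}, but now control the noise through the variance bound of Lemma~\ref{variancereduction}, which shrinks with the distance $\|\hat{\theta}^{i,k}-\hat{\theta}^{0,k}\|$, instead of through a uniform $\sigma^2$. Since each $\loss_j$ has an $L$-Lipschitz gradient, $\loss$ has one too. Hence $\hat{\loss}^{(k)}$ satisfies the Euclidean descent lemma with constant $L$, because $\|P^{(k)}\|\le 1$. Our first step is therefore to write, for each coarse step,
\[
\hat{\loss}^{(k)}(\hat{\theta}^{i+1,k}) \le \hat{\loss}^{(k)}(\hat{\theta}^{i,k}) + \langle \hat g^{i,k},\hat{\theta}^{i+1,k}-\hat{\theta}^{i,k}\rangle + \langle \nabla\hat{\loss}^{(k)}(\hat{\theta}^{i,k})-\hat g^{i,k},\hat{\theta}^{i+1,k}-\hat{\theta}^{i,k}\rangle + \tfrac L2\|\hat{\theta}^{i+1,k}-\hat{\theta}^{i,k}\|^2 .
\]
Next we use the update rule $\hat g^{i,k} = (\hat v^{i,k}-\hat v^{i+1,k})/\hat\tau$. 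With Proposition~\ref{subgradients1} and the subsequent remark, the iterates are feasible subgradient pairs, so the second term equals $-\frac1{\hat\tau}D^{\mathrm{sym}}_{\hat{\reg}_\delta^{(k)}}(\hat{\theta}^{i+1,k},\hat{\theta}^{i,k})$. Because $\hat{\reg}_\delta^{(k)} = \frac1{2\delta}\|\cdot\|^2+\hat{\reg}^{(k)}$, this symmetrized divergence splits as $\frac1\delta\|\hat{\theta}^{i+1,k}-\hat{\theta}^{i,k}\|^2 + D^{\mathrm{sym}}_{\hat{\reg}^{(k)}}$. This splitting explains the two separate sums in the claim.

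The noise term is handled by Young's inequality with a parameter chosen so that it costs $\frac{1}{4\delta\hat\tau}\|\hat{\theta}^{i+1,k}-\hat{\theta}^{i,k}\|^2$ and leaves $\delta\hat\tau\|\nabla\hat{\loss}^{(k)}-\hat g^{i,k}\|^2$. The condition $\hat\tau\le \frac1{2L\delta}$ absorbs $\frac L2\|\cdot\|^2\le \frac1{4\delta\hat\tau}\|\cdot\|^2$. Together the two costs use up $\frac1{2\delta\hat\tau}$ of the available $\frac1{\delta\hat\tau}$, which leaves a margin of $\frac1{2\delta\hat\tau}\|\hat{\theta}^{i+1,k}-\hat{\theta}^{i,k}\|^2$. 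We then take conditional expectations and apply Lemma~\ref{variancereduction}. Its unbiasedness is used pointwise at $\hat{\theta}^{i,k}$, because the batch $J^{i,k}$ is fresh. The variance term becomes at most $\delta\hat\tau c\,\IE\|\hat{\theta}^{i,k}-\hat{\theta}^{0,k}\|^2$ with $c = \frac{L^2(n-b)}{b(n-1)}$.

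The main obstacle is absorbing these accumulated distance terms into the descent margins. We write $\hat{\theta}^{i,k}-\hat{\theta}^{0,k}$ as a telescoping sum of increments and apply Cauchy--Schwarz, which gives $\|\hat{\theta}^{i,k}-\hat{\theta}^{0,k}\|^2 \le i\sum_{j=1}^{i}\|\hat{\theta}^{j,k}-\hat{\theta}^{j-1,k}\|^2$. Summing over $i=0,\dots,m-1$, each increment appears with total weight at most $\sum_i i \le \frac{m(m-1)}2$. The total error is therefore bounded by $\delta\hat\tau c\,\frac{m(m-1)}{2}\sum_j\IE\|\hat{\theta}^{j,k}-\hat{\theta}^{j-1,k}\|^2$. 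The second step-size condition, $\hat\tau^2\le \frac{b(n-1)}{2\delta^2(n-b)m(m-1)}$, is tailored to make this manageable. I anticipate that the constant in that condition might need to be adjusted, depending on how $L$ enters $c$.

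Finally, we sum the one-step inequalities from $i=0$ to $m-1$ and subtract the absorbed error from the remaining margin of $\frac1{2\delta\hat\tau}$ per increment. A margin of $\frac1{8\delta\hat\tau}$ per increment then survives, if necessary after retuning the Young parameter. We also use $\loss(\tilde{\theta}^{(k+1)}) = \hat{\loss}^{(k)}(\hat{\theta}^{m,k})$ and $\loss(\theta^{(k)}) = \hat{\loss}^{(k)}(\hat{\theta}^{0,k})$ and take total expectations via the tower property. This yields the stated bound.
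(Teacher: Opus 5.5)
Your argument follows the paper's proof step for step. Both use the Euclidean descent lemma for $\hat{\loss}^{(k)}$ and the identity $\langle \hat g^{i,k},\hat\theta^{i+1,k}-\hat\theta^{i,k}\rangle=-\frac{1}{\hat\tau}D^{\mathrm{sym}}_{\hat{\reg}_\delta^{(k)}}(\hat\theta^{i+1,k},\hat\theta^{i,k})$. Both apply Young's inequality to the noise term and then the variance bound of Lemma~\ref{variancereduction}. Both use Cauchy--Schwarz on the telescoped distance, with total weight $\frac{m(m-1)}{2}$, and split $D^{\mathrm{sym}}_{\hat{\reg}_\delta^{(k)}}=D^{\mathrm{sym}}_{\hat{\reg}^{(k)}}+\frac1\delta\|\cdot\|^2$. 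The only difference is the Young weight. The paper takes $\varepsilon=1/\delta$, paying $\frac{1}{2\delta\hat\tau}$ per increment and $\frac{\delta\hat\tau}{2}$ on the noise, so its margin before absorbing the variance is $\frac{1}{4\delta\hat\tau}$. Your split keeps $\frac{1}{2\delta\hat\tau}$ at the price of a doubled noise coefficient, which is slightly more economical.

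The step you leave open has to be closed explicitly, and your suspicion about $L$ is right. With your constants you need $\frac{\delta\hat\tau c\,m(m-1)}{2}\le\frac{1}{2\delta\hat\tau}-\frac{1}{8\delta\hat\tau}$, i.e.\ $\hat\tau^2\le\frac{3b(n-1)}{4\delta^2L^2(n-b)m(m-1)}$. With the paper's constants one needs $\hat\tau^2\le\frac{b(n-1)}{2\delta^2L^2(n-b)m(m-1)}$. The printed condition lacks the factor $L^2$. The paper's own final inequality, from the displayed fraction to $\frac{1}{8\delta\hat\tau}$, silently drops it, so its proof has the same hole.

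Retuning the Young parameter cannot repair this: optimizing it only changes the constant to $\frac{25}{32}$, still with $L^2$. The deeper reason is that the lemma as printed is false. The conclusion and the condition $\hat\tau\le\frac{1}{2L\delta}$ are invariant under $(\loss_j,L,\hat\tau)\mapsto(t\loss_j,tL,\hat\tau/t)$, which leaves all iterates unchanged. The printed second condition, by contrast, can always be met by taking $t$ large, so it carries no information.

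A concrete counterexample: take $\reg=0$, $\delta=1$, $n=2$, $b=1$, and a single selected coordinate with $\loss_{1,2}(\theta)=\pm\frac L2\theta^2+a\theta$. Choose $m=9$, $L\ge 6$ and $\hat\tau=\frac{1}{2L}$, so both printed conditions hold. Then $e_i=\hat\theta^{i,k}-\hat\theta^{0,k}$ satisfies $e_{i+1}=s_ie_i-\hat\tau a$ with $s_i\in\{\frac12,\frac32\}$ uniform. Hence $\IE[\loss(\tilde\theta^{(k+1)})]-\loss(\theta^{(k)})=-9\hat\tau a^2$. On the other hand, $\IE|e_{i+1}|^2=\frac54\IE|e_i|^2+(2i+1)\hat\tau^2a^2$ and $\IE|\hat\theta^{i+1,k}-\hat\theta^{i,k}|^2=\frac14\IE|e_i|^2+\hat\tau^2a^2$. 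Summing, the claimed bound demands a decrease of about $11\hat\tau a^2$, more than the actual $9\hat\tau a^2$. Nonnegativity of $\loss$ can be arranged by altering $\loss_2$ outside the bounded set the nine iterates can reach.

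So state and use the step-size condition with the factor $L^2$; with that correction your argument is complete.
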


\begin{proof}
	Using the $L$-smoothness of $\hat{\loss}^{(k)}$ with respect to $\frac{1}{2}\|\cdot\|^2$, which follows from the $L$-smoothness with respect to $\frac{1}{2}\|\cdot\|^2$ of each $\loss_j$ (see the descent lemma for $L$-smooth functions \citep{bauschke2011, beck2017} and Lemma~\ref{coarserelsmooth}), yields
	\begin{align*}
		\hat{\loss}^{(k)}(\hat{\theta}^{i+1,k}) - \hat{\loss}^{(k)}(\hat{\theta}^{i,k}) &\leq \langle\nabla\hat{\loss}^{(k)}(\hat{\theta}^{i,k}), \hat{\theta}^{i+1,k}-\hat{\theta}^{i,k}\rangle + \frac{L}{2}\|\hat{\theta}^{i+1,k}-\hat{\theta}^{i,k}\|^2\\
		&= \langle\nabla\hat{\loss}^{(k)}(\hat{\theta}^{i,k})-\hat{g}^{i,k}, \hat{\theta}^{i+1,k}-\hat{\theta}^{i,k}\rangle + \langle\hat{g}^{i,k}, \hat{\theta}^{i+1,k}-\hat{\theta}^{i,k}\rangle +\frac{L}{2}\|\hat{\theta}^{i+1,k}-\hat{\theta}^{i,k}\|^2 \\
		&\leq \frac{\hat{\tau}}{2\varepsilon}\|\nabla\hat{\loss}^{(k)}(\hat{\theta}^{i,k})-\hat{g}^{i,k}\|^2 - \frac{1}{\hat{\tau}} D_{\hat{\reg}_\delta^{(k)}}^\mathrm{sym}(\hat{\theta}^{i+1,k}, \hat{\theta}^{i,k}) + \frac{L\hat{\tau}+\varepsilon}{2\hat{\tau}}\|\hat{\theta}^{i+1,k}-\hat{\theta}^{i,k}\|^2.
	\end{align*} 
	for all $\varepsilon>0$.
	Taking the expectation conditioned on $\hat{\theta}^{i,k}$, that we denote by $\IE_{i,k}$ for abbreviation and using the variance bound established in Lemma~\ref{variancereduction}, we obtain
	\begin{align*}
		\IE_{i,k}\left[\hat{\loss}^{(k)}(\hat{\theta}^{i+1,k})\right] &\leq \hat{\loss}^{(k)}(\hat{\theta}^{i,k}) + \frac{\hat{\tau}L^2(n-b)}{2\varepsilon b(n-1)}\|\hat{\theta}^{i,k}-\hat{\theta}^{0,k}\|^2 - \frac{1}{\hat{\tau}}\IE_{i,k}\left[D_{\hat{\reg}_\delta^{(k)}}^\mathrm{sym}(\hat{\theta}^{i+1,k}, \hat{\theta}^{i,k})\right] \\
		&\quad + \frac{L\hat{\tau}+\varepsilon}{2\hat{\tau}}\IE_{i,k}\left[\|\hat{\theta}^{i+1,k}-\hat{\theta}^{i,k}\|^2\right] \\
		&\leq \hat{\loss}^{(k)}(\hat{\theta}^{i,k}) + \frac{\hat{\tau}L^2(n-b)}{2\varepsilon b(n-1)}i\sum_{j=1}^i\|\hat{\theta}^{j,k}-\hat{\theta}^{j-1,k}\|^2  \\
		&\quad - \frac{1}{\hat{\tau}}\IE_{i,k}\left[D_{\hat{\reg}_\delta^{(k)}}^\mathrm{sym}(\hat{\theta}^{i+1,k}, \hat{\theta}^{i,k})\right] + \frac{L\hat{\tau}+\varepsilon}{2\hat{\tau}}\IE_{i,k}\left[\|\hat{\theta}^{i+1,k}-\hat{\theta}^{i,k}\|^2\right].
	\end{align*}
	We take the full expectation and iterate this inequality to arrive at 
	\begin{align*}
		\IE\left[\hat{\loss}^{(k)}(\hat{\theta}^{i+1,k})\right] &\leq \IE\left[\hat{\loss}^{(k)}(\hat{\theta}^{0,k})\right] + \frac{\hat{\tau}L^2(n-b)}{2\varepsilon b(n-1)}\frac{i(i+1)}{2}\sum_{j=1}^i\IE\left[\|\hat{\theta}^{j,k}-\hat{\theta}^{j-1,k}\|^2\right]\\
		&\quad- \frac{1}{\hat{\tau}}\sum_{j=1}^{i+1}\IE\left[D_{\hat{\reg}_\delta^{(k)}}^\mathrm{sym}(\hat{\theta}^{j,k}, \hat{\theta}^{j-1,k})\right] 
		+ \frac{L\hat{\tau}+\varepsilon}{2\hat{\tau}}\sum_{j=1}^{i+1}\IE\left[\|\hat{\theta}^{j,k}-\hat{\theta}^{j-1,k}\|^2\right].
	\end{align*}
	In particular, for $i+1=m$, we obtain 
	\begin{align*}
		\IE\left[\hat{\loss}^{(k)}(\hat{\theta}^{m,k})\right] 
		&\leq \IE\left[\hat{\loss}^{(k)}(\hat{\theta}^{0,k})\right] - \frac{1}{\hat{\tau}}\sum_{j=1}^{m}\IE\left[D_{\hat{\reg}^{(k)}}^\mathrm{sym}(\hat{\theta}^{j,k}, \hat{\theta}^{j-1,k})\right] \\
		&\quad - \frac{2b\varepsilon(n-1)(2-\delta\varepsilon - L\delta\hat{\tau})-\hat{\tau}^2L^2(n-b)m(m-1)\delta}{4\varepsilon b(n-1)\delta\hat{\tau}}\sum_{j=1}^{m}\IE\left[\|\hat{\theta}^{j,k}-\hat{\theta}^{j-1,k}\|^2\right] 
	\end{align*}
	Considering this fraction, we observe that for $\varepsilon=\frac{1}{\delta}$ and $\hat{\tau}\leq\frac{1}{2L\delta}$, we have
	\begin{align*}
		\frac{2b\varepsilon(n-1)(2-\delta\varepsilon - L\delta\hat{\tau})-\hat{\tau}^2L^2(n-b)m(m-1)\delta}{4\varepsilon b(n-1)\delta\hat{\tau}} 
		&= \frac{\frac{2b}{\delta}(n-1)(1 - L\delta\hat{\tau})-\hat{\tau}^2L^2(n-b)m(m-1)\delta}{4 b(n-1)\hat{\tau}} \\
		&\geq \frac{\frac{b}{\delta}(n-1)-\hat{\tau}^2L^2(n-b)m(m-1)\delta}{4 b(n-1)\hat{\tau}}.
	\end{align*}
	Moreover, if $\hat{\tau}\leq \sqrt{\frac{b(n-1)}{2\delta^2(n-b)m(m-1)}}$, we have
	\begin{align*}
		\frac{\frac{b}{\delta}(n-1)-\hat{\tau}^2L^2(n-b)m(m-1)\delta}{4 b(n-1)\hat{\tau}} \geq \frac{1}{8\delta \hat{\tau}}.
	\end{align*}
	Thus, with $\hat{\loss}^{(k)}(\hat{\theta}^{m,k})=\loss(\tilde{\theta}^{(k+1)})$ and $\hat{\loss}^{(k)}(\hat{\theta}^{0,k})=\loss(\theta^{(k)})$ we obtain the desired result. 
\end{proof}

Combining Lemma~\ref{varianceredudedcoarseupdate} and Lemma~\ref{fullstepdescent} yields the following convergence statement with constant step-sizes on the coarse level.
\begin{theorem}
	Let Assumption~\ref{PLassumption} be satisfied and the step sizes chosen such that 
	\begin{align*}
		\tau\leq\frac{1}{L} \quad \text{and} \quad \hat{\tau} \leq \min\left\lbrace\frac{1}{2L\delta}, \sqrt{\frac{b(n-1)}{2\delta^2(n-b)m(m-1)}}\right\rbrace.
	\end{align*}
	Let $(\theta^{(k)})$ be the sequence generated by Algorithm~\ref{alg:MLLinBreg} with exact gradients on the fine level and the coarse objective defined in \eqref{eq:coarseobjectivevariancereduction}.
	Then,
	\begin{align*}
		\IE\left[\loss(\theta^{(k)}) - \loss^\star\right] \leq (1-r)^{k}(\loss(\theta^{(0)})-\loss^\star) - \sum_{i=0}^{k-1}(1-r)^{k-i}\hat{\rho_i},
	\end{align*}
	where $r = \frac{\eta\lambda(\tau)}{\tau}$ and 
	\begin{align*}
		\hat{\rho}_i = \begin{cases}
			\frac{1}{\hat{\tau}}\sum\limits_{j=1}^m \IE\left[D_{\hat{\reg}^{(k)}}^\mathrm{sym}(\hat{\theta}^{j,k},\hat{\theta}^{j-1,k})\right] + \frac{1}{8\delta\hat{\tau}}\sum\limits_{j=1}^m \IE\left[\|\hat{\theta}^{j,k}-\hat{\theta}^{j-1,k}\|^2\right], & \text{if $i$ triggers coarse step},\\
			0, & \text{otherwise}.
		\end{cases}
	\end{align*}
\end{theorem}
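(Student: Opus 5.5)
The argument mirrors the proof of Theorem~\ref{thm:main}: establish a one-cycle recursion for $\IE[\loss(\theta^{(k)})-\loss^\star]$ and then unroll it. Fix a cycle $k$. If $k$ triggers a coarse step, Lemma~\ref{varianceredudedcoarseupdate} applies under the stated bound on $\hat{\tau}$. In expectation it gives
\begin{align*}
\IE\left[\loss(\tilde{\theta}^{(k+1)})-\loss^\star\right] \leq \IE\left[\loss(\theta^{(k)})-\loss^\star\right] - \hat{\rho}_k,
\end{align*}
with $\hat{\rho}_k$ exactly the nonnegative quantity in the statement. If no coarse step is triggered, then $\tilde{\theta}^{(k+1)}=\theta^{(k)}$ and the same inequality holds with $\hat{\rho}_k=0$. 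Before this, Proposition~\ref{subgradients2}, together with the feasibility argument following Proposition~\ref{subgradients1}, guarantees $\tilde{v}^{(k+1)}\in\partial\reg_\delta(\tilde{\theta}^{(k+1)})$. The coarse iteration with the corrected objective $\hat{\Psi}^{i,k}$ is still a LinBreg iteration with mirror map $\hat{\reg}_\delta^{(k)}$, so feasibility is preserved exactly as before. Hence Lemma~\ref{fullstepdescent} can be applied at the point $(\tilde{\theta}^{(k+1)},\tilde{v}^{(k+1)})$.

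The second step is the fine update. Since the fine level uses the exact gradient, Lemma~\ref{fullstepdescent} gives pointwise
\begin{align*}
\loss(\theta^{(k+1)})-\loss^\star\leq (1-r)\bigl(\loss(\tilde{\theta}^{(k+1)})-\loss^\star\bigr), \qquad r=\tfrac{\eta\lambda(\tau)}{\tau}.
\end{align*}
Taking expectations and inserting the coarse bound yields
\begin{align*}
a_{k+1}\leq (1-r)a_k-(1-r)\hat{\rho}_k, \qquad a_k\coloneq\IE[\loss(\theta^{(k)})-\loss^\star].
\end{align*}
Unrolling by induction on $k$ gives
\begin{align*}
a_k\leq (1-r)^k a_0-\sum_{i=0}^{k-1}(1-r)^{k-i}\hat{\rho}_i,
\end{align*}
which is the claim. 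Here $a_0=\loss(\theta^{(0)})-\loss^\star$ is deterministic, and the tower property justifies passing from conditional to full expectations. This works because $\tilde\theta^{(k+1)}$ is measurable with respect to the coarse randomness in cycle $k$, while the fine step is deterministic given $\tilde\theta^{(k+1)}$.

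The main obstacle is the hypotheses of Lemma~\ref{fullstepdescent}. It requires Assumption~\ref{smoothassumption} (relative $L$-smoothness) and $\tau<\frac1L$, whereas the theorem states only Assumption~\ref{PLassumption} and $\tau\le\frac1L$. I would supply relative smoothness from the finite-sum hypothesis. Each $\loss_j$ has an $L$-Lipschitz gradient, so $\loss$ does too, and the descent lemma gives $\loss(\tilde\theta)\le\loss(\theta)+\langle\nabla\loss(\theta),\tilde\theta-\theta\rangle+\frac L2\|\tilde\theta-\theta\|^2$. Strong convexity of $\reg_\delta$ gives $D^v_{\reg_\delta}(\tilde\theta,\theta)\ge\frac1{2\delta}\|\tilde\theta-\theta\|^2$, so Assumption~\ref{smoothassumption} holds with constant $L\delta$ in place of $L$. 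I therefore expect the step-size condition really needed to be $\tau<\frac{1}{L\delta}$, or $\delta\le 1$ with $\tau<\frac1L$. I would note this explicitly and check, in the chain of inequalities of Lemma~\ref{fullstepdescent}, that the term $\bigl(L-\frac1\tau\bigr)D^v_{\reg_\delta}$ can still be dropped under the stated condition.

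A second point to verify is the notation in Lemma~\ref{varianceredudedcoarseupdate}, which mixes $D_{\hat{\reg}^{(k)}}^{\mathrm{sym}}$ with $D_{\hat{\reg}_\delta^{(k)}}^{\mathrm{sym}}$. Either version is nonnegative, and that is all the recursion uses, so $\hat\rho_k\ge0$ and the subtracted sum has the stated sign.
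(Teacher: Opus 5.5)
Your proposal is correct and follows the paper's own proof, which simply combines Lemma~\ref{varianceredudedcoarseupdate} with Lemma~\ref{fullstepdescent} and unrolls the recursion as in Theorem~\ref{thm:main}. Your caveat about the fine step size is also justified: the finite-sum Lipschitz hypothesis gives Assumption~\ref{smoothassumption} only with constant $L\delta$, so Lemma~\ref{fullstepdescent} needs $\tau\le\frac{1}{L\delta}$ (non-strict suffices, since the term $(L\delta-\frac{1}{\tau})D^{v}_{\reg_\delta}\le 0$ is then dropped), and the same kind of slip appears in the second bound on $\hat{\tau}$ that you import from Lemma~\ref{varianceredudedcoarseupdate}, which is missing a factor $\frac{1}{L}$ because its final step requires $\hat{\tau}^2L^2(n-b)m(m-1)\delta\le\frac{b(n-1)}{2\delta}$ (a defect of that lemma, harmless when $L\le 1$, rather than of your argument).
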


\section{Computational savings}
\label{sec:savings}
Regarding the computational effort, we follow \citet{evci2020rigl} and estimate the theoretical FLOPs required for training. 
As in their approach, we approximate the FLOPs for one training step by summing the forward pass FLOPs and twice the forward pass FLOPs for the backward pass. 
We only account for the FLOPs of convolutional and linear layers.
Other operations such as BatchNorm, pooling, and residual additions are ignored, as their contribution to the total FLOPs is negligible compared to the dominant convolutional and linear computations.
Denoting the FLOPs of a sparse forward pass by $f_S$ and of a dense forward pass by $f_D$, the expected FLOPs per training step are
\begin{align*}
	\frac{m \cdot 3 f_S + (2 f_S + f_D)}{m + 1},
\end{align*}
since every $(m+1)$-th step computes a full gradient. 
This matches the estimation used in RigL, where the network structure is fixed for most iterations and the full gradient is only computed periodically. 
In our method, however, the sparsity level evolves during training, whereas RigL enforces a fixed sparsity. 
As a result, particularly in the early stages of training, our method requires substantially fewer FLOPs.
By contrast, LinBreg incurs $2 f_S + f_D$ FLOPs per training step, since a full gradient is computed at every iteration.

We estimate the theoretical training FLOPs for the WideResNet28-10 models reported in Table~\ref{tableresults}. 
Using standard SGD training as the dense baseline, our analysis indicates that LinBreg requires only $0.38\times$ the FLOPs of the baseline, while our method further reduces this to $0.06\times$.

For the VGG16 models from Table~\ref{tableresults}, LinBreg requires $0.47\times$ the FLOPs of the dense model, whereas our method only needs $0.18\times$. 
Finally, for ResNet18, we consider the LinBreg approach with $\lambda=0.2$, which leads to $96.03\%$ sparsity on average, and the ML LinBreg with $\lambda=0.007$, which reaches $96.12\%$ sparsity. 
This results in a theoretical FLOPs reduction of $0.43\times$ for the former and $0.12\times$ for the latter.

\section{Background on numerical experiments}\label{backgroundonexperiments}
\paragraph{Initialization}
As previously mentioned, all networks are initialized in a highly sparse fashion, allowing the training algorithm to subsequently activate additional parameters by setting them to non-zero values. For this initialization, we follow the approach proposed by \citet{bungert2022}.

\citet{he2015} suggest that for ReLU activation functions, the variance of the weights should satisfy
\begin{align}\label{eq:varrelu}
	\operatorname{Var}\left[W^l\right] = \frac{2}{n_{l-1}},
\end{align}
where $n_l$ denotes the size of the $l$-th layer.

To obtain a sparse initialization, we apply binary masks to dense weight matrices, defining the initial weights as
\begin{align*}
	W^l = \tilde{W}^l \odot M^l,
\end{align*}
where $\odot$ denotes the element-wise (Hadamard) product, and each entry of the mask $M^l$ is independently drawn from a Bernoulli distribution with parameter $r\in[0,1]$.	
This construction implies that the variance of the resulting weights scales as
\begin{align*}
	\operatorname{Var}\left[W^l\right] = r \operatorname{Var}\left[\tilde{W}^l\right].
\end{align*}

To ensure that the condition in \eqref{eq:varrelu} is met, we adjust the distribution of $\tilde{W}^l$ accordingly. For instance, if $\tilde{W}^l$ originally satisfies \eqref{eq:varrelu}, we scale it accordingly.

\paragraph{CIFAR-10 training}
We summarize some of the results from our training on the CIFAR-10 dataset in Table~\ref{tableresults}, most of which are also visualized in Figure~\ref{figureboxplots}. As previously mentioned, we use $\reg=\lambda\|\cdot\|_1$ as the regularizer and choose to perform a full update on the fine level every 100 iterations, meaning that $m=99$ in Algorithm~\ref{alg:MLLinBreg}.

\begin{figure}[htb]
	\centering
	\begin{subfigure}{0.4\textwidth}
		\centering
		\includegraphics[width=\linewidth]{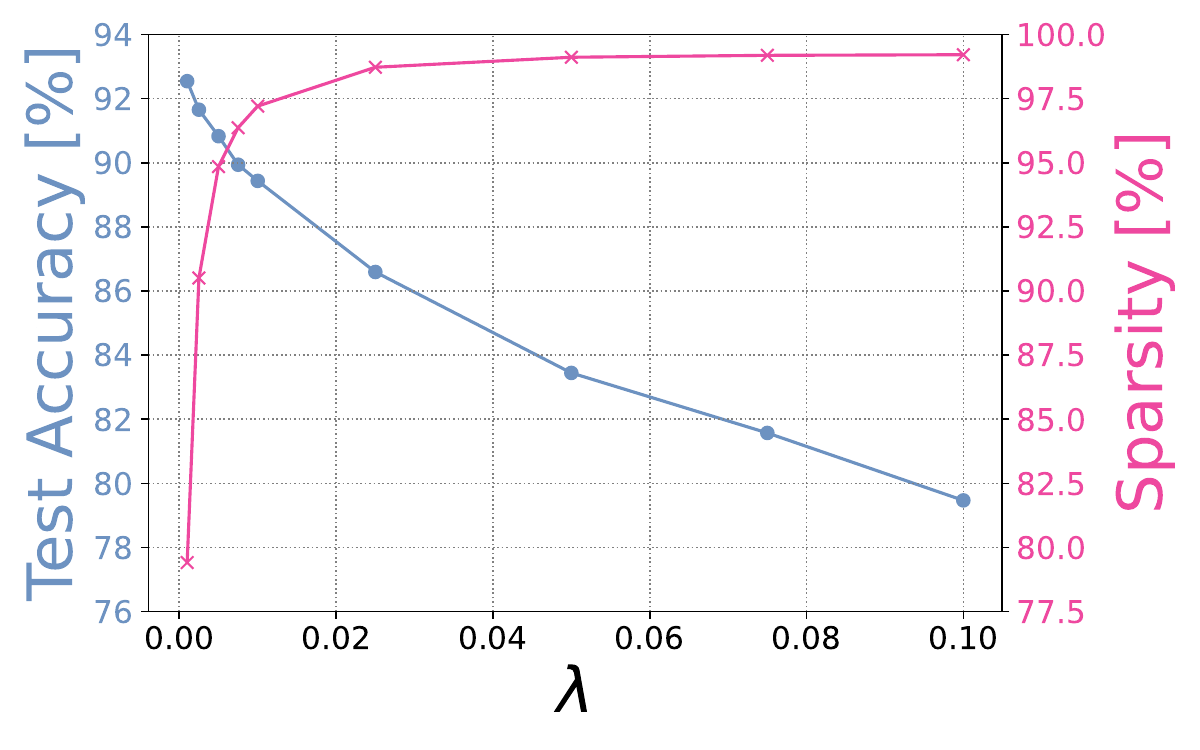}
		\caption{Accuracy and sparsity for ML LinBreg with different values of $\lambda$ with fixed $m$}
		\label{figueraccandsparsityfordifferentlambda}
	\end{subfigure}\hspace{5em}
	\begin{subfigure}{0.4\textwidth}
		\centering
		\includegraphics[width=\linewidth]{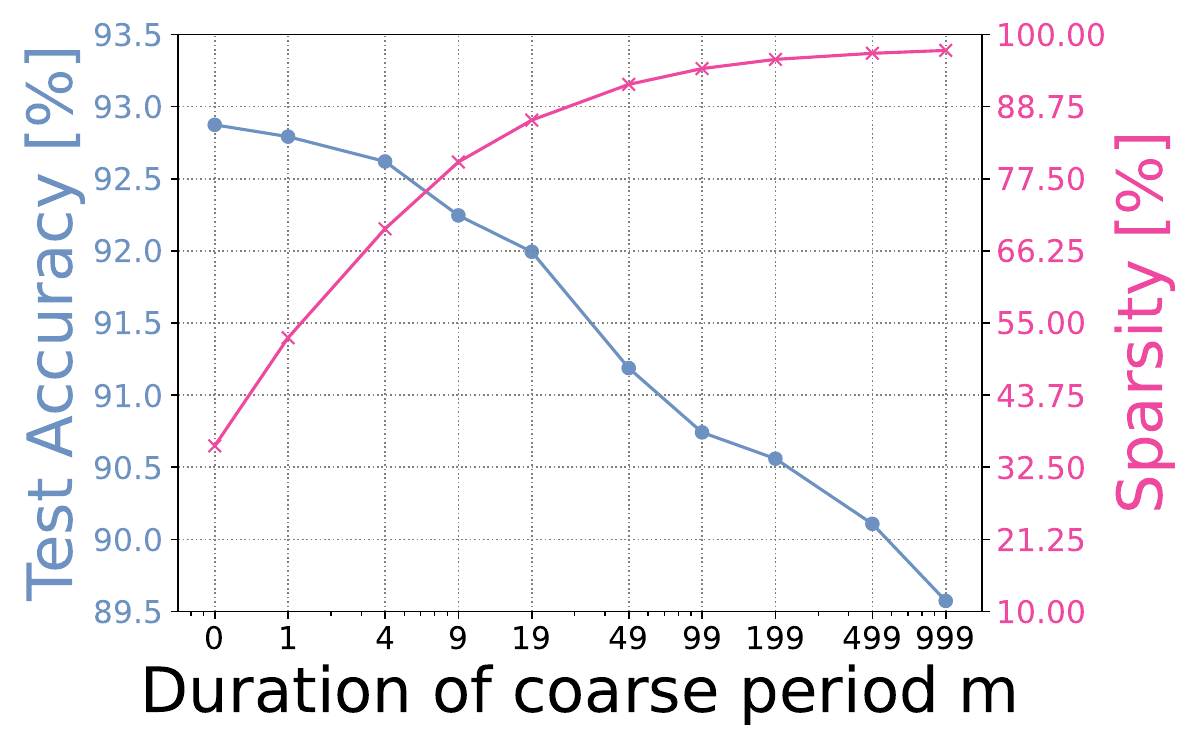}
		\caption{Accuracy and sparsity for ML LinBreg with different values of $m$ with fixed $\lambda$}
		\label{figueraccandsparsityfordifferentm}
	\end{subfigure}
	\caption{Ablation study comparing accuracy and sparsity across different choices of the hyperparameters $\lambda$ and $m$ for ResNet18 on CIFAR-10.}
	\label{ablationstudy}
\end{figure}

To demonstrate the effect of the regularization parameter $\lambda$ in our training procedure, we train models with varying values of $\lambda$ and evaluate both the test accuracy and sparsity of the resulting models. As expected, smaller values of $\lambda$ yield highly accurate but less sparse models, whereas increasing $\lambda$ results in reduced accuracy but greater sparsity. The results of this ablation study are presented in Figure~\ref{figueraccandsparsityfordifferentlambda}.

In Figure~\ref{figueraccandsparsityfordifferentm}, we perform an ablation study to investigate the effect of the coarse update duration $m$ in Algorithm~\ref{alg:MLLinBreg} on test accuracy and sparsity of the trained models.
All other parameters are fixed; in particular, the regularizer is chosen as $\reg=0.005\lVert\cdot\rVert_1$.
The network is trained on multiple random seeds, and we report the mean values in the figure.
Note that the value $m=0$ corresponds to the case without coarse updates, i.e., the standard LinBreg algorithm.
As expected, increasing $m$ -- and thus prolonging the freezing period -- quickly yields sparser models, where the reduction in accuracy is minor.

\begin{table}[htb]
	\caption{Total sparsity and test accuracy for different network architectures and different optimizers}
	\label{tableresults}
	\centering
	\begin{tabular}{cccc}\hline 
		\textbf{Architecture} & \textbf{Optimizer} & \textbf{Sparsity $[\%]$ } & \textbf{Test acc $[\%]$}\\
		\hline
		\multirow{6}{*}{ResNet18} &SGD & $0.39\pm0.08$  & $92.93\pm0.23$ \\
		&Prune+Fine-Tuning & $95.00$ & $90.84\pm0.30$ \\
		&ML LinBreg ($\lambda=0.005$)& $94.76\pm0.12$ & $90.89\pm0.21$ \\ 
		&ML LinBreg ($\lambda=0.007$) & $96.12\pm0.08$ & $90.24\pm0.33$ \\
		&ML LinBreg ($\lambda=0.01$)& $97.20\pm0.06$ & $89.60\pm0.27$ \\ 
		&LinBreg ($\lambda=0.2$)& $ 96.03\pm0.04$ & $90.35\pm0.22 $ \\
		\hline
		\multirow{4}{*}{VGG16} & SGD & $0.11\pm0.02$ & $91.98\pm0.13$\\
		&Prune+Fine-Tuning & $92.00$ & $91.39\pm0.18$ \\
		& LinBreg ($\lambda=0.1$) & $91.82\pm0.05$ & $90.21\pm0.24$\\
		& ML LinBreg ($\lambda=0.003$) & $91.29\pm0.18$ & $90.71\pm0.19$ \\
		\hline
		\multirow{4}{*}{WideResNet28-10} & SGD & $0.17\pm0.03$ & $93.79\pm0.08$\\
		&Prune+Fine-Tuning & $96.00$ & $90.55\pm0.32$ \\
		& LinBreg ($\lambda=0.1$) & $95.89\pm0.16$ & $91.70\pm0.25$\\
		& ML LinBreg ($\lambda=0.005$) & $96.58\pm0.14$ & $91.69\pm0.27$\\			\hline
	\end{tabular}
\end{table}

To obtain some additional information on the training procedure, we track the mean and standard deviation of the validation accuracy and the sparsity as well as the train loss throughout the training procedure.
The resulting plots for ResNet18 are displayed in Figure~\ref{figureaccuracyandsparsity}.
We observe that freezing the network structure neither affects the final characteristics of the model nor alters the training speed.

\begin{figure}[htb]
	\centering
	\includegraphics[width=0.32\columnwidth]{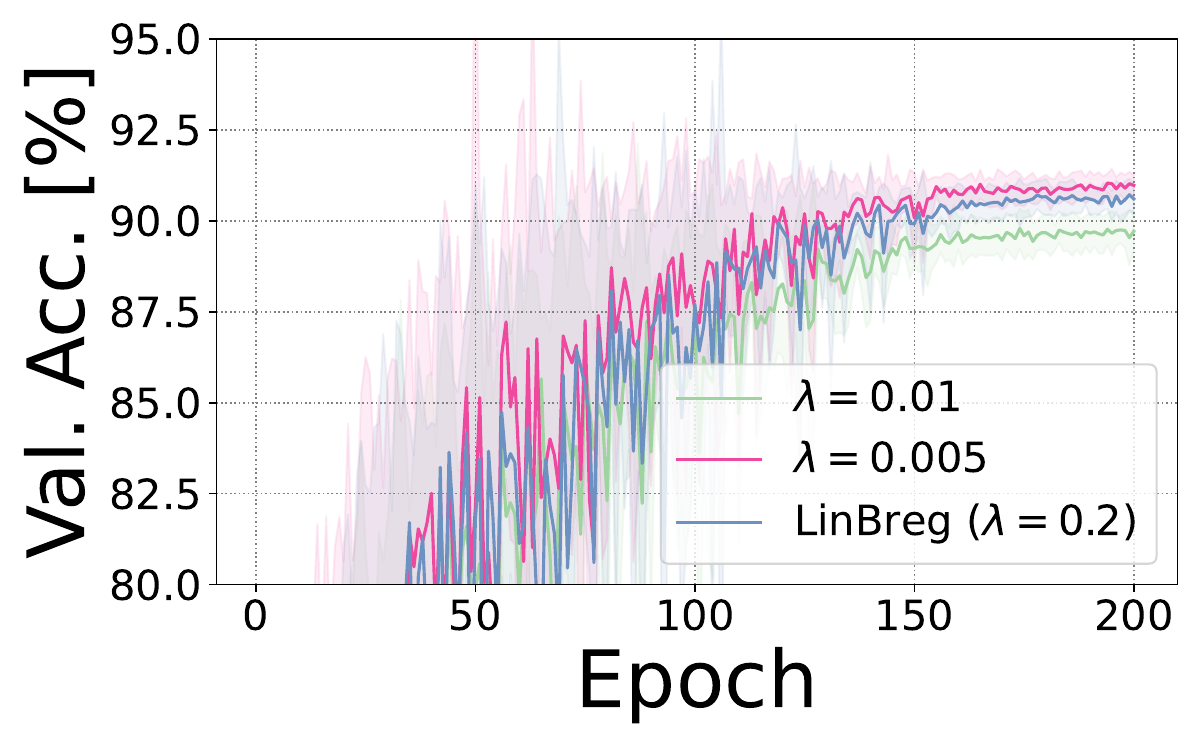}
	\includegraphics[width=0.32\columnwidth]{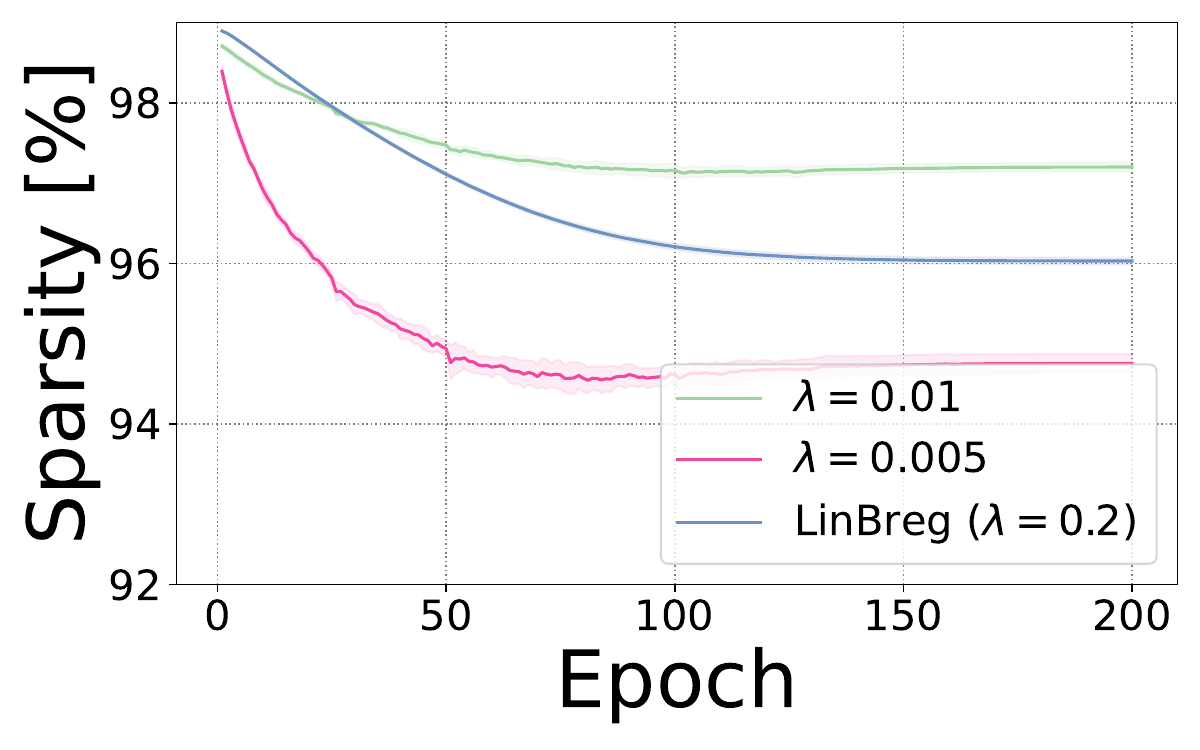}
	\includegraphics[width=0.32\columnwidth]{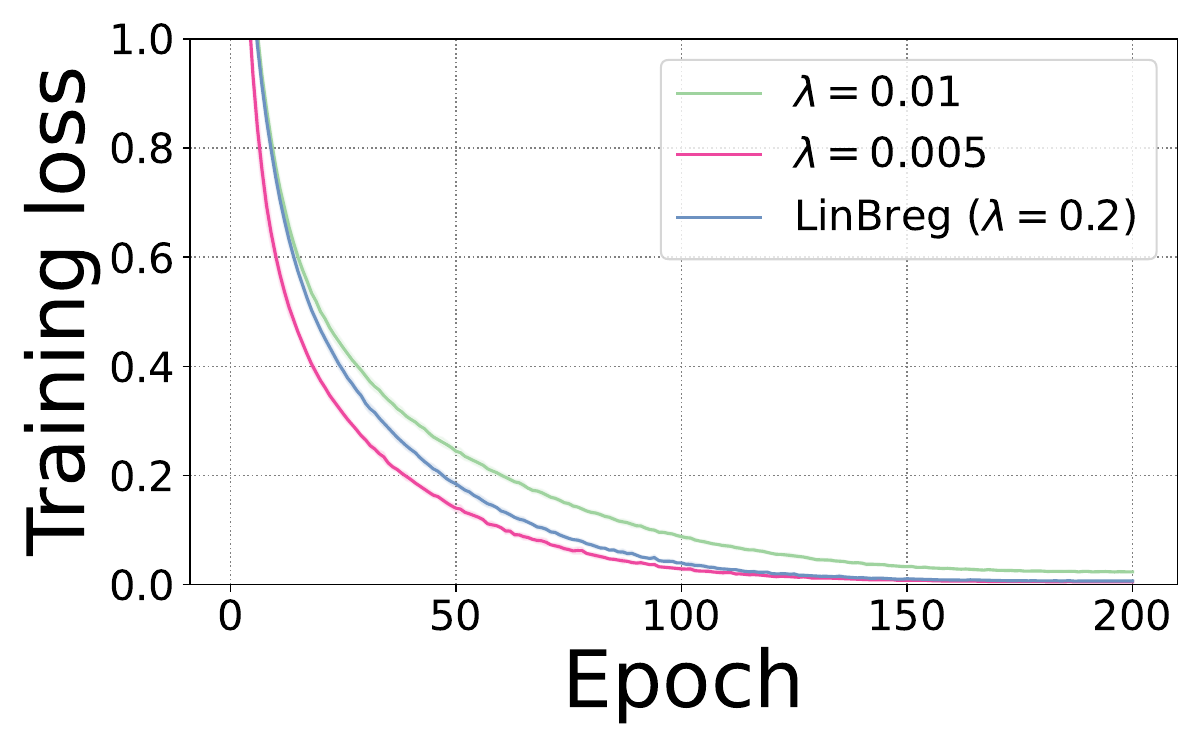}
	\caption{Mean and standard deviation of validation accuracy, sparsity, and train loss of the model parameters over training epochs, shown for different regularization parameters, $\lambda = 0.005$ and $\lambda = 0.01$ and for LinBreg with $\lambda=0.2$.}
	\label{figureaccuracyandsparsity}
\end{figure}

Moreover, in addition to comparing the different algorithms for extremely sparse models in Figure~\ref{figureboxplots}, we also compare them over the full range of possible levels of sparsity in Figure~\ref{figureboxplotsfullrange}. 
As before, we include SGD as a dense baseline, as well as pruned versions of the SGD-trained models at various levels of sparsity. 
For sparsity levels that are not as extreme as those considered in Figure~\ref{figureboxplots}, we observe that our method is at least as performant as LinBreg and the pruning approach across different network architectures.

\begin{figure}[htb]
	\centering
	\begin{subfigure}{0.33\textwidth}
		\centering
		\includegraphics[width=\linewidth]{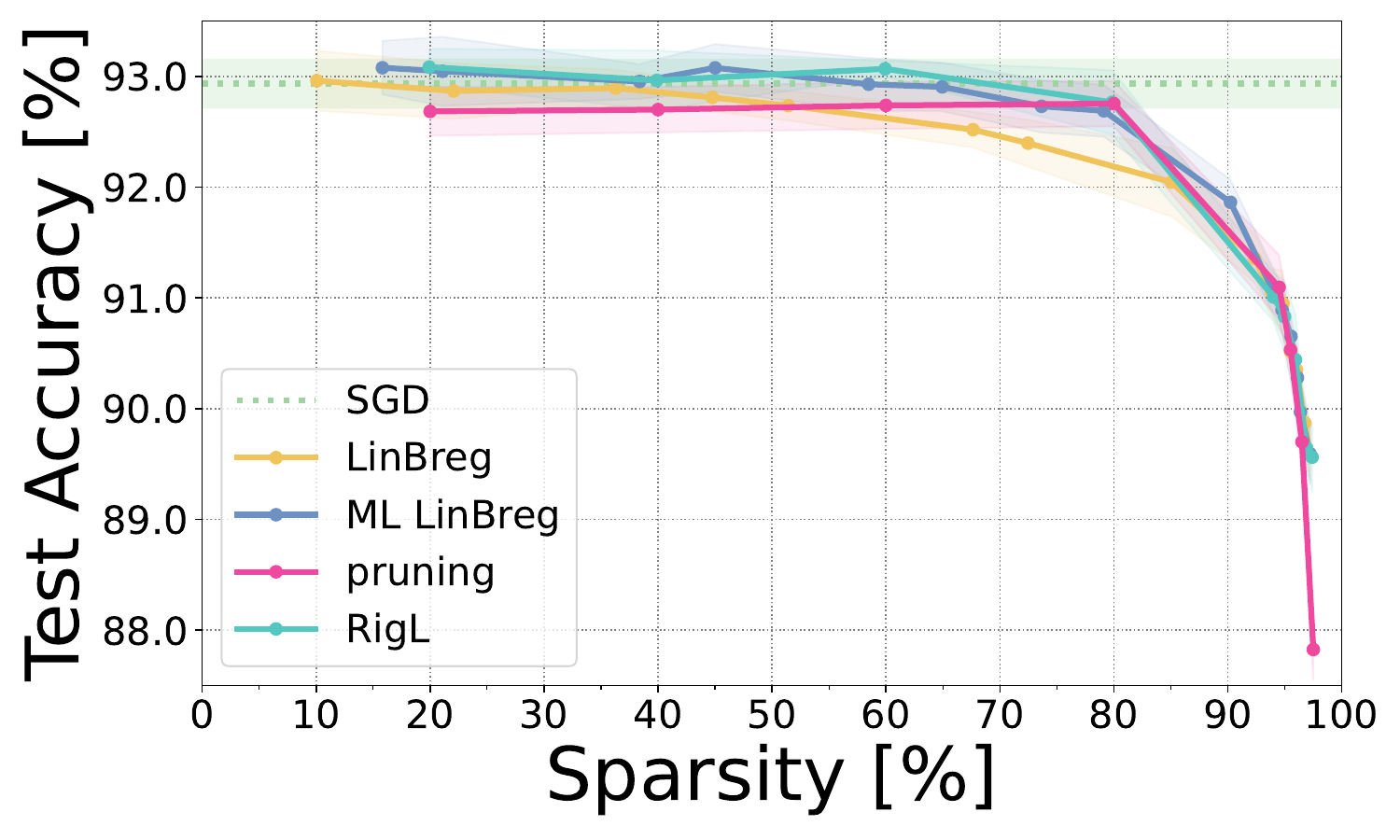}
		\caption{ResNet18}
		\label{figureresnetsparse}
	\end{subfigure}\hfill
	\begin{subfigure}{0.33\textwidth}
		\centering
		\includegraphics[width=\linewidth]{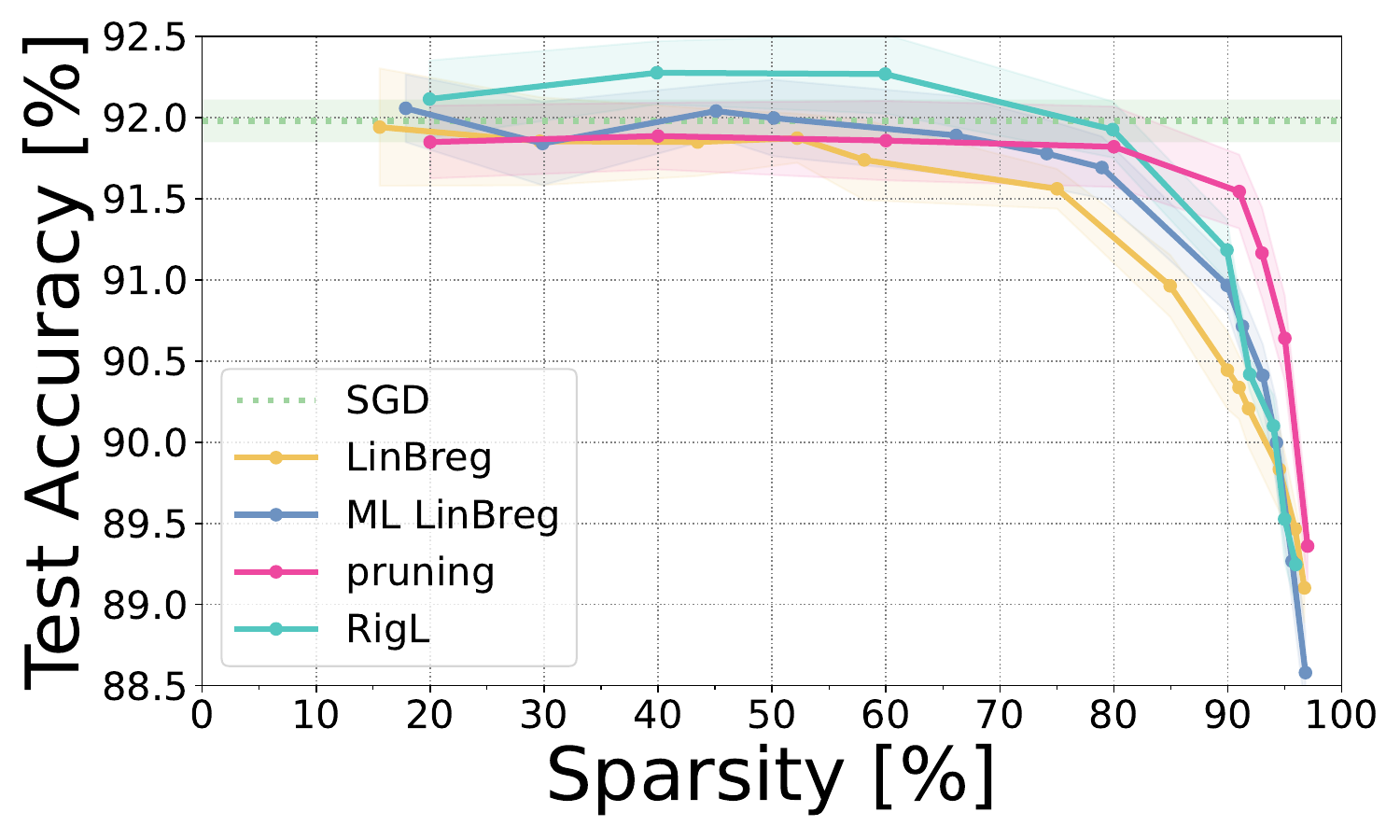}
		\caption{VGG16}
		\label{figurevggsparse}
	\end{subfigure}\hfill
	\begin{subfigure}{0.33\textwidth}
		\centering
		\includegraphics[width=\linewidth]{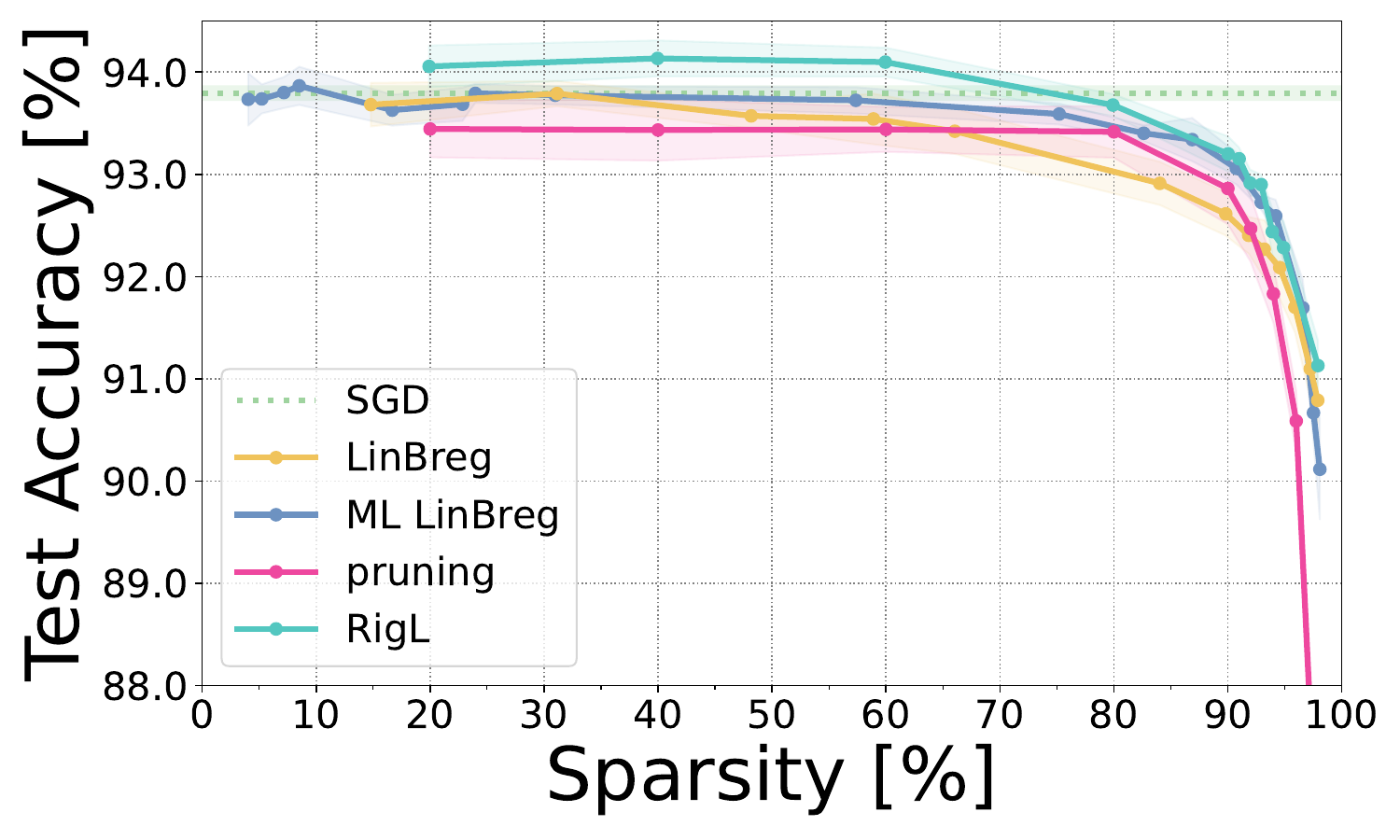}
		\caption{WideResNet28-10}
		\label{figurewideresnetsparse}
	\end{subfigure}
	\caption{Accuracy and sparsity for different regularizers with LinBreg and ML LinBreg as well as a pruning baseline on CIFAR-10 for different architectures}
	\label{figureboxplotsfullrange}
\end{figure}


\paragraph{Mask initialization}
We explore the effect of the mask initialization scheme for CIFAR-10 ResNet18 models trained by both ML LinBreg and RigL \citep{evci2020rigl}. 
In particular, we consider three mask initialization schemes as proposed in \citep{evci2020rigl}.
\begin{itemize}
	\item \textit{Uniform:} The mask for each layer of the network is uniformly sampled to achieve the target initial sparsity, leading to the same level of sparsity in each layer. 
	\item \textit{Erdős–Rényi (ER):} Both linear and convolutional layers have sparsities proportional to scores $$\frac{n_{in} + n_{out}}{
		n_{in} n_{out}}$$
	where $n_{in}/n_{out}$ refers to the dimensions of the input/output of a linear layer and the number of input/output channels for a convolutional layer.
	\item \textit{Erdős–Rényi-Kernel (ERK):}  Similar to ER except the scores for convolutional layers with  $k\times k$ kernels are given by
	$$\frac{n_{in} + n_{out} + 2k}{n_{in} n_{out}k^2}.$$
\end{itemize}
In the case of ER and ERK, one would in practice calculate the scores and then determine a global rescaling factor of said scores such that the determined masked model achieves the specified target sparsity.
The motivation of ER is that layers with many parameters should be very sparse, and ERK makes this more explicit in the case of convolutional layers.

We remark that while the variance-preserving rescaling factor $r$ (see Appendix~\ref{backgroundonexperiments}) will be constant across layers when using a uniform mask initialization, for ER and ERK one must adjust the value of $r$ according to the layer's determined sparsity.

We report the achieved sparsity and test accuracy of RigL (target sparsity of 95\%) and ML LinBreg ($\lambda=0.005$) for different mask initialization schemes in Table~\ref{tab:initialization}.
We see that, when using the variance-preserving rescaling for the weight initialization, the mask initialization can vary the accuracy of models trained via RigL by $2.5\%$ whereas ML LinBreg is far more robust to the initialization and varies by only $0.6\%.$

\begin{table}[htb]
	\caption{Performance of RigL and ML LinBreg for various mask initialization schemes across 5 seeds.}
	\label{tab:initialization}
	\centering
	\begin{tabular}{cccc}\hline
		\textbf{Optimizer} & \textbf{Mask initialization} & \textbf{Sparsity $[\%]$ } & \textbf{Test acc $[\%]$} \\\hline 
		\multirow{3}{*}{RigL}  & Uniform  &  94.918 & 88.280 ± 0.14  \\
		& ER  & 94.918 & 88.274 ± 0.40\\
		& ERK  & 94.975  & 90.830 ± 0.32 \\ \hline 
		\multirow{3}{*}{ML LinBreg}   & Uniform  & 94.687 ± 0.10 & 90.812 ± 0.33 \\
		& ER  &94.788 ± 0.06 & 91.448 ± 0.15\\
		& ERK & 94.835 ± 0.05& 91.180 ± 0.23 \\\hline
	\end{tabular}
\end{table}


\paragraph{Structured sparsity}
Even though, we do not induce any structured sparsity through the regularizer $\reg$, we observe that for our trained models many of the 2D-kernels are zero.
To measure this kind of sparsity, we define the convolutional sparsity as
\begin{align*}
	S_{\mathrm{conv}} \coloneq 1- \frac{{\sum_{l\in\mathcal{I}_{\mathrm{conv}}}\vert\lbrace K_{ij}^l \neq 0\rbrace\vert}}{    {\sum_{l\in\mathcal{I}_{\mathrm{conv}}}c_{l-1}\cdot c_l}},
\end{align*}
where $\mathcal{I}_{\mathrm{conv}}$ denotes the index set of all convolutional layers, $c_{l-1}$ and $c_l$ are the number of input and output channels of layer $l\in\mathcal{I}_{\mathrm{conv}}$, respectively, and $K^l_{ij}$ is the kernel connecting input channel $i$ to output channel $j$. 

For example, in the WideResNet28-10 case with $\lambda=0.005$, where we observed a test accuracy of $91.69\%\pm0.27\%$ we obtain $S_\mathrm{conv}=78.21\%\pm0.82\%$, meaning that almost $80\%$ of input-output connections are zero. 
We can further increase this structured sparsity by taking the group $\ell_{1,2}$ norm from \eqref{eq:groupl12norm} as the regularizer.
We again scale it by some positive factor $\lambda$ to control its influence on the optimization procedure.
With this approach, we achieve a convolutional sparsity of $S_\mathrm{conv} = 84.06\%\pm0.79\%$ while maintaining a test accuracy of $91.45\%\pm0.59\%$.
By increasing the strength of the regularizer, this can be further improved to $93.09\%\pm0.46\%$ convolutional sparsity, with a corresponding test accuracy of $90.48\%\pm0.72\%$. 

\paragraph{Scaling up}
To investigate how increasing the scale of the experiments affects the results when keeping the hyperparameters fixed, we generated the same plots as in Figure~\ref{ablationstudy}, this time for the WideResNet28-10 architecture on the TinyImageNet dataset.
This setup simultaneously increases both the network capacity and the size and complexity of the dataset.
The results of this analysis are shown in Figure~\ref{ablationstudyWRN}.

\begin{figure}[htb]
	\centering
	\begin{subfigure}{0.4\textwidth}
		\centering
		\includegraphics[width=\linewidth]{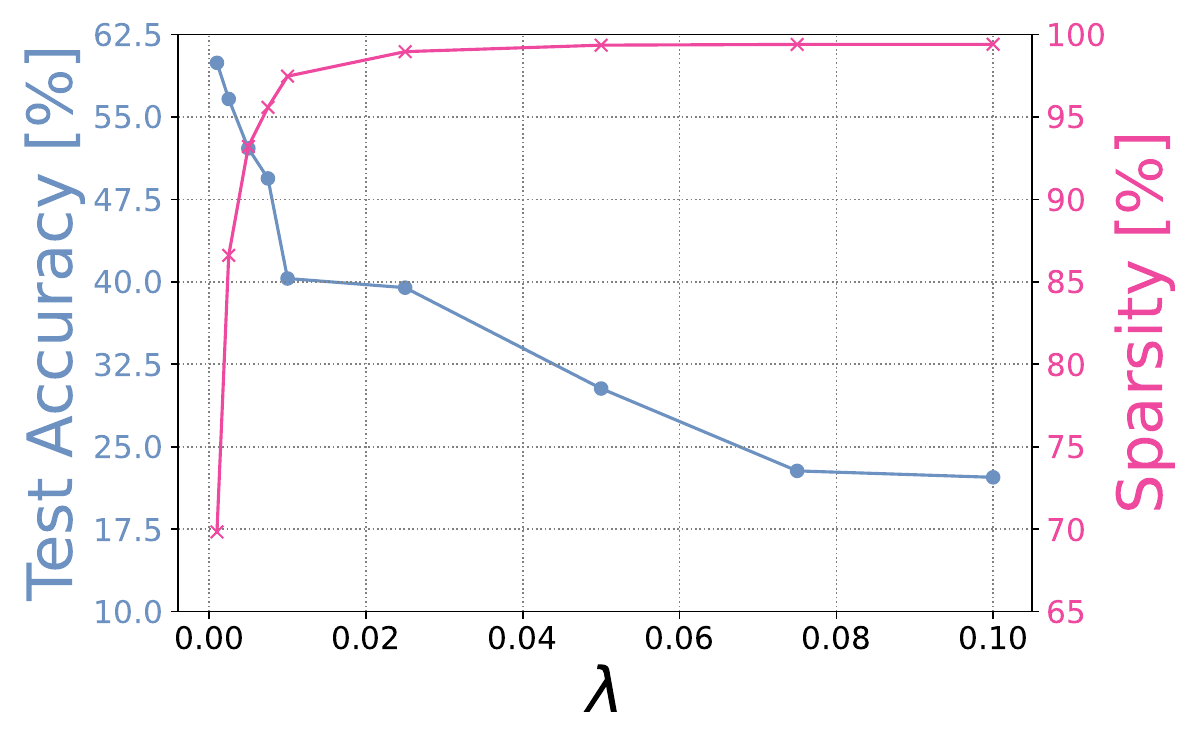}
		\caption{Accuracy and sparsity for ML LinBreg with different values of $\lambda$ with fixed $m$}
		\label{figueraccandsparsityfordifferentlambdaWRN}
	\end{subfigure}\hspace{5em}
	\begin{subfigure}{0.4\textwidth}
		\centering
		\includegraphics[width=\linewidth]{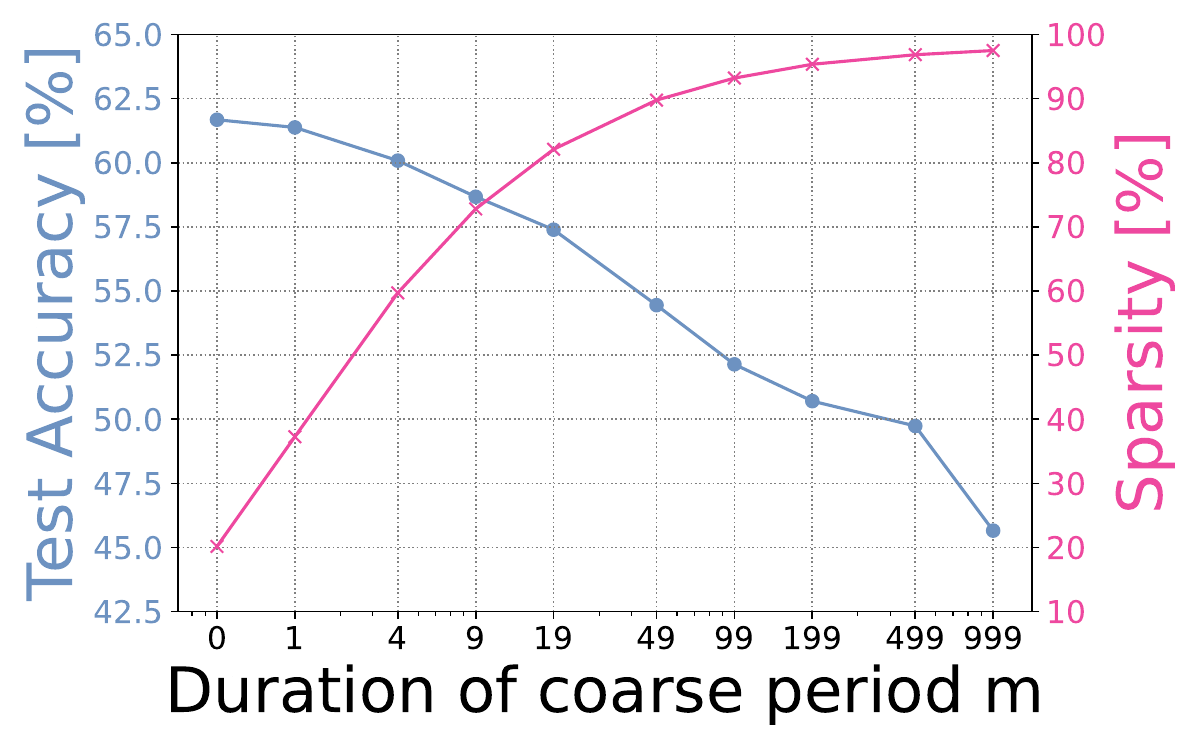}
		\caption{Accuracy and sparsity for ML LinBreg with different values of $m$ with fixed $\lambda$}
		\label{figueraccandsparsityfordifferentmWRN}
	\end{subfigure}
	\caption{Ablation study comparing accuracy and sparsity across different choices of the hyperparameters $\lambda$ and $m$ for WideResNet28-10 on TinyImageNet.}
	\label{ablationstudyWRN}
\end{figure}

Although the sparsity and test accuracy values are, of course, not identical to those obtained on CIFAR-10, the sparsity levels achieved under the exact same hyperparameters remain in a similar range.
As expected, the test accuracies are not directly comparable, since the classification task on TinyImageNet with 200 classes is substantially more challenging than CIFAR-10 with only 10 classes.
Furthermore, we observe that small values of $\lambda\approx 0.005$ give best result for both the small scale and the large scale experiment, suggesting that this parameter is pretty scale-invariant. 
On the other hand, it appears like the large scale experiment yields better results for smaller values of $m$ than the small scale one which is potentially also due to the increased difficulty of the learning problem requiring more exact gradient information.
\end{document}